\def\eqref#1{equation~\ref{#1}}
\def\1{\bm{1}}
\DeclareMathAlphabet{\mathsfit}{\encodingdefault}{\sfdefault}{m}{sl}
\SetMathAlphabet{\mathsfit}{bold}{\encodingdefault}{\sfdefault}{bx}{n}
\def\mbb{\mathbb}
\def\mcal{\mathcal}
\newtheorem{mydef}{Definition}
\newtheorem{mythm}{Theorem}
\newtheorem{mylem}{Lemma}
\newtheorem{myprop}{Proposition}
\newtheorem{myassumption}{Assumption}
\title{Population-Guided Parallel Policy Search for Reinforcement Learning}
\author{Whiyoung Jung, Giseung Park, Youngchul Sung\thanks{Corresponding author}  \\
	School of Electrical Engineering\\
	Korea Advanced Institute of Science and Technology\\
	\texttt{\{wy.jung, gs.park, ycsung\}@kaist.ac.kr} \\
}
\begin{document}

\maketitle

\begin{abstract}
In this paper, a new population-guided parallel learning scheme is proposed to enhance the performance of off-policy reinforcement learning (RL). In the proposed scheme, multiple identical learners with their own value-functions and policies share a common experience replay buffer, and search a good policy in collaboration with the guidance of the best policy information. The key point is that the information of the best policy  is fused in a soft manner by constructing an augmented loss function for policy update to enlarge the overall search region by the multiple learners. The guidance by the previous best policy and the enlarged  range enable faster and better policy search. Monotone improvement of the expected cumulative return by the proposed scheme is proved theoretically. Working algorithms are constructed by applying the proposed scheme to the twin delayed deep deterministic (TD3) policy gradient algorithm. Numerical results show that the constructed algorithm outperforms most of the current state-of-the-art RL algorithms, and the gain is significant in the case of sparse reward environment.
\end{abstract}

\section{Introduction}
\label{sec:introduction}

RL is an active research field and has been applied successfully  to  games,  simulations, and actual environments. With the success of RL in relatively easy tasks, more challenging tasks such as sparse reward environments (\cite{oh2018self, zheng2018learning, burda2018exploration}) are emerging, and developing good RL algorithms for such challenging tasks is of great importance from both theoretical and practical perspectives.
In this paper, we consider parallel learning, which is an important line of RL research  to enhance the learning performance by having multiple learners for the same environment. Parallelism in learning has been investigated widely in distributed RL (\cite{nair2015massively, mnih2016asynchronous, horgan2018distributed, barth2018distributed, espeholt2018impala}), evolutionary algorithms (\cite{salimans2017evolution, choromanski2018structured, khadka2018evolution, pourchot2018cemrl}), concurrent RL (\cite{silver2013concurrent, guo2015concurrent, dimakopoulou18coordinated, dimakopoulou18scalable}) and population-based training (PBT) (\cite{jaderberg2017population, jaderberg2018human, conti2017improving}). In this paper, in order to enhance the learning performance, we apply parallelism to RL based on a population of policies, but the usage is different from the previous methods.

One of the advantages of using a population is the capability to evaluate policies in the population. Once all policies in the population are evaluated, we can use information of the best policy to enhance the performance.
One simple way to exploit the best policy information is that we reset the policy parameter of each learner with that of the best learner at the beginning of the next $M$ time steps; make each learner perform learning from this initial point for the next $M$ time steps; select the best learner again at the end of the next $M$ time steps; and repeat this procedure every $M$ time steps in a similar way that PBT  does (\cite{jaderberg2017population}). We will refer to this method as the resetting  method in this paper. However, this resetting method has the problem that the search area covered by all $N$ policies in the population collapses to one point at the time of parameter copying and thus the search area can be narrow around the previous best policy point.
To overcome such disadvantage, instead of resetting the policy parameter with the best policy parameter periodically, we propose using the best policy information in a soft manner. In the proposed scheme, the shared best policy information  is used  only to guide  other learners' policies for searching a better policy.
The chief periodically determines the best policy among  all learners and distributes the best policy parameter to all learners so that the learners search for better policies with the guidance of the previous best policy. The chief also enforces that the  $N$ policies are spread in the policy space with a given  distance from the previous best policy point so that the search area  by all $N$ learners maintains a wide area and  does not collapse into a narrow region.


The proposed Population-guided Parallel Policy Search (P3S) learning method can be applied to any off-policy RL algorithms and  implementation is easy. Furthermore, monotone improvement of the expected cumulative return by the P3S scheme  is theoretically proved.
We apply our P3S scheme to the TD3 algorithm, which is a state-of-the-art off-policy algorithm, as our base algorithm. 
Numerical result shows that the P3S-TD3 algorithm outperforms the baseline algorithms both in the speed of convergence and in the final steady-state performance.

\section{Background and Related Works}
\label{sec:background}

{\textbf{Distributed RL}} Distributed RL  is an efficient way of taking advantage of parallelism to achieve fast training for large complex tasks (\cite{nair2015massively}).
Most of the works in distributed RL assume a common structure composed of multiple actors interacting with multiple copies of the same environment and a central system which stores and optimizes the common Q-function parameter or the policy parameter  shared by all actors. The focus of distributed RL is to optimize the Q-function parameter or the policy parameter fast by generating more samples for the same wall clock time with multiple actors.
For this goal, researchers investigated various techniques for distributed RL, e.g., asynchronous update of parameters (\cite{mnih2016asynchronous, babaeizadeh2016reinforcement}), sharing an experience replay buffer (\cite{horgan2018distributed}), GPU-based  parallel computation (\cite{babaeizadeh2016reinforcement, clemente2017efficient}), GPU-based simulation (\cite{liang2018gpu}) and V-trace in case of on-policy algorithms (\cite{espeholt2018impala}).
Distributed RL yields  performance improvement in terms of the wall clock time but it does not consider the possible enhancement by interaction among a population of policies of all learners like in PBT or our P3S.
The proposed P3S  uses a similar structure to that in (\cite{nair2015massively, espeholt2018impala}): that is, P3S is composed of multiple learners and a chief. The difference is that each learner in P3S has its own Q or value function parameter and policy parameter, and optimizes the parameters in parallel to search in the policy space.

{\textbf {Population-Based Training}} Parallelism is also exploited in finding optimal parameters and hyper-parameters of training algorithms in PBT (\cite{jaderberg2017population, jaderberg2018human, conti2017improving}). PBT trains neural networks, using a population  with different parameters and hyper-parameters in parallel at multiple learners. During the training, in order to take advantage of the population, it evaluates the performance of networks with parameters and hyper-parameters in the population periodically. Then, PBT selects the best  hyper-parameters, distributes the best  hyper-parameters and the corresponding parameters to other learners, and continues the training of neural networks. Recently, PBT is applied to competitive multi-agent RL (\cite{jaderberg2018human}) and novelty search algorithms (\cite{conti2017improving}). The proposed P3S uses a population to search a better policy by exploiting  the best policy information similarly to PBT, but the way of using the best policy information is different. In P3S, the parameter of the best learner is  not copied but used in a soft manner to  guide  the population for better search in the policy space.

{\textbf {Guided Policy Search}} Our P3S method is also related to guided policy search  (\cite{levine2013guided, levine2016end, teh2017distral, ghosh2018divide}).
\cite{teh2017distral} proposed a guided policy search method for joint training of multiple tasks in which a common policy is used to guide local policies and the common policy is distilled from the local policies. Here, the local policies' parameters are updated to maximize the performance and minimize the KL divergence between the local policies and the common distilled policy.
The proposed P3S is related to guided policy search in the sense that multiple policies are guided by a common policy. However,  the difference is that the goal of P3S is not learning multiple tasks but learning optimal parameter for a common task as in PBT. Hence, the guiding policy is not distilled from multiple local policies but chosen as the best performing policy among multiple learners.

{\textbf{Exploiting Best Information}} Exploiting best information has been considered  in the previous works (\cite{white1992role, oh2018self, gangwani2018learning}). In particular, \cite{oh2018self, gangwani2018learning} exploited past good experiences to obtain a better policy, whereas P3S exploits the current good policy among multiple policies to obtain a better policy.

\section{Population-Guided Parallel Policy Search}
\label{sec:P3S}

\begin{wrapfigure}{r}{0.5\textwidth}
	\vspace{-20pt}
	\begin{center}
		\includegraphics[width=0.48\textwidth]{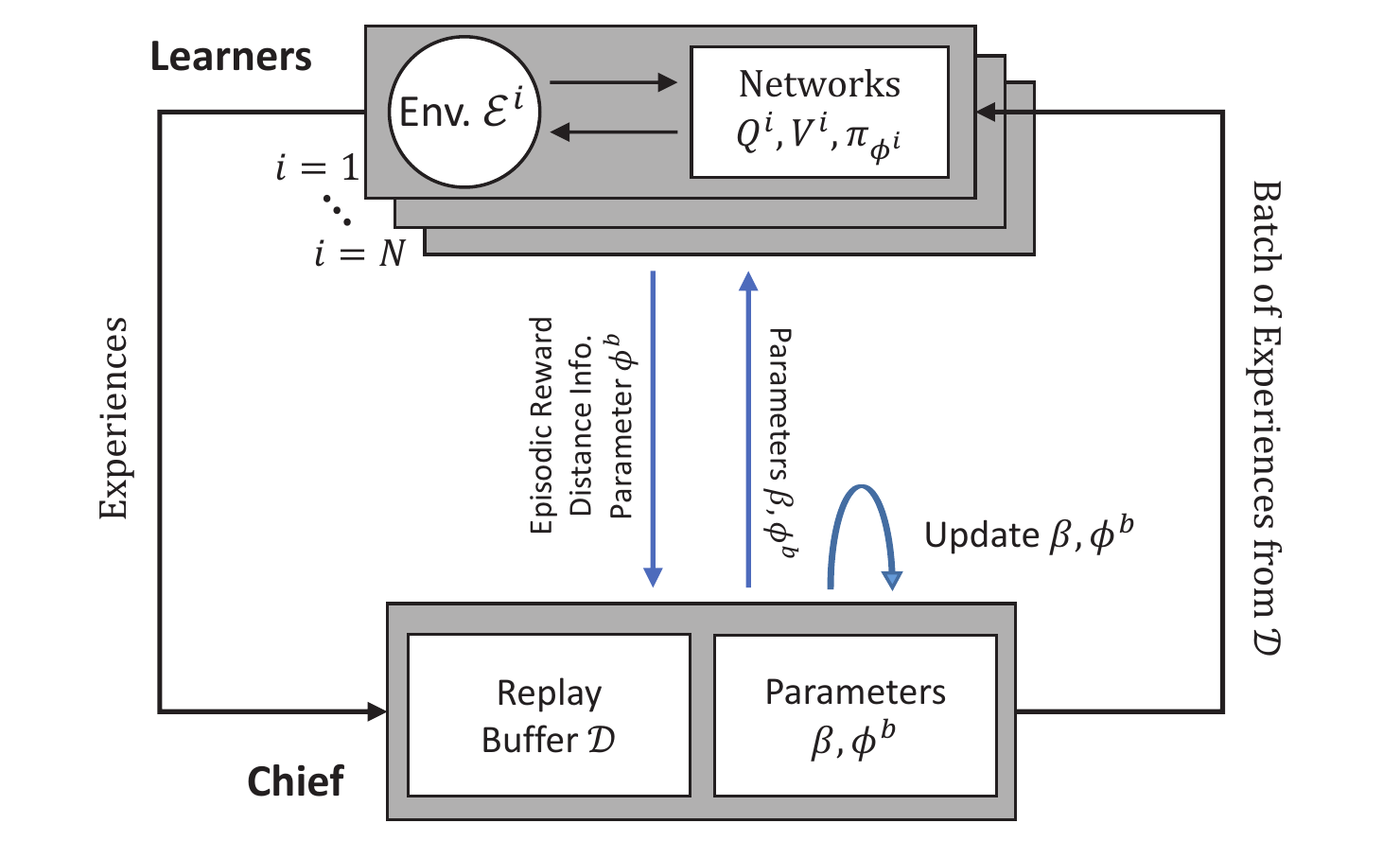}
	\end{center}
	\caption{The overall structure of P3S}
	\label{fig:diagram_P3S}
	\vspace{-10pt}
\end{wrapfigure}

The overall structure of  the proposed P3S scheme  is described in  Fig.
\ref{fig:diagram_P3S}. We have $N$ identical parallel learners with a shared common experience replay buffer $\mathcal{D}$, and all $N$ identical learners employ a common base algorithm which can be any off-policy RL algorithm.
The execution is in parallel. The $i$-th learner has its own environment $\mathcal{E}^i$, which is a copy of the common environment $\mathcal{E}$, and has its own value function (e.g., Q-function) parameter $\theta^i$ and policy parameter $\phi^i$.
The $i$-th learner  interacts with the environment copy $\mathcal{E}^i$  with additional interaction with the chief, as shown in  Fig.
\ref{fig:diagram_P3S}.
At each time step,  the $i$-th learner performs an  action $a_t^i$ to its environment copy $\mathcal{E}^i$ by using its own policy $\pi_{\phi^i}$, stores its experience $(s^i_t, a^i_t, r^i_t, s^i_{t+1})$ to the shared common replay buffer $\mathcal{D}$ for all $i=1,2,\cdots,N$.
Then,  each learner updates its value function parameter and policy parameter once by drawing a mini-batch of size $B$ from the shared common replay buffer $\mathcal{D}$
by minimizing its own value loss function and policy loss function, respectively.

Due to parallel update of parameters, the policies of all learners  compose  a population of $N$ different policies. In order to take advantage of this population, we exploit the policy information from the best learner periodically during the training like in PBT (\cite{jaderberg2017population}).  Suppose that the Q-function parameter  and policy parameter  of each learner are initialized and learning is performed as described above for $M$ time steps.  At the end of the $M$ time steps, we determine who is the best learner based on the average of the most recent $E_r$ episodic rewards for each learner. Let the index of the best learner be $b$. Then, the policy parameter information $\phi^b$ of the best learner can be used to enhance the learning of other learners for the next $M$ time steps.
Instead of copying $\phi^b$ to other learners like in PBT, we propose using the information  $\phi^b$ in a soft manner.
That is, during the next $M$ time steps,
while we set the loss function $\widetilde{L}(\theta^i)$ for the Q-function  to be the same as the loss $L(\theta^i)$ of the base algorithm, we set the loss function $\widetilde{L}(\phi^i)$ for the policy parameter $\phi^i$ of the $i$-th learner as the following {\em augmented} version:
\begin{equation}
\widetilde{L}(\phi^i) = L(\phi^i) + \1_{\{i \neq b\}} \beta \mathbb{E}_{s \sim \mathcal{D}} \left[ D(\pi_{\phi^i}, \pi_{\phi^b}) \right]
\label{eq:P3S_loss}
\end{equation}
where $L(\phi^i)$ is the policy loss function of the base algorithm, $\1_{\{ \cdot\}}$ denotes the indicator function, $\beta (>0)$ is a weighting factor, $D(\pi, \pi')$ be some distance measure between two policies $\pi$ and $\pi'$.


\subsection{Theoretical Guarantee of Monotone Improvement of Expected Cumulative Return}
\label{sec:theoretical_analysis}

In this section, we analyze the performance of the proposed soft-fusion approach theoretically and show the effectiveness of the proposed soft-fusion approach.  Consider the current update period and its previous update period. Let $\pi_{\phi^i}^{old}$ be the policy of the $i$-th learner at the end of the previous update period and let $\pi_{\phi^b}$ be the best policy among all policies $\pi_{\phi^i}^{old},i=1,\cdots,N$. Now, consider any learner $i$ who is not the best in the previous update period. Let the policy of learner $i$ in the current update period be denoted by $\pi_{\phi^i}$, and let the policy loss function of the base algorithm be denoted as $L(\pi_{\phi^i})$.
In order to analyze the performance, we consider  $L(\pi_{\phi^i})$ in the form of $L(\pi_{\phi^i}) = \mbb{E}_{s \sim \mcal{D}, a \sim \pi_{\phi^i}(\cdot | s)} \left[ - Q^{\pi_{\phi_i}^{old}}(s, a) \right]$.  The reason behind this choice is that most of actor-critic methods update the value (or Q-)function and the policy iteratively. That is,  for given $\pi_{\phi^i}^{old}$, the Q-function is first updated  to approximate $Q^{\pi_{\phi^i}^{old}}$. Then, with the approximation $Q^{\pi_{\phi^i}^{old}}$, the policy is updated to yield an updated policy $\pi_{\phi^i}^{new}$. This  procedure is repeated iteratively.  Such loss function is used in many RL algorithms such as SAC and TD3 (\cite{haarnoja2018soft,fujimoto2018addressing}).
For the distance measure $D(\pi, \pi')$ between two policies $\pi$ and $\pi'$, we consider the  KL divergence $\text{KL}(\pi || \pi')$ for analysis. Then, by eq. (\ref{eq:P3S_loss}) the augmented loss function for non-best learner $i$ at the current update period is expressed  as
\begin{align}
\widetilde{L}(\pi_{\phi^i}) &= \mbb{E}_{s \sim \mcal{D}, a \sim \pi_{\phi^i}(\cdot | s)} \left[ - Q^{\pi_{\phi^i}^{old}}(s, a) \right] + \beta \mbb{E}_{s \sim \mcal{D}} [\text{KL}(\pi_{\phi^i}(\cdot | s) || \pi_{\phi^b}(\cdot | s))] \label{eq:theoretical_augmented_loss1}\\
&= \mbb{E}_{s \sim \mcal{D}} \left[ \mbb{E}_{a \sim \pi_{\phi^i}(\cdot | s)} \left[ - Q^{\pi_{\phi^i}^{old}}(s, a) + \beta \log{\frac{\pi_{\phi^i}(a|s)}{\pi_{\phi^b}(a|s)}} \right] \right] \label{eq:theoretical_augmented_loss}
\end{align}
Let $\pi_{\phi^i}^{new}$ be a solution that minimizes the augmented loss function eq. (\ref{eq:theoretical_augmented_loss}). We assume the following conditions.
\begin{myassumption} \label{asmp:1}
	For all $s$,
	\begin{equation}
	\mbb{E}_{a \sim \pi_{\phi^b}(\cdot | s)} \left[ Q^{\pi_{\phi^i}^{old}}(s,a) \right] \geq \mbb{E}_{a \sim \pi_{\phi^i}^{old} (\cdot | s)} \left[ Q^{\pi_{\phi^i}^{old}}(s,a) \right] .\tag{A1}\label{asmp:eq1}
	\end{equation}
\end{myassumption}
\begin{myassumption} \label{asmp:2}
	For some $\rho, d > 0$,
	\begin{equation}
	\text{KL}\left( \pi_{\phi^i}^{new}(\cdot | s) || \pi_{\phi^b}(\cdot | s) \right) \geq \max\left\{ \rho \max_{s'} \text{KL}\left( \pi_{\phi^i}^{new}(\cdot|s') || \pi_{\phi^i}^{old}(\cdot|s') \right), d \right\}, ~~\forall s. \tag{A2}\label{asmp:eq2}
	\end{equation}
\end{myassumption}
Assumption  \ref{asmp:1} means that if we draw the first time step action $a$ from $\pi_{\phi^b}$ and the following actions  from $\pi_{\phi^i}^{old}$, then this yields better performance on  average than the case that we draw all actions including the first time step action from $\pi_{\phi^i}^{old}$. This makes sense because of the definition of $\pi_{\phi^b}$.  Assumption  \ref{asmp:2} is about the distance relationship among the policies to ensure a certain level of spreading of the policies for the proposed soft-fusion approach.
With the two assumptions above, we have the following theorem regarding the proposed soft-fusion parallel learning scheme:

\begin{mythm}\label{thm:1}
	Under Assumptions   \ref{asmp:1} and \ref{asmp:2}, the following inequality holds:
	\begin{equation}  \label{eq:theorem1inmain}
		Q^{\pi_{\phi^i}^{new}}(s, a) \stackrel{(a)}{\ge} Q^{\pi_{\phi^b}}(s, a) + \underbrace{\beta \mbb{E}_{s_{t+1}:s_{\infty} \sim \pi_{\phi^b}}\left[ \sum_{k=t+1}^\infty \gamma^{k-t} \Delta(s_{k}) \right]}_{\text{Improvement gap}} \stackrel{(b)}{\ge}  Q^{\pi_{\phi^b}}(s, a) ~~\forall (s,a), ~~\forall i \ne b.
	\end{equation}
	where
	\begin{equation}
	\Delta(s) =  \text{KL}\left( \pi_{\phi^i}^{new}(\cdot | s) || \pi_{\phi^b}(\cdot | s) \right) - \max\left\{ \rho \max_{s'} \text{KL}\left( \pi_{\phi^i}^{new} (\cdot | s') || \pi_{\phi^i}^{old} (\cdot | s') \right), d \right\}.
	\end{equation}
Here, inequality (a) requires Assumption 1 only and inequality (b) requires Assumption 2.
\end{mythm}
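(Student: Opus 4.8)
The plan is to reduce Theorem~\ref{thm:1} to a single pointwise-in-$s$ one-step inequality and then ``lift'' it to the infinite-horizon claim by iterating the Bellman equation of $\pi_{\phi^b}$ --- equivalently, by a performance-difference argument whose rollout is taken under $\pi_{\phi^b}$, which is exactly why $\mathbb{E}_{s_{t+1}:s_\infty\sim\pi_{\phi^b}}$ appears in the improvement gap. Inequality $(b)$ is the easy half: Assumption~\ref{asmp:2} states precisely that $\text{KL}(\pi_{\phi^i}^{new}(\cdot|s)\|\pi_{\phi^b}(\cdot|s))\ge\max\{\rho\max_{s'}\text{KL}(\pi_{\phi^i}^{new}(\cdot|s')\|\pi_{\phi^i}^{old}(\cdot|s')),d\}$ for every $s$, so $\Delta(s)\ge 0$ for all $s$; since $\beta>0$ and $\gamma^{k-t}>0$ the improvement gap is then a sum of nonnegative terms and $(b)$ follows. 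All the content is in $(a)$.

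First I would exploit that $\pi_{\phi^i}^{new}$ minimizes the augmented loss \eqref{eq:theoretical_augmented_loss}. In the regime where the policy is optimized freely, so the minimization decouples over states (consistent with the theorem being stated for all $(s,a)$; the unconstrained minimizer is the Gibbs-type policy $\pi_{\phi^i}^{new}(a|s)\propto\pi_{\phi^b}(a|s)\exp(Q^{\pi_{\phi^i}^{old}}(s,a)/\beta)$), comparing the loss at $\pi_{\phi^i}^{new}$ with its value at the feasible competitor $\pi_{\phi^b}$, for which the KL term vanishes, yields for every $s$
\[
\mathbb{E}_{a\sim\pi_{\phi^i}^{new}(\cdot|s)}\!\big[Q^{\pi_{\phi^i}^{old}}(s,a)\big]-\beta\,\text{KL}\big(\pi_{\phi^i}^{new}(\cdot|s)\,\|\,\pi_{\phi^b}(\cdot|s)\big)\;\ge\;\mathbb{E}_{a\sim\pi_{\phi^b}(\cdot|s)}\!\big[Q^{\pi_{\phi^i}^{old}}(s,a)\big].
\]
This is the only place $(a)$ uses Assumption~\ref{asmp:1}: it lower-bounds the right-hand side by $V^{\pi_{\phi^i}^{old}}(s)$, and, via the classical policy-improvement theorem applied to $\pi_{\phi^b}$ against $\pi_{\phi^i}^{old}$, it also gives $Q^{\pi_{\phi^b}}\ge Q^{\pi_{\phi^i}^{old}}$ pointwise.

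The target intermediate statement, call it $(\star)$, is the one-step estimate
\[
V^{\pi_{\phi^i}^{new}}(s)-\mathbb{E}_{a\sim\pi_{\phi^b}(\cdot|s)}\!\big[Q^{\pi_{\phi^i}^{new}}(s,a)\big]\;\ge\;\beta\,\Delta(s)\qquad\text{for all }s.
\]
Given $(\star)$, inequality $(a)$ follows mechanically: apply the performance-difference lemma with outer policy $\pi_{\phi^b}$ and reference policy $\pi_{\phi^i}^{new}$ to get $V^{\pi_{\phi^i}^{new}}(s)-V^{\pi_{\phi^b}}(s)=\mathbb{E}_{\tau\sim\pi_{\phi^b},\,s_0=s}\big[\sum_{j\ge0}\gamma^j\big(V^{\pi_{\phi^i}^{new}}(s_j)-\mathbb{E}_{a_j\sim\pi_{\phi^b}(\cdot|s_j)}Q^{\pi_{\phi^i}^{new}}(s_j,a_j)\big)\big]\ge\beta\,\mathbb{E}_{\tau\sim\pi_{\phi^b},\,s_0=s}\big[\sum_{j\ge0}\gamma^j\Delta(s_j)\big]$, then convert the $V$-gap to a $Q$-gap through $Q^\pi(s,a)=r(s,a)+\gamma\,\mathbb{E}_{s'}[V^\pi(s')]$ and reindex.

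The hard part is $(\star)$. The optimality inequality above controls the one-step advantage of $\pi_{\phi^i}^{new}$ over $\pi_{\phi^b}$ only with respect to the \emph{old} critic $Q^{\pi_{\phi^i}^{old}}$, whereas $(\star)$ needs it with respect to $Q^{\pi_{\phi^i}^{new}}$; closing this gap amounts to lower-bounding the discrepancy $\big[\mathbb{E}_{a\sim\pi_{\phi^i}^{new}(\cdot|s)}-\mathbb{E}_{a\sim\pi_{\phi^b}(\cdot|s)}\big]\big[Q^{\pi_{\phi^i}^{new}}(s,a)-Q^{\pi_{\phi^i}^{old}}(s,a)\big]$ by $-\beta\max\{\rho\max_{s'}\text{KL}(\pi_{\phi^i}^{new}(\cdot|s')\|\pi_{\phi^i}^{old}(\cdot|s')),d\}$ --- i.e.\ a TRPO/CPI-type control of the surrogate-objective error in terms of the policy-update step size $\max_{s'}\text{KL}(\pi_{\phi^i}^{new}(\cdot|s')\|\pi_{\phi^i}^{old}(\cdot|s'))$, with the attendant constants absorbed into $\rho$ and the floor $d$. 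I expect this surrogate-error bound --- and verifying that the $\max$-structure of $\Delta(s)$ is precisely what the bootstrapped series needs in order to telescope --- to be the main obstacle. Everything else ($(\star)\Rightarrow(a)$ by the backward performance-difference lemma, and $(a)\Rightarrow(b)$ by $\Delta\ge0$) is routine, modulo a mild boundedness assumption on rewards and KL divergences so that the infinite sum converges.
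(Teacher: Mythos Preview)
Your plan is correct and matches the paper's architecture: your one-step estimate $(\star)$ is exactly the paper's Proposition~2, the TRPO/CPI-type surrogate-error bound you anticipate is the paper's Lemma~5 (obtained from the Schulman et al.\ coupling bound $|Q^{\pi_{\phi^i}^{new}}-Q^{\pi_{\phi^i}^{old}}|\le \tfrac{2\epsilon\gamma}{1-\gamma}\max\{C\alpha^2,1/C\}$ together with Pinsker, with the constants then absorbed into $\rho,d$ just as you guessed), and the lifting of $(\star)$ to the full claim via a $\pi_{\phi^b}$-rollout is the recursive Bellman unrolling the paper performs.

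One small correction on the role of Assumption~\ref{asmp:1}: the consequence $Q^{\pi_{\phi^b}}\ge Q^{\pi_{\phi^i}^{old}}$ that you derive is never used. Instead the paper runs policy improvement for $\pi_{\phi^i}^{new}$, not for $\pi_{\phi^b}$: from Assumption~\ref{asmp:1} plus optimality of $\pi_{\phi^i}^{new}$ (and $\text{KL}\ge 0$) one first gets $\mathbb{E}_{a\sim\pi_{\phi^i}^{new}}[Q^{\pi_{\phi^i}^{old}}(s,a)]\ge V^{\pi_{\phi^i}^{old}}(s)$, hence $Q^{\pi_{\phi^i}^{new}}\ge Q^{\pi_{\phi^i}^{old}}$ pointwise. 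This is then applied to the discrepancy you isolated, making the $\mathbb{E}_{a\sim\pi_{\phi^i}^{new}}$-half nonnegative so that only the $\mathbb{E}_{a\sim\pi_{\phi^b}}$-half needs the TRPO bound---without it you would incur a factor of two, which of course could also be absorbed into $\rho,d$.
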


\begin{proof}
	See Appendix A.
\end{proof}
 Theorem \ref{thm:1} states that the new solution $\pi_{\phi^i}^{new}$ for the current update period with the augmented loss function
yields better performance (in the expected reward sense)  than the best policy $\pi_{\phi^b}$ of the previous update period for any non-best learner $i$ of the previous update period.
Hence, the proposed parallel learning scheme yields monotone improvement of expected cumulative return.

\subsection{Implementation}
\label{sec:implementation}

The proposed P3S method can be applied to any off-policy base RL algorithms  whether the base RL algorithms have discrete or continuous  actions.  For implementation, we assume that the best policy update period consists of $M$ time steps.
We determine the best learner at the end of each update period  based on the average of the most recent $E_r$ episodic rewards of each learner.
The key point in implementation is adaptation of $\beta$ so that the improvement gap $\beta \mbb{E}_{s_{t+1}:s_{\infty} \sim \pi^b}\left[ \sum_{k=t+1}^\infty \gamma^{k-t} \Delta(s_{k}) \right]$ in (\ref{eq:theorem1inmain}) becomes non-negative and is maximized for given $\rho$ and $d$.   The gradient of the improvement gap with respect to $\beta$ is given by  $\bar{\Delta}:=\mbb{E}_{s_{t+1}:s_{\infty} \sim \pi^b}\left[ \sum_{k=t+1}^\infty \gamma^{k-t} \Delta(s_{k}) \right]$, and $\bar{\Delta}$ is  the average (with forgetting) of $\Delta(s_k)$ by using samples from $\pi^b$.
Hence,  if $\bar{\Delta} > 0$, i.e., $\text{KL}\left( \pi_{\phi^i}^{new}(\cdot | s) || \pi_{\phi^b}(\cdot | s) \right) >  \max\left\{ \rho \max_{s'} \text{KL}\left( \pi_{\phi^i}^{new}(\cdot | s') || \pi_{\phi^i}^{old} (\cdot | s') \right), d \right\}$  on average, then $\beta$ should be increased to maximize the performance gain. Otherwise,  $\beta$ should be decreased.
Therefore, we adopt the following adaptation rule for $\beta$ which is common for all non-best learners:
\begin{equation}
	\beta = \left\{ \begin{array}{ll}
		\beta \leftarrow 2\beta  &\text{ if } ~~\widehat{D}_{spread} > \max\{ \rho  \widehat{D}_{change}, d_{min} \}  \times 1.5 \\
		\beta \leftarrow \beta / 2 &\text{ if } ~~\widehat{D}_{spread} < \max\{ \rho  \widehat{D}_{change}, d_{min} \} / 1.5
	\end{array} \right. .
	\label{eq:adapting_beta}
\end{equation}
Here, $\widehat{D}_{spread}= \frac{1}{N-1}\sum_{i \in I^{-b}} \mathbb{E}_{s \sim \mathcal{D}} \left[ D(\pi_{\phi^i}^{new}, \pi_{\phi^b}) \right]$ is the estimated distance between $\pi_{\phi^i}^{new}$ and $\pi_{\phi^b}$, and $\widehat{D}_{change}= \frac{1}{N-1}\sum_{i \in I^{-b}} \mathbb{E}_{s \sim \mathcal{D}} \left[ D(\pi_{\phi^i}^{new}, \pi_{\phi^i}^{old}) \right]$ is the estimated distance between $\pi_{\phi^i}^{new}$ and $\pi_{\phi^i}^{old}$ averaged over all $N-1$ non-best learners, where $d_{min}$ and $\rho$ are  predetermined hyper-parameters. 
$\widehat{D}_{spread}$ and $\max\{ \rho  \widehat{D}_{change}, d_{min} \}$ are our practical implementations of the left-hand side (LHS) and the right-hand side (RHS) of eq. (\ref{asmp:eq2}), respectively. This adaptation method is similar to that used in PPO (\cite{schulman2017proximal}). 


\begin{wrapfigure}{r}{0.3\textwidth}
	\vspace{-20pt}
	\begin{center}
		\psfragfig[scale=0.25]{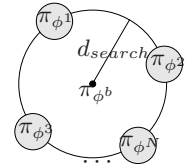}{
			\psfrag{dot}[l]{\normalsize $\cdots$} %
			\psfrag{a}{\small (a)} %
			\psfrag{b}{\small (b)} %
			\psfrag{p1}[c]{\footnotesize $\pi_{\phi^b}$} %
			\psfrag{p2}[c]{\footnotesize $\pi_{\phi^1}$} %
			\psfrag{p3}[c]{\footnotesize $\pi_{\phi^2}$} %
			\psfrag{p4}[c]{\footnotesize $\pi_{\phi^3}$} %
			\psfrag{p5}[c]{\footnotesize $\pi_{\phi^N}$} %
			\psfrag{dr}[c]{\footnotesize $d_{search}$} %
		}
	\end{center}
	\caption{The conceptual search coverage in the policy space by parallel learners}
	\label{fig:search_comp}
	\vspace{-10pt}
\end{wrapfigure}

The update (\ref{eq:adapting_beta}) of $\beta$ is done every $M$ time steps and the updated $\beta$ is used for the next $M$ time steps. As time steps elapse, $\beta$ is settled down so that  $\widehat{D}_{spread}$ is around $d_{search} = \max\{ \rho  \widehat{D}_{change}, d_{min} \}$  and this implements Assumption  \ref{asmp:2} with equality.   Hence, the proposed P3S scheme searches a spread area with rough radius $d_{search}$ around the best policy in the policy space,  as illustrated in
Fig. \ref{fig:search_comp}.
The search radius $d_{search}$ is determined proportionally  to $\widehat{D}_{change}$  that represents the speed of change in each learner's policy. In the case of being stuck in local optima, the change $\widehat{D}_{change}$ can be small, making the search area narrow.
Hence, we set a minimum search radius $d_{min}$ to encourage escaping out of local optima.

We applied P3S to TD3 as the base algorithm. The constructed algorithm is named  P3S-TD3. The details of TD3 is explained in Appendix G. We used the mean square difference given by $D(\pi(s), \pi'(s)) = \frac{1}{2} \left\Vert \pi(s) - \pi'(s) \right\Vert_2^2$ as the distance measure between two policies for P3S-TD3.
Note that if we consider two deterministic policies as two stochastic policies with same standard deviation, the KL divergence between the two stochastic policies is the same as the mean square difference.
For initial exploration P3S-TD3 uses a uniform random policy and does not update all policies over the first $T_{initial}$ time steps. The pseudocode of the P3S-TD3 is given in Appendix H. The implementation code for P3S-TD3 is available at \url{https://github.com/wyjung0625/p3s}.

\section{Experiments}
\label{sec:experiments}

\subsection{Comparison to baselines}

In this section, we provide  numerical results on performance comparison between  the proposed P3S-TD3 algorithm and current state-of-the-art on-policy and off-policy baseline algorithms on several MuJoCo environments (\cite{todorov2012mujoco}). The baseline algorithms are Proximal Policy Optimization (PPO) (\cite{schulman2017proximal}), Actor Critic using Kronecker-Factored Trust Region (ACKTR) (\cite{wu2017scalable}), Soft Q-learning (SQL) (\cite{haarnoja2017reinforcement}), (clipped double Q) Soft Actor-Critic (SAC) (\cite{haarnoja2018soft}), and TD3 (\cite{fujimoto2018addressing}).

{\bfseries Hyper-parameter setting} All hyper-parameters we used for evaluation are the same as those in the original papers (\cite{schulman2017proximal, wu2017scalable, haarnoja2017reinforcement, haarnoja2018soft, fujimoto2018addressing}). Here, we provide the hyper-parameters of the P3S-TD3 algorithm only, while details of the hyper-parameters for TD3 are provided in Appendix I. On top of the hyper-parameters for the base algorithm TD3, we used $N=4$ learners for P3S-TD3. To update the best policy and $\beta$, the period $M=250$ is used. The number of recent episodes $E_r = 10$ was used to determine  the best learner $b$. For the search range, we used the parameter $\rho = 2$, and tuned $d_{min}$ among $d_{min} = \{0.02, 0.05\}$ for all environments.  Details on $d_{min}$ for each environment is shown in Appendix I. The time steps  for initial exploration $T_{initial}$ is set as  250 for Hopper-v1 and Walker2d-v1 and as 2500 for HalfCheetah-v1 and Ant-v1. 

{\bfseries Evaluation method} Fig. \ref{fig:performance} shows the learning curves over one million time steps for several MuJoCo tasks: Hopper-v1, Walker2d-v1, HalfCheetah-v1, and Ant-v1.  In order to have sample-wise fair comparison among the considered algorithms, the time steps in the $x$-axis in Fig. \ref{fig:performance}  for P3S-TD3 is the sum of time steps of all $N$ users. For example, in the case that $N=4$ and each learner performs 100 time steps in P3S-TD3, the corresponding $x$-axis value is 400 time steps. Since each learner performs parameter update once with one interaction with environment per each time step in P3S-TD3, the total number of parameter updates  at the same $x$-axis value in Fig. 3 is the same for all algorithms including P3S-TD3, and the total number of interactions with environment at the same $x$-axis value in Fig. 3 is also the same for all algorithms including P3S-TD3.
Here, the performance is obtained through the evaluation method which is similar to those in \cite{haarnoja2018soft, fujimoto2018addressing}.
Evaluation of the  policies is conducted every $R_{eval} = 4000$ time steps for all algorithms.
At each evaluation instant, the agent (or learner) fixes its policy as the one at the evaluation instant, and interacts with the same environment separate for the evaluation purpose with the fixed policy to obtain 10 episodic rewards. The average of these 10 episodic rewards is the performance at the evaluation instant.
In the case of P3S-TD3 and other parallel learning schemes, each of the $N$ learners fixes its policy as the one at the evaluation instant, and interacts with the environment with the fixed policy to obtain 10 episodic rewards.  First, the 10 episodic rewards are averaged for each learner and then the maximum of the 10-episode-average rewards of the $N$ learners is taken as the performance at that evaluation instant.
We performed this operation for five different random seeds, and the mean and variance of the learning curve are obtained from these five simulations.  The policies used for evaluation are stochastic for PPO and ACKTR, and deterministic for the others.

\begin{figure*}[t]
	\begin{center}
		\begin{subfigure}[b]{0.245\textwidth}
			\includegraphics[width=\textwidth]{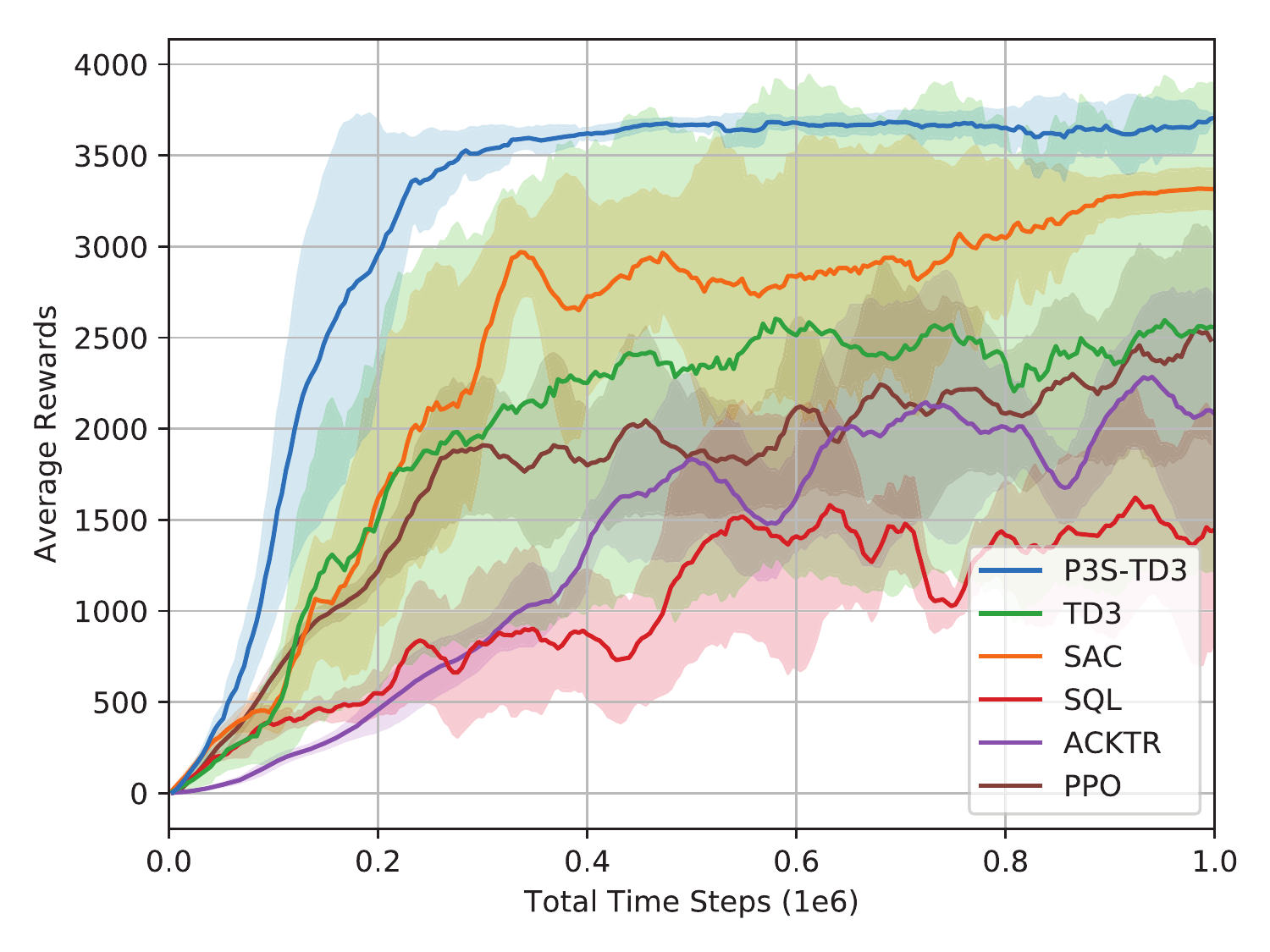}
			\caption{Hopper-v1}
			\label{fig:TD3_P3S_TD3_Hopper}
		\end{subfigure}
		\begin{subfigure}[b]{0.245\textwidth}
			\includegraphics[width=\textwidth]{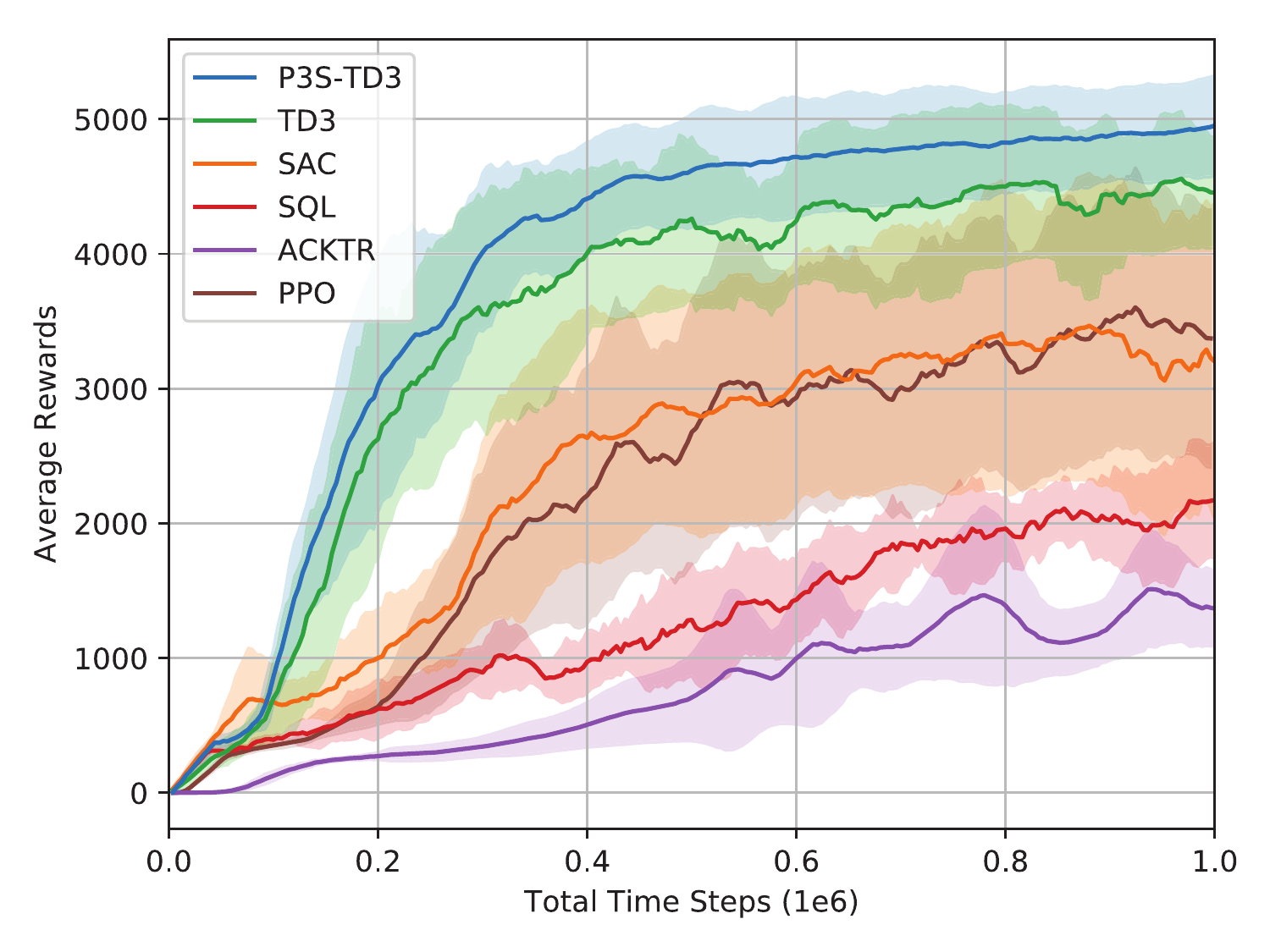}
			\caption{Walker2d-v1}
			\label{fig:TD3_P3S_TD3_Walker}
		\end{subfigure}
		\begin{subfigure}[b]{0.245\textwidth}
			\includegraphics[width=\textwidth]{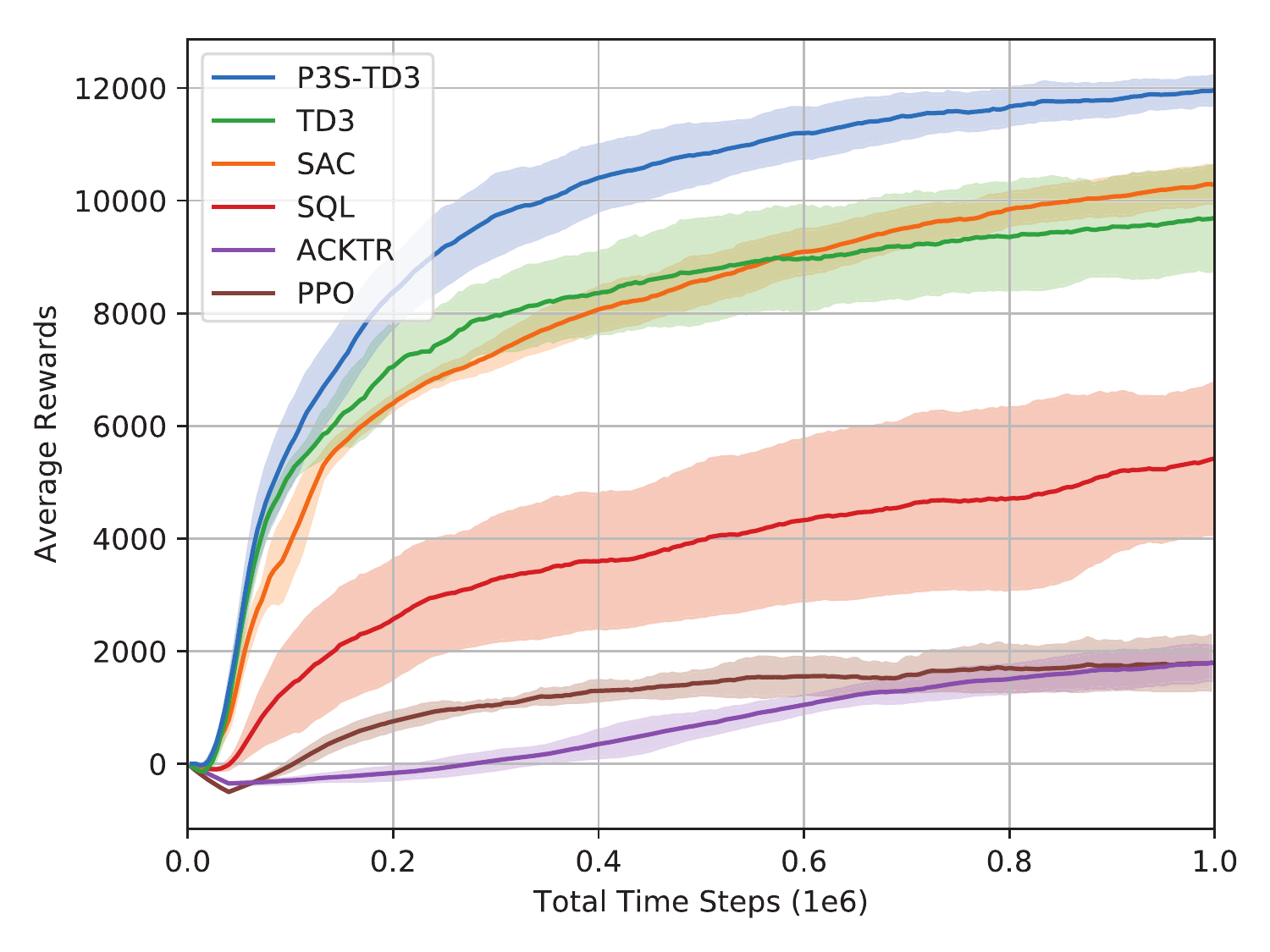}
			\caption{HalfCheetah-v1}
			\label{fig:TD3_P3S_TD3_Halfcheetah}
		\end{subfigure}
		\begin{subfigure}[b]{0.245\textwidth}
			\includegraphics[width=\textwidth]{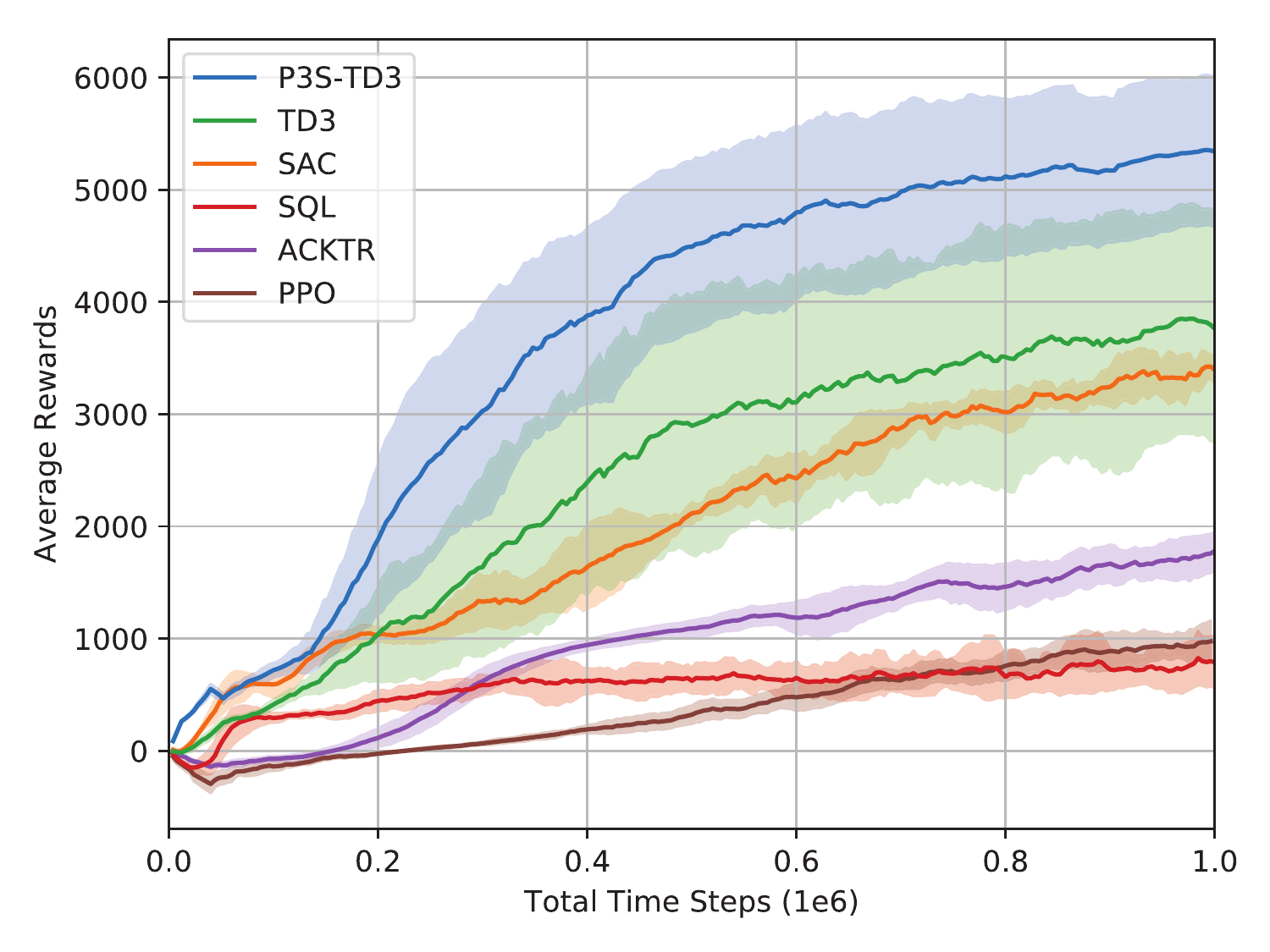}
			\caption{Ant-v1}
			\label{fig:TD3_P3S_TD3_Ant}
		\end{subfigure}
	\end{center}
	\caption{Performance for PPO (red), ACKTR (purple), SQL (brown), (clipped double Q) SAC (orange), TD3 (green), and P3S-TD3 (proposed method, blue) on MuJoCo tasks.}
	\label{fig:performance}
\end{figure*}

{\bfseries Performance on MuJoCo environments} In Fig. \ref{fig:performance},  it is observed that all  baseline algorithms is similar to that in the original papers (\cite{schulman2017proximal, haarnoja2018soft, fujimoto2018addressing}).  With this verification, we proceed to compare P3S-TD3 with the baseline algorithms. It is seen that the P3S-TD3 algorithm outperforms the state-of-the-art RL algorithms in terms of both the speed of convergence with respect to time steps and the final steady-state performance (except in Walker2d-v1, the initial convergence is a bit slower than TD3.)
Especially,  in the cases of Hopper-v1 and Ant-v1, TD3 has large variance and this implies that the performance of TD3 is quite dependent on the initialization and it is not easy for TD3 to escape out of bad local minima resulting from bad initialization in certain environments. However, it is seen that P3S-TD3 yields much smaller variance than TD3.  This implies that the wide area search by P3S in the policy space helps the learners escape out of bad local optima.

\subsection{Comparison with other parallel learning schemes and ablation study}

In the previous subsection, we observed that  P3S enhances the performance and reduces dependence on initialization as compared to the single learner case with the same complexity. In fact, this should be accomplished by any properly-designed parallel learning scheme.
Now, in order to demonstrate the true advantage of P3S,  we  compare P3S with other parallel learning schemes. P3S has several components to improve the performance based on parallelism: 1) sharing experiences from multiple policies, 2) using the best policy information, and 3) soft fusion of the best policy information  for wide search area.
We investigated the impact of each component on the performance improvement. For comparison we considered the following parallel policy search methods gradually incorporating more techniques:
\begin{enumerate}
	\item {\bfseries Original Algorithm} The original algorithm (TD3) with one learner
	\item {\bfseries Distributed RL (DRL)} $N$ actors obtain samples from $N$ environment copies. The common policy and the experience replay buffer are shared by all $N$ actors.
	\item {\bfseries Experience-Sharing-Only (ESO)} $N$ learners interact with $N$ environment copies and update their own policies using experiences drawn from the shared experience replay buffer. 
	\item {\bfseries Resetting (Re)} At every $M'$ time steps, the best policy is determined and all policies are initialized as the best policy, i.e., the best learner's policy parameter is copied to all other learners. The rest of the procedure is the same as experience-sharing-only algorithm. 
	\item {\bfseries P3S}  At every $M$ time steps, the best policy information is determined and this policy is used in a soft manner based on the augmented loss function.
\end{enumerate}

Note that the resetting method also exploits the best policy information from $N$ learners. The main difference between P3S and the resetting method is the way the best learner's policy parameter is used. The resetting method initializes all policies with the best policy parameter  every $M'$ time steps like in PBT (\cite{jaderberg2017population}), whereas P3S algorithm uses the best learner's policy parameter information determined every $M$ time steps to construct an augmented loss function. For fair comparison, $M$ and $M'$ are determined independently and optimally for P3S and Resetting, respectively, since the optimal period can be different for the two methods.
We tuned $M'$ among $\{ 2000, 5000, 10000\}$ (MuJoCo environments) and $\{ 10000, 20000, 50000 \}$ (Delayed MuJoCo environments) for Re-TD3, whereas $M=250$ was used for P3S-TD3. The specific parameters used  for Re-TD3 are in Appendix I.
Since all $N$ policies collapse to one point in the resetting method at the beginning of each period, we expect that a larger period is required for resetting to have sufficiently spread policies at the end of each best policy selection period.
We compared the performance of the aforementioned parallel learning methods combined with TD3 on two classes of tasks; MuJoCo environments, and Delayed sparse reward MuJoCo environments.

\begin{figure*}[t]
	\begin{center}
		\begin{subfigure}[b]{0.245\textwidth}
			\includegraphics[width=\textwidth]{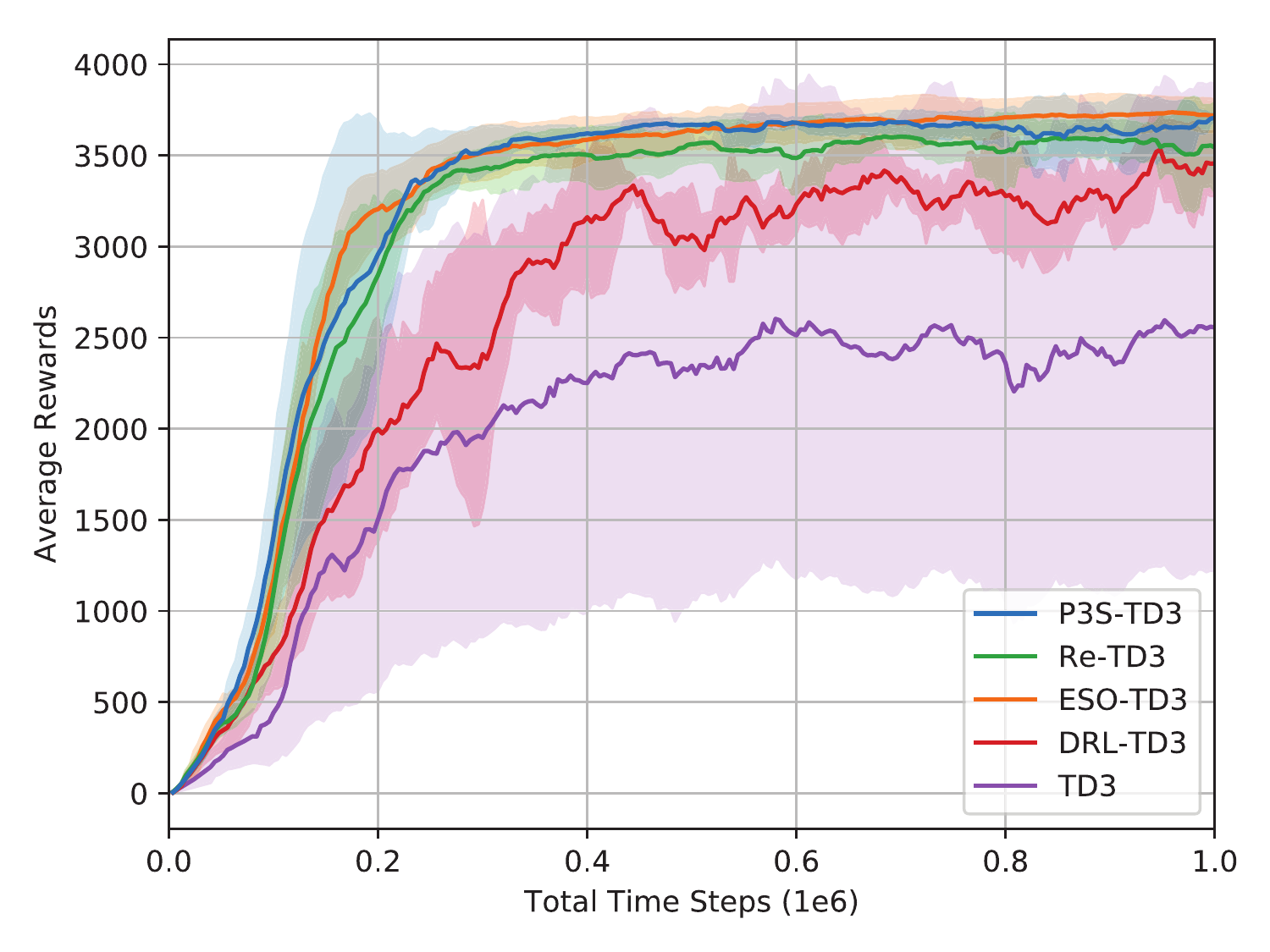}
			\caption{Hopper-v1}
			\label{fig:different_parallel_hopper}
		\end{subfigure}
		\begin{subfigure}[b]{0.245\textwidth}
			\includegraphics[width=\textwidth]{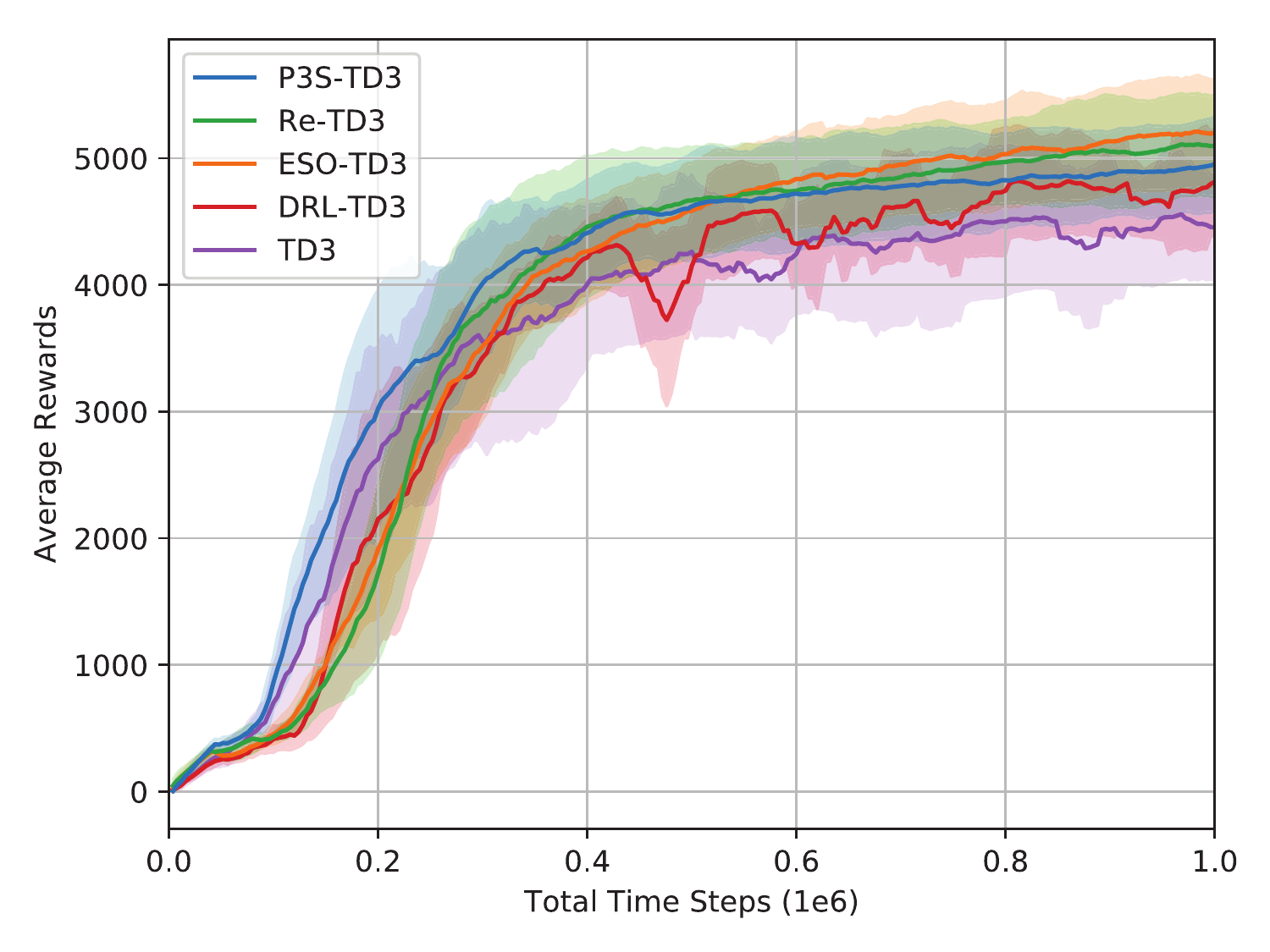}
			\caption{Walker2d-v1}
			\label{fig:different_parallel_walker}
		\end{subfigure}
		\begin{subfigure}[b]{0.245\textwidth}
			\includegraphics[width=\textwidth]{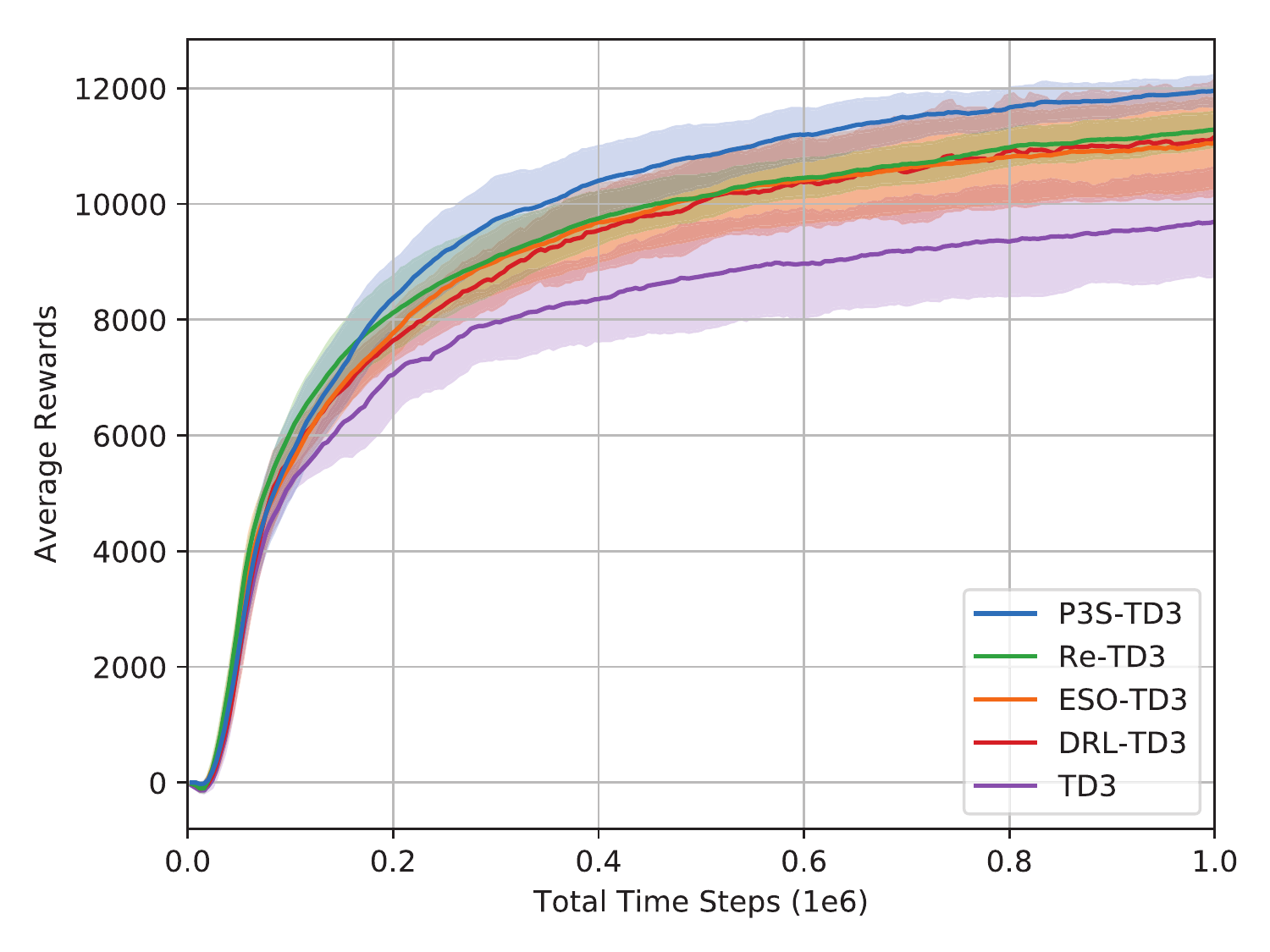}
			\caption{HalfCheetah-v1}
			\label{fig:different_parallel_halfcheetah}
		\end{subfigure}
		\begin{subfigure}[b]{0.245\textwidth}
			\includegraphics[width=\textwidth]{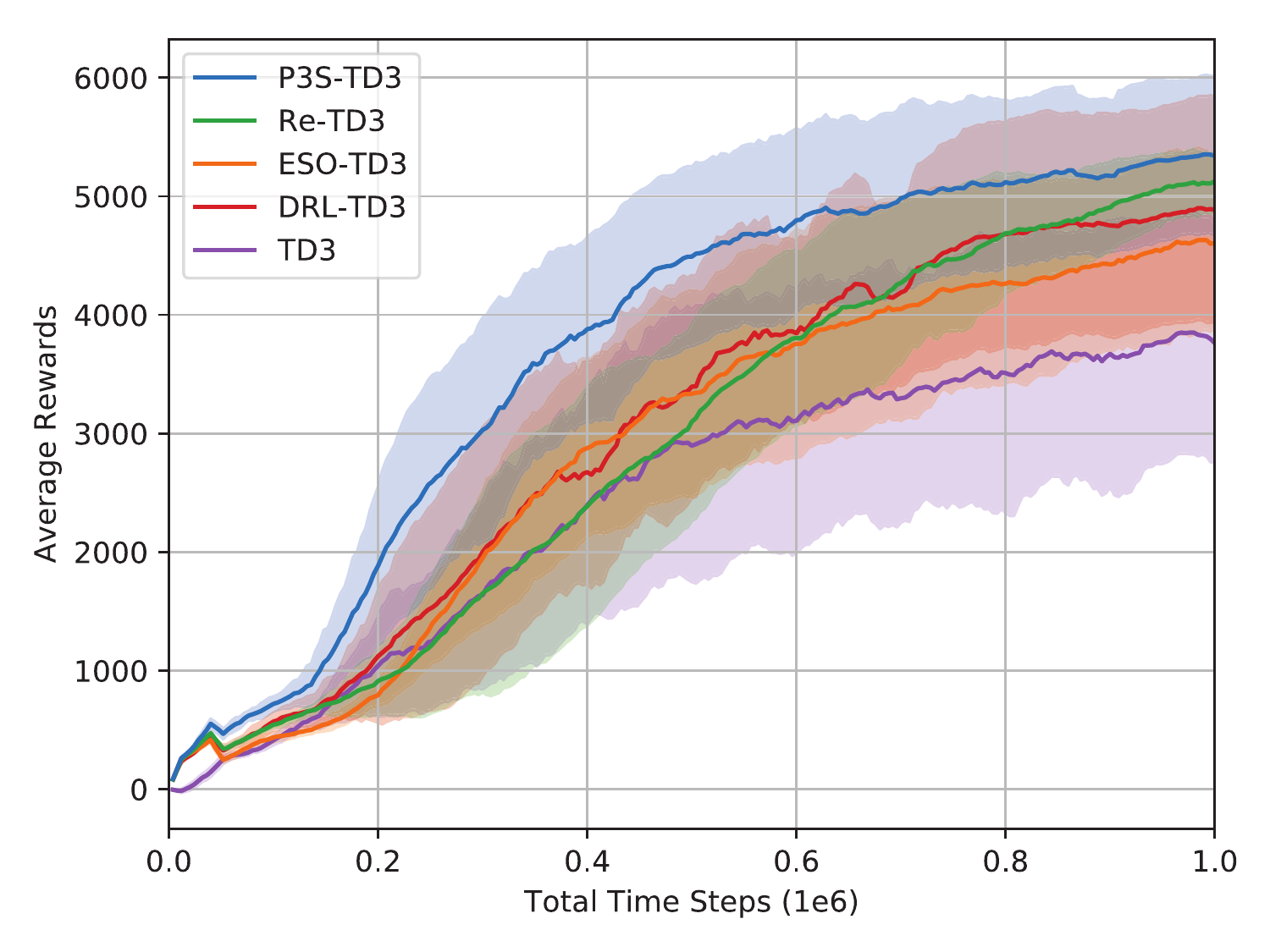}
			\caption{Ant-v1}
			\label{fig:different_parallel_ant}
		\end{subfigure}
		\begin{subfigure}[b]{0.245\textwidth}
			\includegraphics[width=\textwidth]{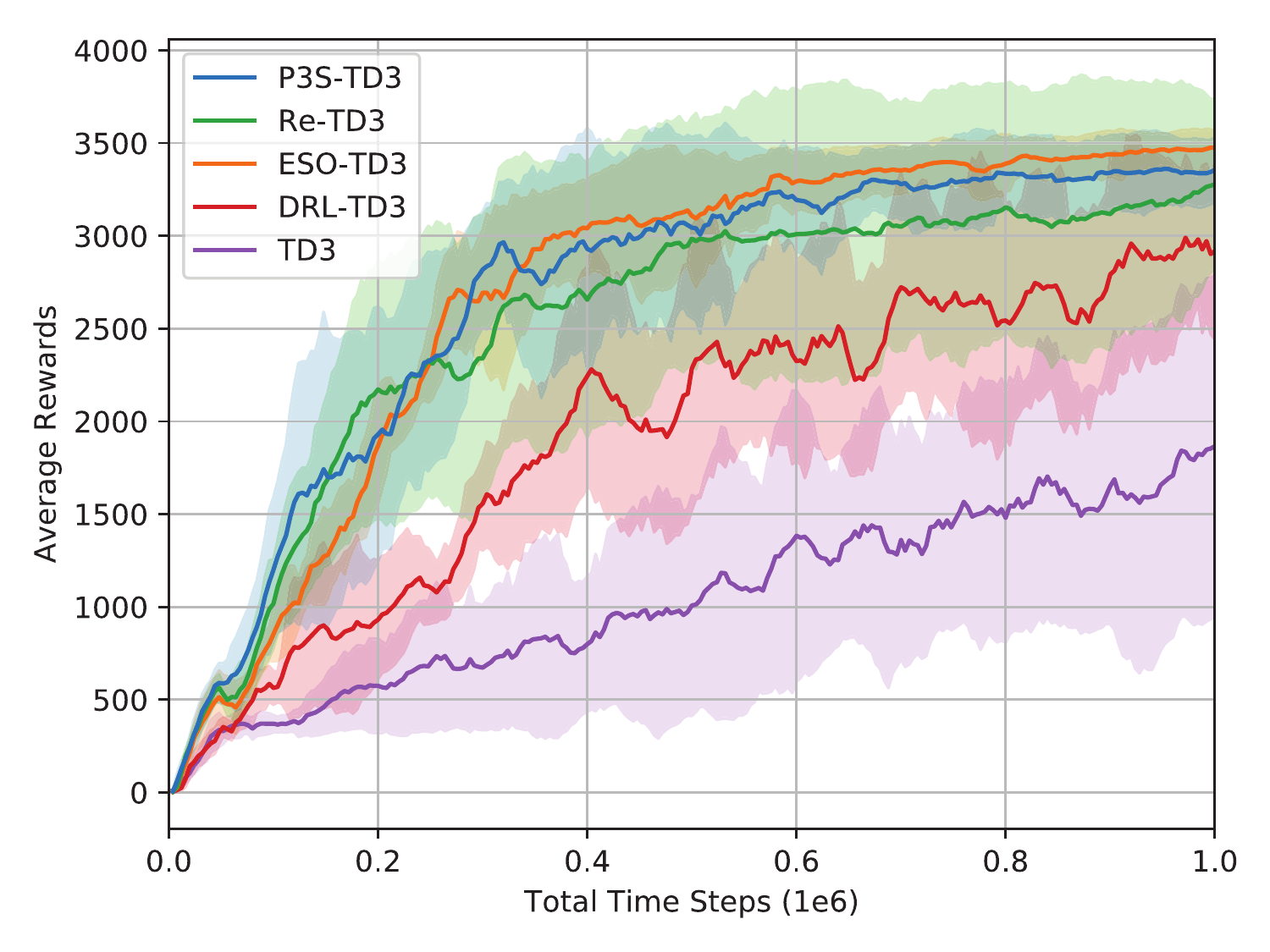}
			\caption{Del. Hopper-v1}
			\label{fig:different_parallel_delayed_hopper}
		\end{subfigure}
		\begin{subfigure}[b]{0.245\textwidth}
			\includegraphics[width=\textwidth]{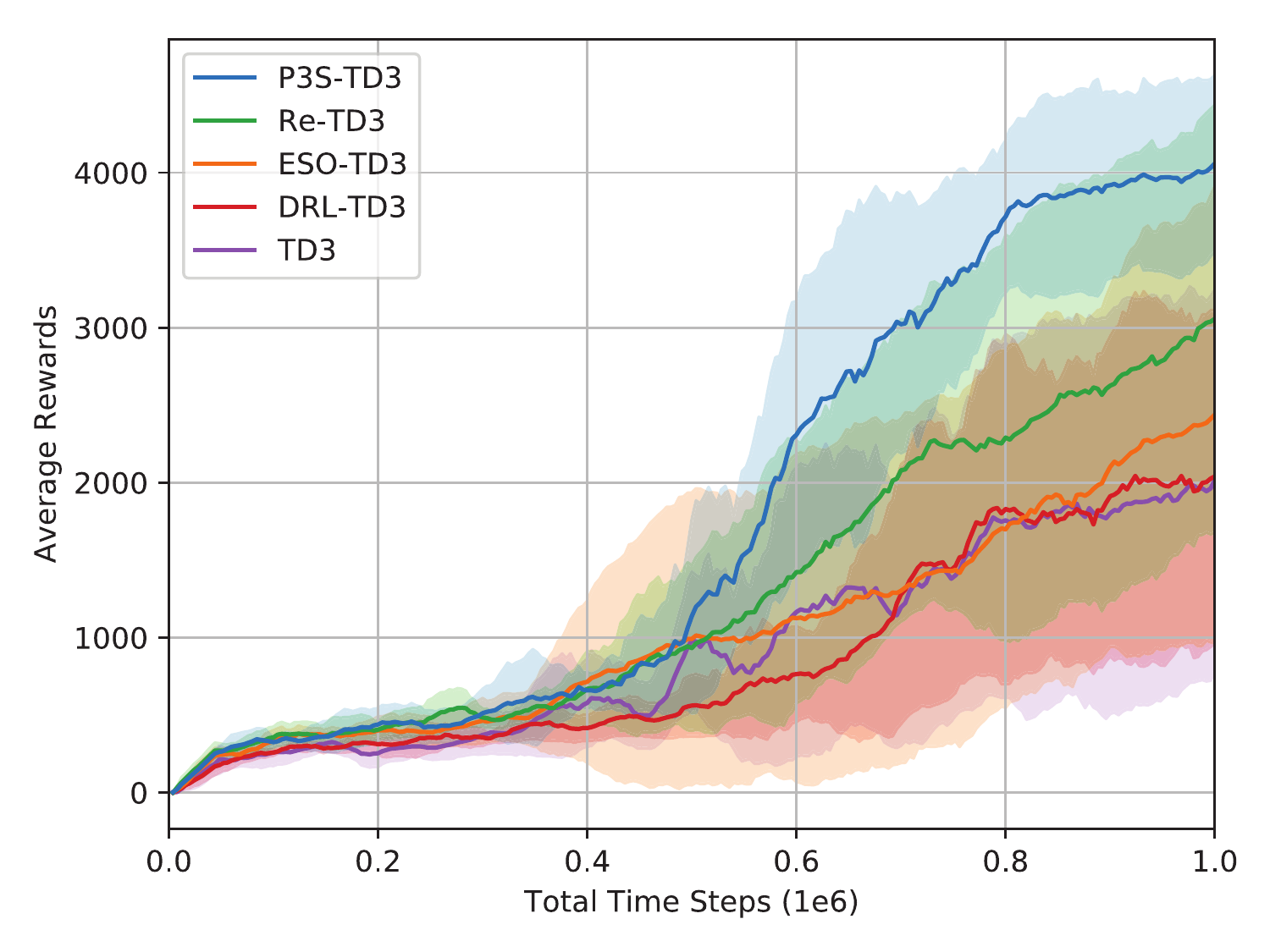}
			\caption{Del. Walker2d-v1}
			\label{fig:different_parallel_delayed_walker}
		\end{subfigure}
		\begin{subfigure}[b]{0.245\textwidth}
			\includegraphics[width=\textwidth]{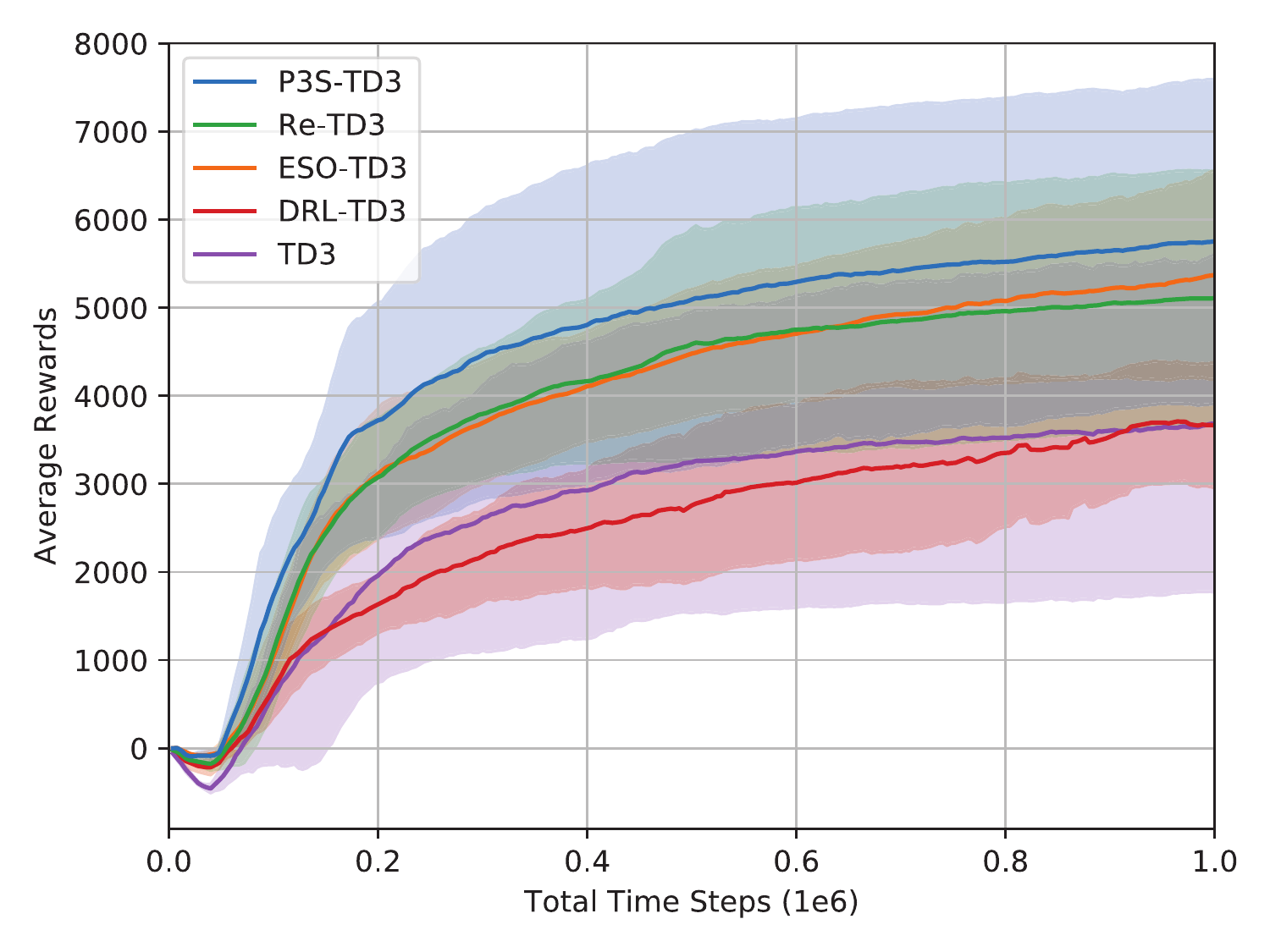}
			\caption{Del. HalfCheetah-v1}
			\label{fig:different_parallel_delayed_halfcheetah}
		\end{subfigure}
		\begin{subfigure}[b]{0.245\textwidth}
			\includegraphics[width=\textwidth]{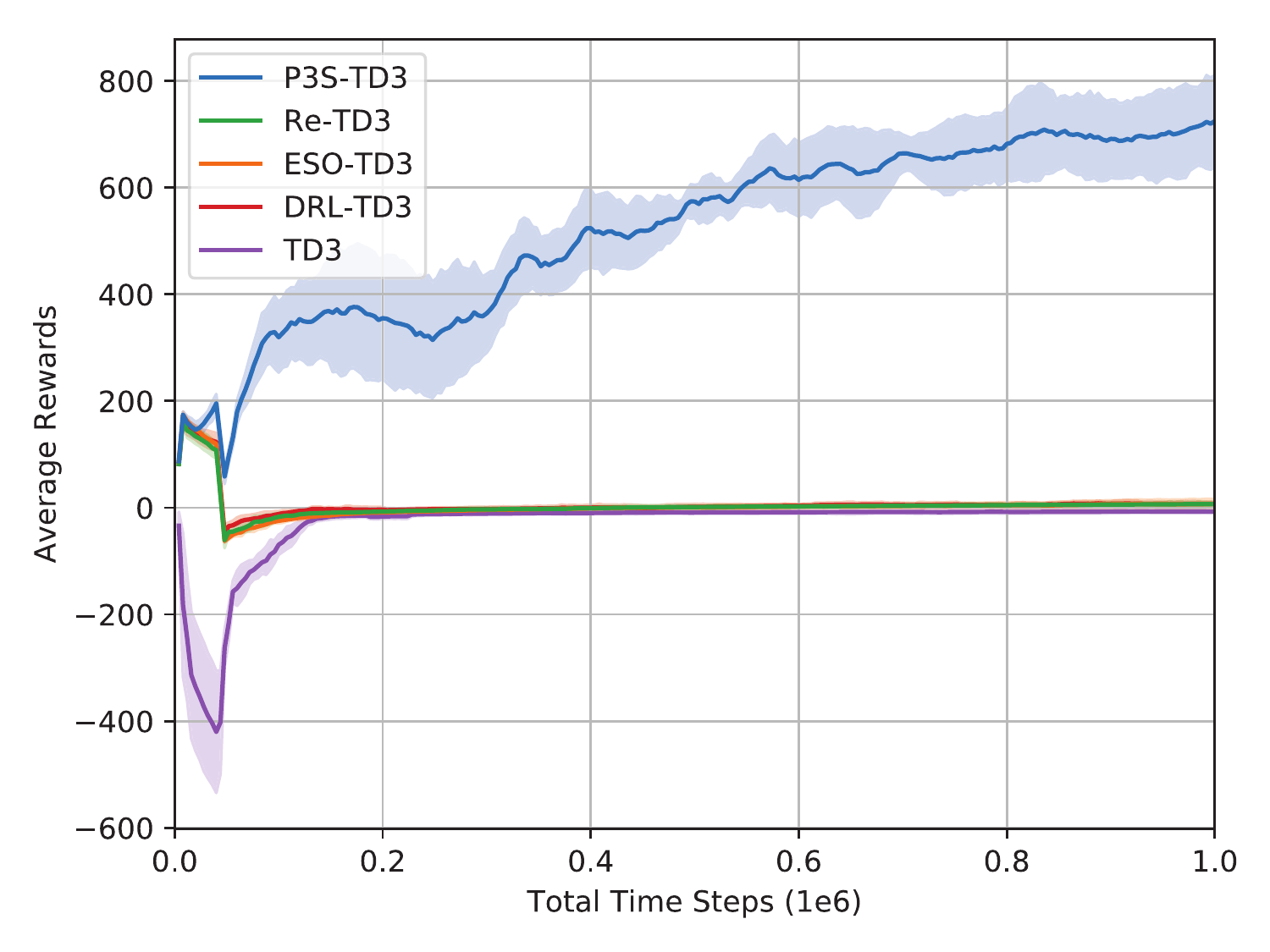}
			\caption{Del. Ant-v1}
			\label{fig:different_parallel_delayed_ant}
		\end{subfigure}
	\end{center}
	\caption{Performance of different parallel learning methods on MuJoCo environments (up), on delayed MuJoCo environments (down)}
	\label{fig:different_parallel}
\end{figure*}

{\bfseries Performance on MuJoCo environments}  The upper part of Fig. \ref{fig:different_parallel} shows the learning curves of the considered parallel learning methods combined with TD3 for  the four tasks (Hopper-v1, Walkerd-v1, HalfCheetah-v1 and Ant v1). It is seen that P3S-TD3 outperforms other parallel methods: DRL-TD3, ESO-TD3 and Re-TD3 except the case that ESO-TD3 or Re-TD3 slightly outperforms P3S-TD3 in Hopper-v1 and Walker2d-v1.
In the case of Hopper-v1 and Walker2d-v1, ESO-TD3 has better final (steady-state) performance  than all other parallel methods. Note that ESO-TD3 obtains most diverse experiences since the $N$ learners shares the experience replay buffer but there is no interaction among the $N$ learners until the end of training. So, it seems that this diverse experience is beneficial to Hopper-v1 and Walker2d-v1.


{\bfseries Performance on Delayed MuJoCo environments} Sparse reward environments especially require more search to obtain a good policy. To see the performance of P3S in sparse reward environments, we performed  experiments on Delayed MuJoCo environments. Delayed MuJoCo environments are reward-sparsified versions of MuJoCo environments and used in \cite{zheng2018learning}.   Delayed MuJoCo environments  give non-zero rewards periodically with frequency $f_{reward}$ or only at the end of episodes. That is,  in a delayed MuJoCo environment, the environment accumulates rewards given by the corresponding MuJoCo environment while providing zero reward to the agent, and gives the accumulated reward to the agent. We evaluated the performance on the four delayed environments with $f_{reward}=20$: Delayed Hopper-v1, Delayed Walker2d-v1, Delayed HalfCheetah-v1 and Delayed Ant-v1.

The lower part of Fig. \ref{fig:different_parallel} shows the learning curves of the different parallel learning methods for the four delayed MuJoCo environments. It is seen that P3S outperforms all other considered parallel learning schemes on all environments except on delayed Hopper-v1. It seems that the enforced wide-area policy search with the soft-fusion approach in P3S is beneficial to improve performance in sparse reward environments.


\begin{figure*}[t]
	\begin{center}
		\begin{subfigure}[b]{0.245\textwidth}
			\includegraphics[width=\textwidth]{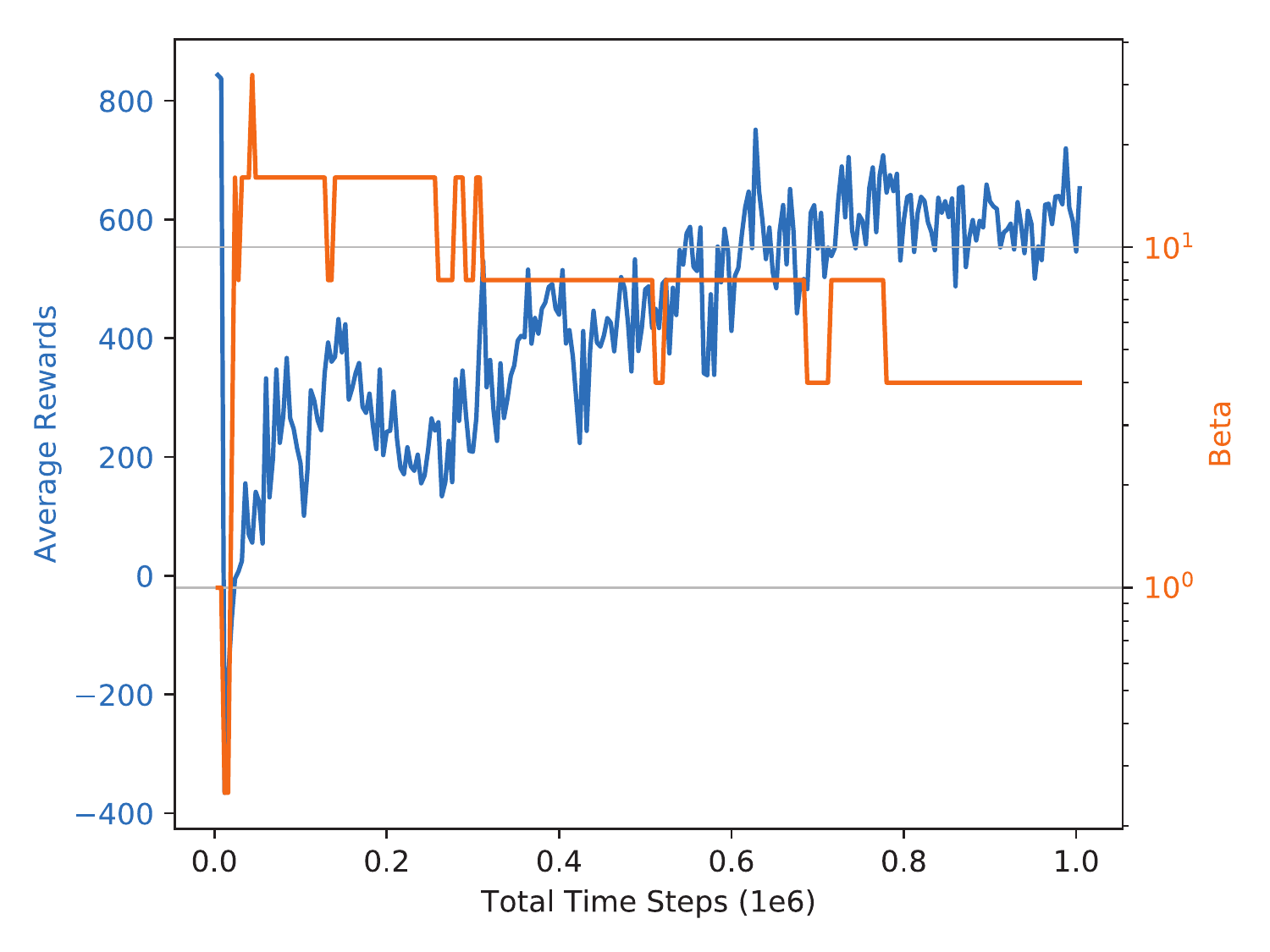}
			\caption{}
			\label{fig:performance_with_beta}
		\end{subfigure}
		\begin{subfigure}[b]{0.245\textwidth}
			\includegraphics[width=\textwidth]{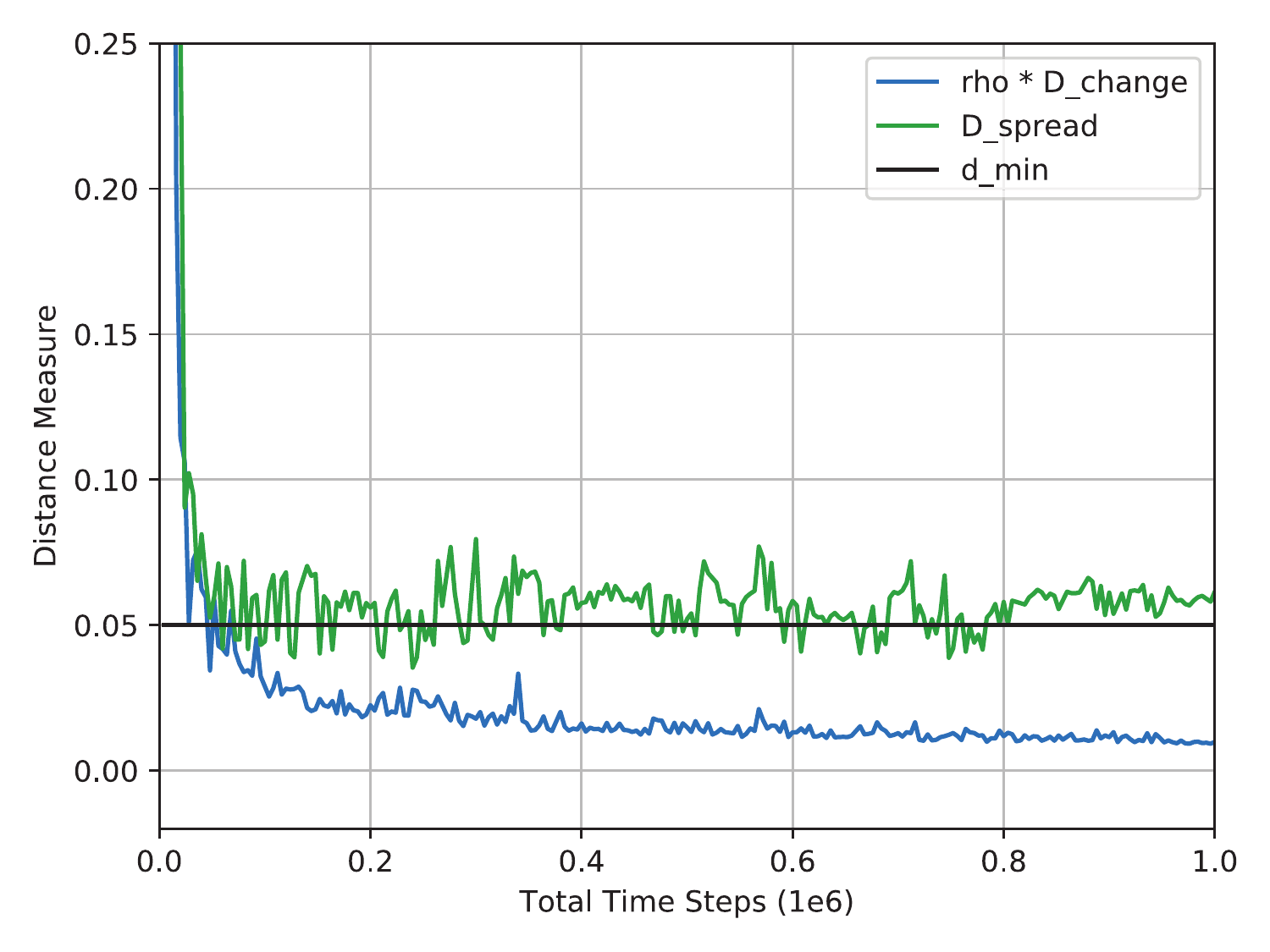}
			\caption{}
				\label{fig:distance_0.05_delayed_ant}
		\end{subfigure}
		\begin{subfigure}[b]{0.245\textwidth}
			\includegraphics[width=\textwidth]{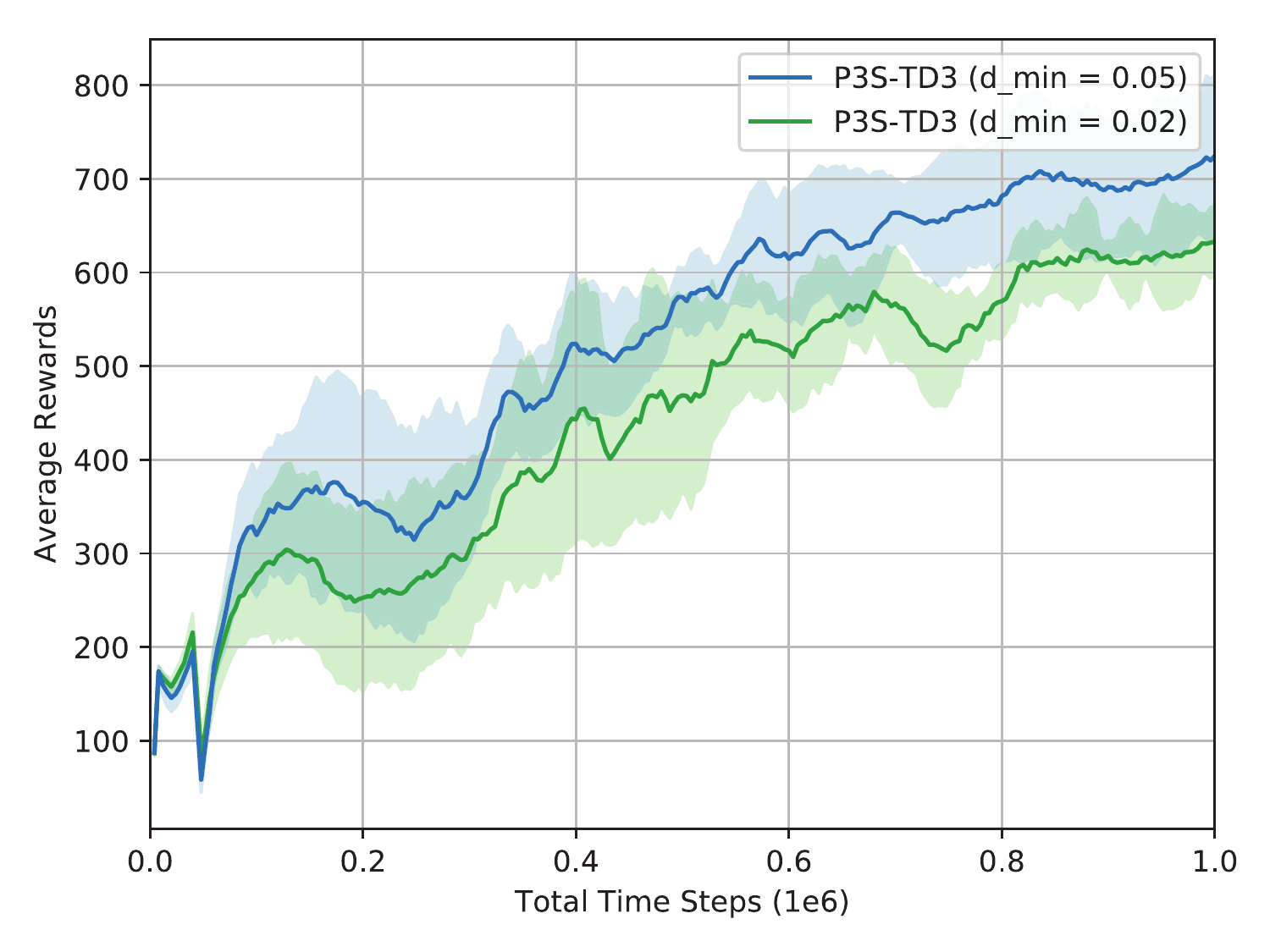}
			\caption{}
		\label{fig:different_d_min_delayed_ant}
		\end{subfigure}
	\end{center}
	\caption{Ablation study of P3S-TD3 on Delayed Ant-v1: (a) Performance and $\beta$ (1 seed) with $d_{min} = 0.05$,  (b) Distance measures with $d_{min} = 0.05$, and (c) Comparison with different $d_{min} = 0.02, 0.05$}
	\label{fig:beta_distance}
\end{figure*}


{\bfseries Benifits of P3S} Delayed Ant-v1  is a  case of sparse reward environment in which  P3S shows significant improvement as compared to other parallel schemes.  As shown in Fig. \ref{fig:different_parallel_delayed_ant}, the performance of TD3 drops below zero initially and converges to  zero as time goes. Similar behavior is shown for other parallel methods except P3S.  This is  because in Delayed Ant-v1 with zero padding rewards between actual rewards, initial random actions do not generate  significant positive speed to a forward direction, so  it does not receive positive rewards but receives negative actual rewards due to the control cost.  Once its performance less than 0, learners start learning doing nothing to reach zero reward (no positive reward and no negative reward due to no control cost).
Learning beyond this seems difficult without any direction information for parameter update.
This is the interpretation of the behavior of other algorithms in Fig. \ref{fig:different_parallel_delayed_ant}.
However, it seems that P3S escapes from this local optimum by following the best policy. This is evident in  Fig. \ref{fig:performance_with_beta}, showing that after few time steps, $\beta$ is increased to follow the best policy more. Note that at the early stage of learning,  the performance difference among the learners is large as seen in the large $\widehat{D}_{spread}$ values in
Fig. \ref{fig:distance_0.05_delayed_ant}.  As time elapses, all learners continue learning,  the performance improves, and the spreadness among the learners' policies shrinks. However, the spreadness among the learners' policies is kept at a certain level for wide policy search by $d_{min}$, as seen in Fig. \ref{fig:distance_0.05_delayed_ant}.
Fig. \ref{fig:different_d_min_delayed_ant} shows the performance of P3S with $d_{min} = 0.05$ and $0.02$. It shows that a wide area policy search is beneficial as compared to a narrow area policy search. However, it may be detrimental to set too large a value for $d_{min}$ due to too large statistics discrepancy among samples from different learners' policies.


\section{Conclusion}
\label{sec:conclusion}

In this paper, we have proposed a new population-guided parallel learning scheme, P3S, to enhance the performance of off-policy RL.  In the proposed P3S scheme, multiple identical learners with their own value-functions and policies sharing a common experience replay buffer search a good policy with the guidance of the best policy information in the previous search interval.
The information of the best policy parameter of the previous search interval is fused in a soft manner by constructing an augmented loss function for policy update to enlarge the overall search region by the multiple learners. The guidance by the previous best policy and the enlarged search region by P3S enables faster and better search in the policy space, and monotone improvement of expected cumulative return by P3S is theoretically proved.   The P3S-TD3 algorithm constructed by applying the proposed P3S scheme to TD3 outperforms most of the current state-of-the-art RL algorithms. Furthermore, the performance gain by P3S over other parallel learning schemes is significant on harder environments especially on sparse reward environments by searching wide range in policy space.

\subsubsection*{Acknowledgments}

This work was supported in part by the ICT R\&D program of MSIP/IITP (2016-0-00563, Research on Adaptive Machine Learning Technology Development for Intelligent Autonomous Digital Companion) and in part by the National Research Foundation of Korea(NRF) grant funded by the Korea government(Ministry of Science and ICT) (NRF2017R1E1A1A03070788).

\bibliography{iclr2020_conference}
\bibliographystyle{iclr2020_conference}

\newpage
\setcounter{myassumption}{0}
\setcounter{mythm}{0}
\setcounter{mylem}{0}
\section*{Appendix A. Proof of Theorem 1}
\label{sec:proof}

In this section, we prove Theorem 1.   Let $\pi_{\phi^i}^{old}$ be the policy of the $i$-th learner at the end of the previous update period and let $\pi_{\phi^b}$ be the best policy among all policies $\pi_{\phi^i}^{old},i=1,\cdots,N$. Now, consider any learner $i$ who is not the best in the previous update period. Let the policy of learner $i$ in the current update period be denoted by $\pi_{\phi^i}$, and let the policy loss function of the base algorithm be denoted as $L(\pi_{\phi^i})$, given in the form of
\begin{equation}  \label{appendix_eq:analysis_loss_function}
L(\pi_{\phi^i}) = \mbb{E}_{s \sim \mcal{D}, a \sim \pi_{\phi^i}(\cdot | s)} \left[ - Q^{\pi_{\phi_i}^{old}}(s, a) \right].
\end{equation}
The reason behind this choice is that most of actor-critic methods update the value (or Q-)function and the policy iteratively. That is,  for given $\pi_{\phi^i}^{old}$, the Q-function is first updated so as to approximate $Q^{\pi_{\phi^i}^{old}}$. Then, with the approximation $Q^{\pi_{\phi^i}^{old}}$ the policy is updated to yield an updated policy $\pi_{\phi^i}^{new}$, and this  procedure is repeated iteratively.  Such loss function is used in many RL algorithms such as SAC and TD3 (\cite{haarnoja2018soft,fujimoto2018addressing}).  SAC updates its policy by minimizing $\mbb{E}_{s \sim \mcal{D}, a \sim \pi'(\cdot | s)} \left[ -Q^{\pi_{old}}(s,a) + \log{\pi'(a | s)} \right]$ over  $\pi'$.  TD3 updates its policy by minimizing $\mbb{E}_{s \sim \mcal{D}, a = \pi'(s)}\left[ - Q^{\pi_{old}}(s,a) \right]$.

With  the loss function eq. (\ref{appendix_eq:analysis_loss_function}) and the KL divergence $\text{KL}(\pi || \pi')$ as the distance measure $D(\pi, \pi')$ between two policies $\pi$ and $\pi'$ as stated in the main paper,  the augmented loss function for non-best learner $i$ at the current update period is expressed  as
\begin{align}
\widetilde{L}(\pi_{\phi^i}) &= \mbb{E}_{s \sim \mcal{D}, a \sim \pi_{\phi^i}(\cdot | s)} \left[ - Q^{\pi_{\phi^i}^{old}}(s, a) \right] + \beta \mbb{E}_{s \sim \mcal{D}} [\text{KL}(\pi_{\phi^i}(\cdot | s) || \pi_{\phi^b}(\cdot | s))] \label{appendix_eq:theoretical_augmented_loss1}\\
&= \mbb{E}_{s \sim \mcal{D}} \left[ \mbb{E}_{a \sim \pi_{\phi^i}(\cdot | s)} \left[ - Q^{\pi_{\phi^i}^{old}}(s, a) + \beta \log{\frac{\pi_{\phi^i}(a|s)}{\pi_{\phi^b}(a|s)}} \right] \right] \label{appendix_eq:theoretical_augmented_loss}
\end{align}
Let $\pi_{\phi^i}^{new}$ be a solution that minimizes the augmented loss function eq. (\ref{appendix_eq:theoretical_augmented_loss}).

\begin{myassumption} \label{appendix_asmp:1}
	For all $s$,
	\begin{equation}
	\mbb{E}_{a \sim \pi_{\phi^b}(\cdot | s)} \left[ Q^{\pi_{\phi^i}^{old}} (s,a) \right] \geq \mbb{E}_{a \sim \pi_{\phi^i}^{old} (\cdot | s)} \left[ Q^{\pi_{\phi^i}^{old}} (s,a) \right].	\label{appendix_asmp:eq1}
	\end{equation}
\end{myassumption}

\begin{myassumption} \label{appendix_asmp:2}
	For some $\rho, d > 0$,
	\begin{equation}
	\text{KL}\left( \pi_{\phi^i}^{new}(\cdot | s) || \pi_{\phi^b}(\cdot | s) \right) \geq \max\left\{ \rho \max_{s'} \text{KL}\left( \pi_{\phi^i}^{new}(\cdot|s') || \pi_{\phi^i}^{old}(\cdot|s') \right), d \right\}, ~~\forall s.  \label{appendix_asmp:eq2}
	\end{equation}
\end{myassumption}

For simplicity of notations, we use the following notations from here on.
\begin{itemize}
	\item $\pi^i$ for $\pi_{\phi^i}$ %
	\item $\pi^i_{old}$ for $\pi_{\phi^i}^{old}$ %
	\item $\pi^i_{new}$ for $\pi_{\phi^i}^{new}$ %
	\item $\pi^b$ for $\pi_{\phi^b}$. %
	\item $\text{KL}_{max}\left( \pi^i_{new} || \pi^i_{old} \right)$ for $\max_{s'} \text{KL}\left( \pi_{\phi^i}^{new}(\cdot|s') || \pi_{\phi^i}^{old}(\cdot|s') \right)$
\end{itemize}

\subsection*{A.1. A Preliminary Step}

\begin{mylem} \label{appendix_lem:1} Let $\pi_{new}^i$ be the solution of the augmented loss function eq. (\ref{appendix_eq:theoretical_augmented_loss}). Then, with Assumption \ref{appendix_asmp:1}, we have the following:
	\begin{equation}
	\mbb{E}_{a \sim \pi_{new}^i(\cdot | s)} \left[ Q^{\pi_{old}^i}(s,a) \right] \geq \mbb{E}_{a \sim \pi_{old}^i(\cdot | s)} \left[ Q^{\pi_{old}^i}(s,a) \right]
	\end{equation}
	for all $s$.
\end{mylem}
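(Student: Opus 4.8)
The plan is to derive the lemma directly from the optimality of $\pi_{new}^i$ for the augmented loss (\ref{appendix_eq:theoretical_augmented_loss}), using $\pi^b$ itself as the competitor policy, and then to chain the resulting bound with Assumption \ref{appendix_asmp:1}. First I would observe that the augmented loss is separable across states: writing $\widetilde{L}(\pi^i) = \mbb{E}_{s \sim \mcal{D}}\left[ g_s(\pi^i(\cdot|s)) \right]$ with $g_s(\pi) := \mbb{E}_{a \sim \pi}\left[ - Q^{\pi^i_{old}}(s,a) \right] + \beta\,\text{KL}\!\left( \pi \,||\, \pi^b(\cdot|s) \right)$, the minimizer $\pi_{new}^i$ must minimize $g_s$ pointwise (over admissible conditional distributions) for each $s$. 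In particular, since $\pi^b(\cdot|s)$ is itself an admissible conditional distribution, pointwise optimality gives $g_s\!\left(\pi_{new}^i(\cdot|s)\right) \le g_s\!\left(\pi^b(\cdot|s)\right)$ for every $s$.

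The rest is a one-line rearrangement. The right-hand side is $g_s\!\left(\pi^b(\cdot|s)\right) = -\,\mbb{E}_{a \sim \pi^b(\cdot|s)}\!\left[ Q^{\pi^i_{old}}(s,a) \right]$ because the KL term vanishes, while the left-hand side is $g_s\!\left(\pi_{new}^i(\cdot|s)\right) = -\,\mbb{E}_{a \sim \pi_{new}^i(\cdot|s)}\!\left[ Q^{\pi^i_{old}}(s,a) \right] + \beta\,\text{KL}\!\left( \pi_{new}^i(\cdot|s) \,||\, \pi^b(\cdot|s) \right)$. Using $\text{KL}(\cdot||\cdot) \ge 0$ and $\beta > 0$, the inequality $g_s(\pi_{new}^i) \le g_s(\pi^b)$ rearranges to $\mbb{E}_{a \sim \pi_{new}^i(\cdot|s)}\!\left[ Q^{\pi^i_{old}}(s,a) \right] \ge \mbb{E}_{a \sim \pi^b(\cdot|s)}\!\left[ Q^{\pi^i_{old}}(s,a) \right]$. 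Finally, Assumption \ref{appendix_asmp:1} bounds the latter below by $\mbb{E}_{a \sim \pi^i_{old}(\cdot|s)}\!\left[ Q^{\pi^i_{old}}(s,a) \right]$, which closes the chain and proves the lemma for all $s$.

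The step I expect to be the main thing to get right is the reduction from global optimality of $\pi_{new}^i$ on the integrated loss $\widetilde{L}$ to \emph{pointwise} optimality of $\pi_{new}^i(\cdot|s)$ at each $s$: this is what makes $\pi^b(\cdot|s)$ a legitimate per-state competitor and is essential for the ``$\forall s$'' conclusion rather than merely an in-expectation statement. It is valid provided the policy is optimized over the class of all (measurable) conditional distributions, so that the per-state subproblems decouple — the same reasoning underlying soft policy improvement in SAC. As an alternative that sidesteps this issue, one could instead use the explicit Boltzmann-form minimizer $\pi_{new}^i(a|s) \propto \pi^b(a|s)\exp\!\left(Q^{\pi^i_{old}}(s,a)/\beta\right)$ and verify the inequality directly: $\mbb{E}_{a \sim \pi_{new}^i}[Q^{\pi^i_{old}}] - \mbb{E}_{a \sim \pi^b}[Q^{\pi^i_{old}}]$ is, up to a positive normalizing constant, the $\pi^b$-covariance between $Q^{\pi^i_{old}}(s,\cdot)$ and the increasing function $\exp\!\left(Q^{\pi^i_{old}}(s,\cdot)/\beta\right)$, which is nonnegative by the correlation inequality for monotone functions; then apply Assumption \ref{appendix_asmp:1} as before.
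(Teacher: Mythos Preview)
Your proof is correct and follows essentially the same approach as the paper: compare $\pi_{new}^i$ against the competitor $\pi^b$ via per-state optimality of the augmented loss, drop the nonnegative KL term, and then apply Assumption~\ref{appendix_asmp:1}. The paper chains the same three inequalities (just in a slightly different order) and, like you, tacitly uses that minimizing the state-averaged loss reduces to pointwise minimization; your explicit discussion of this reduction is in fact more careful than the paper's one-line appeal to ``the definition of $\pi_{new}^i$.''
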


\begin{proof}
	For all $s$,
	\begin{align}
	\mbb{E}_{a \sim \pi_{old}^i(\cdot | s)} \left[ -Q^{\pi_{old}^i}(s,a) \right] &\underset{(a)}{\geq} \mbb{E}_{a \sim \pi^b(\cdot | s)} \left[ -Q^{\pi_{old}^i}(s,a) \right] \\
	&= \mathbb{E}_{a \sim \pi^b(\cdot | s)}\left[ -Q^{\pi_{old}^i}(s,a) + \beta \log{\frac{\pi^b(a|s)}{\pi^b(a|s)}} \right] \\
	&\underset{(b)}{\geq}  \mathbb{E}_{a \sim \pi_{new}^i(\cdot | s)}\left[ -Q^{\pi_{old}^i}(s,a) + \beta \log{\frac{\pi_{new}^i(a|s)}{\pi^b(a|s)}} \right] \\
	&\underset{(c)}{\geq} \mbb{E}_{a \sim \pi_{new}^i(\cdot | s)} \left[ -Q^{\pi_{old}^i}(s,a) \right],
	\end{align}
	where	Step  (a) holds by Assumption \ref{appendix_asmp:1}, (b) holds by the definition of $\pi_{new}^i$, and (c) holds since KL divergence is always non-negative.
\end{proof}

With Lemma \ref{appendix_lem:1}, we prove the following preliminary result before Theorem 1:

\vspace{0.5em}

\begin{myprop} \label{appendix_prop:1} With Assumption \ref{appendix_asmp:1}, the following inequality holds for all $s$ and $a$:
	\begin{equation}
		Q^{\pi_{new}^i}(s,a) \geq Q^{\pi_{old}^i}(s,a).
	\end{equation}
\end{myprop}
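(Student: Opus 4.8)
The plan is to run the classical policy-improvement argument, using Lemma~\ref{appendix_lem:1} as the one-step improvement inequality. The useful way to read Lemma~\ref{appendix_lem:1} is that, for every state $s$, $V^{\pi_{old}^i}(s) = \mbb{E}_{a \sim \pi_{old}^i(\cdot|s)}[Q^{\pi_{old}^i}(s,a)] \le \mbb{E}_{a \sim \pi_{new}^i(\cdot|s)}[Q^{\pi_{old}^i}(s,a)]$; that is, taking the first action from $\pi_{new}^i$ and all subsequent actions from $\pi_{old}^i$ is at least as good as following $\pi_{old}^i$ from the start.

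First I would expand $Q^{\pi_{old}^i}(s,a)$ with the Bellman equation, $Q^{\pi_{old}^i}(s,a) = r(s,a) + \gamma\,\mbb{E}_{s'\sim P(\cdot|s,a)}[V^{\pi_{old}^i}(s')]$, and bound $V^{\pi_{old}^i}(s')$ from above using the reformulated Lemma~\ref{appendix_lem:1}, obtaining $Q^{\pi_{old}^i}(s,a) \le r(s,a) + \gamma\,\mbb{E}_{s',\,a'\sim\pi_{new}^i}[Q^{\pi_{old}^i}(s',a')]$. Then I would substitute the Bellman expansion of $Q^{\pi_{old}^i}(s',a')$ into the right-hand side and repeat. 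After $n$ such steps, an easy induction gives $Q^{\pi_{old}^i}(s,a) \le \mbb{E}_{\pi_{new}^i}[\sum_{k=0}^{n-1}\gamma^k r(s_k,a_k)\mid s_0=s,\,a_0=a] + \gamma^n\,\mbb{E}_{\pi_{new}^i}[V^{\pi_{old}^i}(s_n)\mid s_0=s,\,a_0=a]$, where the trajectory $(s_k,a_k)$ is generated by the environment dynamics together with $\pi_{new}^i$.

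Finally I would let $n\to\infty$. Under the standard discounted-MDP assumptions implicit in this setting ($\gamma\in(0,1)$ and uniformly bounded rewards, hence bounded $V^{\pi_{old}^i}$), the residual term $\gamma^n\,\mbb{E}_{\pi_{new}^i}[V^{\pi_{old}^i}(s_n)]$ vanishes and the partial-sum term converges to $Q^{\pi_{new}^i}(s,a)$, yielding $Q^{\pi_{old}^i}(s,a)\le Q^{\pi_{new}^i}(s,a)$ for all $(s,a)$, which is exactly Proposition~\ref{appendix_prop:1}. The only step that needs care is this limiting argument together with the formal induction: one must justify unrolling the nested expectations and interchanging the limit with them, which is routine by dominated convergence once rewards are bounded and $\gamma<1$. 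All the substantive content of the proposition is already packed into Lemma~\ref{appendix_lem:1}; the remainder is standard telescoping bookkeeping, so I expect no genuine obstacle beyond that technical step.
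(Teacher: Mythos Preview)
Your proposal is correct and follows essentially the same route as the paper's proof: both start from the Bellman expansion of $Q^{\pi_{old}^i}$, apply Lemma~\ref{appendix_lem:1} to swap the action distribution from $\pi_{old}^i$ to $\pi_{new}^i$ at the next state, and iterate to obtain the discounted return under $\pi_{new}^i$. The only cosmetic difference is that you phrase the unrolling via $V^{\pi_{old}^i}$ and make the $n\to\infty$ limiting argument explicit, whereas the paper writes the recursion directly in terms of $Q^{\pi_{old}^i}$ and passes to the infinite sum informally.
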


\begin{proof}[Proof of Proposition \ref{appendix_prop:1}] For arbitrary $s_t$ and $a_t$,
	\begin{align}
	& Q^{\pi_{old}^i}(s_t,a_t) \nonumber\\
	&~~~= r(s_t, a_t) + \gamma \mbb{E}_{s_{t+1} \sim p(\cdot|s_t,a_t)}\left[ \mbb{E}_{a_{t+1} \sim \pi_{old}^i} \left[ Q^{\pi_{old}^i}(s_{t+1}, a_{t+1}) \right] \right] \label{appendix_eq:prop1proof12}\\
	&~~~\underset{(a)}{\leq} r(s_t, a_t) + \gamma \mbb{E}_{s_{t+1} \sim p(\cdot|s_t,a_t)}\left[ \mbb{E}_{a_{t+1} \sim \pi_{new}^i} \left[ Q^{\pi_{old}^i}(s_{t+1}, a_{t+1}) \right] \right] \\
	&~~~= \mbb{E}_{s_{t+1}:s_{t+2} \sim \pi_{new}^i}\left[ r(s_t, a_t) + \gamma r(s_{t+1}, a_{t+1}) + \gamma^2 \mbb{E}_{a_{t+2} \sim \pi_{old}^i} \left[ Q^{\pi_{old}^i}(s_{t+2}, a_{t+2}) \right] \right]  \label{appendix_eq:prop1proof14} \\
	&~~~\underset{(b)}{\leq} \mbb{E}_{s_{t+1}:s_{t+2} \sim \pi_{new}^i}\left[ r(s_t, a_t) + \gamma r(s_{t+1}, a_{t+1}) + \gamma^2 \mbb{E}_{a_{t+2} \sim \pi_{new}^i} \left[ Q^{\pi_{old}^i}(s_{t+2}, a_{t+2}) \right] \right] \\
	&~~~\leq \ldots \\
	&~~~\leq \mbb{E}_{s_{t+1}:s_{\infty} \sim \pi_{new}^i}\left[ \sum_{k=t}^{\infty} \gamma^{k-t} r(s_k, a_k) \right] \\
	&~~~= Q^{\pi_{new}^i}(s_t, a_t),
	\end{align}
	where  $p(\cdot|s_t,a_t)$ in eq. (\ref{appendix_eq:prop1proof12}) is the environment transition probability, and  $s_{t+1}:s_{t+2} \sim \pi_{new}^i$ in  eq. (\ref{appendix_eq:prop1proof14}) means that the trajectory from $s_{t+1}$ to $s_{t+2}$ is generated by $\pi_{new}^i$ together with the environment transition probability $p(\cdot|s_t,a_t)$. (Since the use of $p(\cdot|s_t,a_t)$ is obvious, we omitted $p(\cdot|s_t,a_t)$ for notational simplicity.) 	Steps (a) and (b) hold due to Lemma \ref{appendix_lem:1}.
\end{proof}

\subsection*{A.2. Proof of Theorem \ref{appendix_thm:1}}
\label{subsec:proof_thm}

Proposition \ref{appendix_prop:1} states that for a non-best learner $i$, the updated policy $\pi^i_{new}$ with the augmented loss function yields better performance than its previous policy $\pi_{old}^i$, but Theorem 1 states that for a non-best learner $i$, the updated policy $\pi^i_{new}$ with the augmented loss function yields better performance than even the previous best policy $\pi^b$.

To prove Theorem 1, we need  further lemmas:  We take Definition \ref{appendix_def:1} and Lemma \ref{appendix_lem:3} directly from reference (\cite{schulman2015trust}).

\begin{mydef}[From \cite{schulman2015trust}] \label{appendix_def:1}
	Consider two policies $\pi$ and $\pi'$. The two policies $\pi$ and $\pi'$ are $\alpha$-coupled if $\mbox{Pr}(a \neq a') \leq \alpha$, $(a, a') \sim (\pi(\cdot |s), \pi'(\cdot | s))$ for all $s$.
\end{mydef}

\begin{mylem}[From \cite{schulman2015trust}] \label{appendix_lem:3}
	Given $\alpha$-coupled policies $\pi$ and $\pi'$, for all $s$,
	\begin{equation}
	\left\vert \mbb{E}_{a \sim \pi'} \left[ A^{\pi}(s, a) \right] \right\vert \leq 2 \alpha \max_{s, a} | A^{\pi}(s,a) |,
	\end{equation}
	where $A^{\pi}(s, a)$ is the advantage function.
\end{mylem}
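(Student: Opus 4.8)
The plan is to reduce the inequality to the one structural fact that makes it true: the advantage function is mean-zero under its own policy, i.e.\ $\mathbb{E}_{a\sim\pi(\cdot|s)}[A^{\pi}(s,a)] = 0$ for every $s$, which follows at once from $A^{\pi}(s,a) = Q^{\pi}(s,a) - V^{\pi}(s)$ together with $V^{\pi}(s) = \mathbb{E}_{a\sim\pi(\cdot|s)}[Q^{\pi}(s,a)]$. Once this is in hand, the quantity $\mathbb{E}_{a\sim\pi'}[A^{\pi}(s,a)]$ on the left-hand side is really a \emph{difference} of expectations of the same function $A^{\pi}(s,\cdot)$, one taken under $\pi'$ and one under $\pi$, and that is exactly the object an $\alpha$-coupling is designed to bound.

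Concretely, I would fix $s$, relabel the dummy variable so the left-hand side reads $\mathbb{E}_{a'\sim\pi'(\cdot|s)}[A^{\pi}(s,a')]$, and bring in the joint law of $(a,a')$ witnessing $\alpha$-coupledness (marginals $\pi(\cdot|s)$ and $\pi'(\cdot|s)$, and $\Pr(a\neq a') \le \alpha$, as in Definition~\ref{appendix_def:1}). Subtracting the zero quantity $\mathbb{E}_{a\sim\pi(\cdot|s)}[A^{\pi}(s,a)]$ and rewriting both expectations through the coupling gives
\begin{equation}
\mathbb{E}_{a'\sim\pi'(\cdot|s)}\!\left[A^{\pi}(s,a')\right] = \mathbb{E}_{(a,a')}\!\left[A^{\pi}(s,a') - A^{\pi}(s,a)\right].
\end{equation}
The summand vanishes on the event $\{a = a'\}$, so the right-hand side equals $\Pr(a\neq a')$ times the conditional expectation of $A^{\pi}(s,a') - A^{\pi}(s,a)$ given $a\neq a'$. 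Taking absolute values, bounding that conditional expectation in magnitude by $\sup_{a,a'}\lvert A^{\pi}(s,a') - A^{\pi}(s,a)\rvert \le 2\max_{s,a}\lvert A^{\pi}(s,a)\rvert$ via the triangle inequality, and using $\Pr(a\neq a') \le \alpha$, I arrive at $\lvert\mathbb{E}_{a\sim\pi'}[A^{\pi}(s,a)]\rvert \le 2\alpha\max_{s,a}\lvert A^{\pi}(s,a)\rvert$, which is the claim.

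The honest answer about the main obstacle is that there is none of substance — this is an elementary estimate, which is presumably why the statement is quoted verbatim from \cite{schulman2015trust}; the only points meriting a word of care are that an $\alpha$-coupling is an \emph{existence} assertion about a joint law with the prescribed marginals (so the rewriting above is legitimate), that $A^{\pi}$ is bounded under the standing boundedness of rewards (so every expectation in sight is finite and the rearrangement is valid), and that the factor $2$ rather than $1$ enters because we bound a \emph{difference} of two advantages, each of magnitude at most $\max_{s,a}\lvert A^{\pi}(s,a)\rvert$. Downstream, this lemma is the device that converts the per-state KL budget guaranteed by Assumption~\ref{appendix_asmp:2} into uniform control on $Q^{\pi^{i}_{new}} - Q^{\pi^{b}}$, which is what ultimately yields the nonnegative ``improvement gap'' in Theorem~1.
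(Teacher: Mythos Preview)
Your proof is correct and follows the paper's argument essentially line for line: subtract the mean-zero quantity $\mathbb{E}_{a\sim\pi}[A^{\pi}(s,a)]$, pass to the coupling, drop the $\{a=a'\}$ part, and bound the $\{a\neq a'\}$ part by $\alpha\cdot 2\max_{s,a}|A^{\pi}(s,a)|$. The only quibble is your closing remark on downstream use: in the paper the lemma is first applied (via Lemma~\ref{appendix_lem:4} and Lemma~\ref{appendix_lem:5}) to control $Q^{\pi_{new}^i}-Q^{\pi_{old}^i}$, not $Q^{\pi_{new}^i}-Q^{\pi^b}$ directly, though the eventual destination is indeed the improvement gap in Theorem~1.
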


\begin{proof} (From \cite{schulman2015trust})
	\begin{align}
	\left\vert \mbb{E}_{a \sim \pi'} \left[ A^{\pi}(s, a) \right] \right\vert &\underset{(a)}{=} \left\vert \mbb{E}_{a' \sim \pi'} \left[ A^{\pi}(s, a') \right] - \mbb{E}_{a \sim \pi} \left[ A^{\pi}(s, a) \right] \right\vert  \label{eq:A_pi}\\
	&= \left\vert \mbb{E}_{(a, a') \sim (\pi, \pi')} \left[ A^{\pi}(s, a') - A^{\pi}(s, a) \right] \right\vert \\
	&= | \mbox{Pr}(a = a') \mbb{E}_{(a, a') \sim (\pi, \pi') | a = a'}\left[ A^{\pi}(s, a') - A^{\pi}(s, a) \right]  \nonumber\\
	&~~~~+ \mbox{Pr}(a \neq a') \mbb{E}_{(a, a') \sim (\pi, \pi') | a \neq a'}\left[ A^{\pi}(s, a') - A^{\pi}(s, a) \right] | \\
	&= \mbox{Pr}(a \neq a') | \mbb{E}_{(a, a') \sim (\pi, \pi') | a \neq a'}\left[ A^{\pi}(s, a') - A^{\pi}(s, a) \right] | \\
	&\leq 2 \alpha \max_{s, a} \left\vert A^{\pi}(s,a) \right\vert,	
	\end{align}
	where Step  (a) holds since $\mbb{E}_{a \sim \pi}[A^{\pi}(s, a)] = 0$ for all $s$ by the property of an advantage function.
\end{proof}

By modifying the result on the state value function in \cite{schulman2015trust}, we have the following lemma on the Q-function:

\begin{mylem} \label{appendix_lem:2}
	Given two policies $\pi$ and $\pi'$, the following equality holds for arbitrary $s_0$ and $a_0$:
	\begin{equation}
	Q^{\pi'}(s_0, a_0) = Q^{\pi}(s_0, a_0) + \gamma \mbb{E}_{\tau \sim \pi'}\left[ \sum_{t=1}^{\infty} \gamma^{t-1} A^{\pi}(s_t, a_t) \right],
	\end{equation}
	where $\mbb{E}_{\tau \sim \pi'}$ is expectation over trajectory $\tau$ which start from a state $s_1$ drawn from the transition probability $p(\cdot | s_0, a_0)$ of the environment.
\end{mylem}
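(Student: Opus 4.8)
The plan is to reproduce the classical Kakade--Langford advantage-decomposition identity, exactly as it is carried out for the state-value function in \cite{schulman2015trust}, but anchored at the state--action pair $(s_0,a_0)$ instead of at an initial state. The one ingredient I would set up first is the Bellman form of the advantage function, namely $A^{\pi}(s,a) = r(s,a) + \gamma\,\mbb{E}_{s'\sim p(\cdot|s,a)}\!\left[V^{\pi}(s')\right] - V^{\pi}(s)$, which is immediate from $A^{\pi}=Q^{\pi}-V^{\pi}$ together with $Q^{\pi}(s,a) = r(s,a) + \gamma\,\mbb{E}_{s'\sim p(\cdot|s,a)}\!\left[V^{\pi}(s')\right]$. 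Here $V^{\pi}(s)=\mbb{E}_{a\sim\pi(\cdot|s)}[Q^{\pi}(s,a)]$ as usual.

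Next I would substitute this expression into the discounted sum along a trajectory $\tau=(s_1,a_1,s_2,a_2,\dots)$ generated by $\pi'$ with $s_1\sim p(\cdot|s_0,a_0)$, and exploit the telescoping of the value terms:
\begin{align*}
\mbb{E}_{\tau\sim\pi'}\!\left[\sum_{t=1}^{\infty}\gamma^{t-1}A^{\pi}(s_t,a_t)\right]
&= \mbb{E}_{\tau\sim\pi'}\!\left[\sum_{t=1}^{\infty}\gamma^{t-1}\big(r(s_t,a_t)+\gamma V^{\pi}(s_{t+1})-V^{\pi}(s_t)\big)\right] \\
&= \mbb{E}_{\tau\sim\pi'}\!\left[\sum_{t=1}^{\infty}\gamma^{t-1}r(s_t,a_t)\right] - \mbb{E}_{s_1\sim p(\cdot|s_0,a_0)}\!\left[V^{\pi}(s_1)\right] \\
&= \mbb{E}_{s_1\sim p(\cdot|s_0,a_0)}\!\left[V^{\pi'}(s_1)\right] - \mbb{E}_{s_1\sim p(\cdot|s_0,a_0)}\!\left[V^{\pi}(s_1)\right],
\end{align*}
where the last line uses that, conditioned on $s_1$, the remaining trajectory is governed by $\pi'$, so the reward sum is exactly $V^{\pi'}(s_1)$. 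Multiplying both sides by $\gamma$ and then adding $r(s_0,a_0)$ inside each expectation turns $r(s_0,a_0)+\gamma\,\mbb{E}_{s_1}[V^{\pi'}(s_1)]$ into $Q^{\pi'}(s_0,a_0)$ and $r(s_0,a_0)+\gamma\,\mbb{E}_{s_1}[V^{\pi}(s_1)]$ into $Q^{\pi}(s_0,a_0)$ by the Bellman equation, so $\gamma\,\mbb{E}_{\tau\sim\pi'}[\sum_{t\ge1}\gamma^{t-1}A^{\pi}(s_t,a_t)] = Q^{\pi'}(s_0,a_0) - Q^{\pi}(s_0,a_0)$, which rearranges to the claim.

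The main obstacle is not conceptual but the rigor of manipulating the infinite series and the telescoping inside an expectation over infinite trajectories. I would make this precise by first truncating at a finite horizon $T$, where $\sum_{t=1}^{T}\gamma^{t-1}A^{\pi}(s_t,a_t) = \sum_{t=1}^{T}\gamma^{t-1}r(s_t,a_t) + \gamma^{T}V^{\pi}(s_{T+1}) - V^{\pi}(s_1)$ holds path-by-path, then taking $\mbb{E}_{\tau\sim\pi'}$, and finally letting $T\to\infty$: since $\gamma\in(0,1)$ and the per-step rewards are bounded (hence $V^{\pi}$ is uniformly bounded), the boundary term satisfies $\gamma^{T}\mbb{E}_{\tau\sim\pi'}[V^{\pi}(s_{T+1})]\to 0$, and dominated convergence licenses interchanging the limit with the expectation for the reward sum. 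Everything else reduces to bookkeeping with the two Bellman identities above.
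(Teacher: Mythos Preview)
Your proof is correct and follows essentially the same route as the paper: both use the Bellman form $A^{\pi}(s,a)=r(s,a)+\gamma\,\mbb{E}_{s'}[V^{\pi}(s')]-V^{\pi}(s)$, telescope the discounted sum to obtain $\mbb{E}_{s_1}[V^{\pi'}(s_1)]-\mbb{E}_{s_1}[V^{\pi}(s_1)]$, and then convert this to $Q^{\pi'}(s_0,a_0)-Q^{\pi}(s_0,a_0)$ via the one-step Bellman equation. Your additional finite-horizon truncation and dominated-convergence justification is a nice touch that the paper omits.
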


\begin{proof}
	Note that
	\begin{align}
	Q^{\pi}(s_0, a_0) &= r_0 + \gamma \mbb{E}_{s_1 \sim p(\cdot | s_0, a_0)}\left[ V^{\pi}(s_1) \right] \\
	Q^{\pi'}(s_0, a_0) &= r_0 + \gamma \mbb{E}_{s_1 \sim p(\cdot | s_0, a_0)}\left[ V^{\pi'}(s_1) \right]
	\end{align}
	Hence, it is sufficient  to show the following equality:
	\begin{equation} \label{appendix_eq:Theo1ProofLemma3_28}
	\mbb{E}_{\tau \sim \pi'}\left[ \sum_{t=1}^{\infty} \gamma^{t-1} A^{\pi}(s_t, a_t) \right] = \mbb{E}_{s_1 \sim p(\cdot | s_0, a_0)}\left[ V^{\pi'}(s_1) \right]  -  \mbb{E}_{s_1 \sim p(\cdot | s_0, a_0)}\left[ V^{\pi}(s_1) \right]
	\end{equation}
	Note that
	\begin{equation}  \label{appendix_eq:Theo1ProofLemma3_29}
	A^\pi(s_t, a_t) = \mbb{E}_{s_{t+1} \sim p(\cdot | s_t, a_t)}\left[ r_t + \gamma V^{\pi}(s_{t+1}) - V^{\pi}(s_t) \right]
	\end{equation}
	Then, substituting eq. (\ref{appendix_eq:Theo1ProofLemma3_29})  into the LHS of eq. (\ref{appendix_eq:Theo1ProofLemma3_28}), we have
	\begin{align}
	\mbb{E}_{\tau \sim \pi'}\left[ \sum_{t=1}^{\infty} \gamma^{t-1} A^{\pi}(s_t, a_t) \right] &= \mbb{E}_{\tau \sim \pi'}\left[ \sum_{t=1}^{\infty} \gamma^{t-1}  \left( r_t + \gamma V^{\pi}(s_{t+1}) - V^{\pi}(s_t) \right) \right] \\
	&= \mbb{E}_{\tau \sim \pi'}\left[ \sum_{t=1}^{\infty} \gamma^{t-1} r_t \right] - \mbb{E}_{s_1 \sim p(\cdot | s_0, a_0)} \left[ V^{\pi}(s_1) \right]   \label{appendix_eq:Theo1ProofLemma3_31}\\
	&= \mbb{E}_{s_1 \sim p(\cdot | s_0, a_0)} \left[ V^{\pi'}(s_1) \right] - \mbb{E}_{s_1 \sim p(\cdot | s_0, a_0)} \left[ V^{\pi}(s_1) \right], \label{appendix_eq:Theo1ProofLemma3_32}
	\end{align}
	where eq. (\ref{appendix_eq:Theo1ProofLemma3_31}) is valid since $\mbb{E}_{\tau \sim \pi'}\left[ \sum_{t=1}^{\infty} \gamma^{t-1}  \left(\gamma V^{\pi}(s_{t+1}) - V^{\pi}(s_t) \right) \right] = - \mbb{E}_{s_1 \sim p(\cdot | s_0, a_0)} \left[ V^{\pi}(s_1) \right]$. Since  the RHS of eq. (\ref{appendix_eq:Theo1ProofLemma3_32}) is the same as the RHS of eq. (\ref{appendix_eq:Theo1ProofLemma3_28}), the  claim holds.
\end{proof}

Then, we can prove the following lemma regarding the difference between the Q-functions of two $\alpha$-coupled policies $\pi$ and $\pi'$:
\begin{mylem} \label{appendix_lem:4}
	Let $\pi$ and $\pi'$ be $\alpha$-coupled policies. Then,
	\begin{equation}  \label{appendix_eq:lemma4_33}
	\left\vert Q^{\pi}(s,a) - Q^{\pi'}(s,a) \right\vert \leq \frac{2 \epsilon \gamma}{1-\gamma} \max\left\{ C \alpha^2, 1/C \right\},
	\end{equation}
	where $\epsilon = \max_{s, a} \left\vert A^{\pi}(s,a) \right\vert$ and $C > 0$
\end{mylem}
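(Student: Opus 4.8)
The plan is to feed the two preceding lemmas into a single telescoping bound and then finish with a one-line manipulation of $C$. I do not expect a serious obstacle: the only thing needing care is the reduction of the trajectory-average of the advantage to a \emph{per-state} average, so that Lemma~\ref{appendix_lem:3} applies.

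First I would apply Lemma~\ref{appendix_lem:2} with base point $(s,a)$, which gives
\begin{equation}
Q^{\pi'}(s,a) - Q^{\pi}(s,a) = \gamma\, \mbb{E}_{\tau \sim \pi'}\!\left[\, \sum_{t=1}^{\infty} \gamma^{t-1} A^{\pi}(s_t,a_t) \right],
\end{equation}
where $\tau$ starts from $s_1 \sim p(\cdot\,|\,s,a)$ and thereafter follows $\pi'$. Taking absolute values, moving them inside the (absolutely convergent) sum, and for each fixed $t$ conditioning on $s_t$ by the tower rule, I obtain
\begin{equation}
\big| Q^{\pi}(s,a) - Q^{\pi'}(s,a) \big| \;\le\; \gamma \sum_{t=1}^{\infty} \gamma^{t-1}\, \Big| \mbb{E}_{s_t \sim \pi'}\big[\, \mbb{E}_{a \sim \pi'(\cdot|s_t)}[A^{\pi}(s_t,a)] \big] \Big| ,
\end{equation}
so that the inner conditional expectation $\mbb{E}_{a \sim \pi'(\cdot|s_t)}[A^{\pi}(s_t,a)]$ is exactly the object controlled by Lemma~\ref{appendix_lem:3}.

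Next I would invoke Lemma~\ref{appendix_lem:3}: since $\pi$ and $\pi'$ are $\alpha$-coupled, $\big|\mbb{E}_{a\sim\pi'(\cdot|s)}[A^{\pi}(s,a)]\big| \le 2\alpha\,\max_{s,a}|A^{\pi}(s,a)| = 2\alpha\epsilon$ for every $s$, and this bound passes through the outer expectation over $s_t$ for every $t$. Summing the geometric series $\sum_{t\ge 1}\gamma^{t-1}=\tfrac{1}{1-\gamma}$ then yields the intermediate estimate $\big|Q^{\pi}(s,a)-Q^{\pi'}(s,a)\big|\le \tfrac{2\epsilon\gamma}{1-\gamma}\,\alpha$.

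To reach the stated form I would note that $\alpha \le \max\{C\alpha^2,\,1/C\}$ for every $C>0$: if $\alpha \le 1/C$ the second branch already dominates $\alpha$, while if $\alpha>1/C$ then $C\alpha>1$, hence $C\alpha^2>\alpha$. Substituting this into the intermediate estimate gives eq.~(\ref{appendix_eq:lemma4_33}). The step to watch, as mentioned, is the per-state reduction preceding Lemma~\ref{appendix_lem:3}; a direct coupling of the $\pi'$-trajectory against a $\pi$-trajectory in the style of \cite{schulman2015trust} is also possible but never produces a tighter constant here (it can lose a factor $1/(1-\gamma)$, because — unlike in the surrogate-objective bound of that reference — there is no second cancellation against the state distribution of $\pi$). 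The free parameter $C$ is retained rather than fixed to its optimal value $1/\alpha$, which would already give the sharp bound $\tfrac{2\epsilon\gamma\alpha}{1-\gamma}$, so that it can be tuned jointly with the KL-based estimates used in the later steps.
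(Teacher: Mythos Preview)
Your proposal is correct and follows essentially the same route as the paper: apply Lemma~\ref{appendix_lem:2}, push the absolute value inside the geometric sum, invoke Lemma~\ref{appendix_lem:3} term-by-term to obtain the intermediate bound $\tfrac{2\epsilon\gamma}{1-\gamma}\alpha$, and then replace $\alpha$ by $\max\{C\alpha^2,1/C\}$. The only cosmetic difference is in this last step: the paper reaches it via $2\alpha \le C\alpha^2 + 1/C = C(\alpha-1/C)^2 + 2\alpha$ and then bounds the sum by twice the max, whereas you argue the inequality $\alpha \le \max\{C\alpha^2,1/C\}$ directly by cases; both are equivalent.
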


\begin{proof}
	From Lemma \ref{appendix_lem:2}, we have
	\begin{equation}  \label{appendix_eq:Lem4_34}
	Q^{\pi'}(s_0, a_0) - Q^{\pi}(s_0, a_0) = \gamma \mbb{E}_{\tau \sim \pi'}\left[ \sum_{t=1}^{\infty} \gamma^{t-1} A^{\pi}(s_t, a_t) \right].
	\end{equation}
	Then, from eq. (\ref{appendix_eq:Lem4_34}) we have
	\begin{align}
	\left\vert Q^{\pi'}(s_0, a_0) - Q^{\pi}(s_0, a_0) \right\vert &= \left\vert \gamma \mbb{E}_{\tau \sim \pi'}\left[ \sum_{t=1}^{\infty} \gamma^{t-1} A^{\pi}(s_t, a_t) \right] \right\vert \\
	&\leq \gamma  \sum_{t=1}^{\infty} \gamma^{t-1} \left\vert \mbb{E}_{s_t, a_t \sim \pi'}  \left[ A^{\pi}(s_t, a_t) \right] \right\vert \\
	&\leq \gamma  \sum_{t=1}^{\infty} \gamma^{t-1} 2 \alpha \max_{s, a} \left\vert A^{\pi}(s,a) \right\vert    \label{appendix_eq:Lem4_37}\\
	&= \frac{\epsilon \gamma}{1 - \gamma} 2\alpha \\
	&\leq \frac{\epsilon \gamma}{1 - \gamma} \left( C \alpha^2 + 1/C \right)  \label{appendix_eq:Lem4_39}\\
	&\leq \frac{\epsilon \gamma}{1 - \gamma} 2\max\left\{ C \alpha^2, 1/C \right\}, \label{appendix_eq:Lem4_40}
	\end{align}
	where $\epsilon = \max_{s, a} \left\vert A^{\pi}(s,a) \right\vert$ and $C > 0$.  Here, eq. (\ref{appendix_eq:Lem4_37}) is valid due to  Lemma \ref{appendix_lem:3},  eq. (\ref{appendix_eq:Lem4_39})   is valid since $C \alpha^2 + 1/C -2\alpha = C\left( \alpha - \frac{1}{C} \right)^2 \ge 0$, and eq. (\ref{appendix_eq:Lem4_40}) is valid since the sum of two terms is less than or equal to two times the maximum of the two terms.
\end{proof}

Up to now, we considered some results valid for given two $\alpha$-coupled policies $\pi$ and $\pi'$.
On the other hand,
it is shown  in  \cite{schulman2015trust} that for arbitrary policies $\pi$ and $\pi'$, if we take $\alpha$ as the maximum (over $s$) of the total variation divergence  $\max_s D_{TV}(\pi(\cdot | s) || \pi'(\cdot | s))$ between $\pi(\cdot |s)$ and $\pi'(\cdot | s)$, then the two policies are $\alpha$-coupled with the $\alpha$ value of $\alpha=\max_s D_{TV}(\pi(\cdot | s) || \pi'(\cdot | s))$.

Applying the above facts, we have the following result regarding $\pi_{new}^i$ and $\pi_{old}^i$:

\begin{mylem} \label{appendix_lem:5} For some constants $\rho$, $d > 0$,
	\begin{equation}   \label{appendix_eq:Lemma5Equation}
	Q^{\pi_{new}^i}(s,a) \leq Q^{\pi_{old}^i}(s,a) +\beta \max\left\{ \rho \text{KL}_{max}\left( \pi_{new}^i || \pi_{old}^i \right), d \right\}
	\end{equation}
	for all $s$ and $a$, where  $\text{KL}_{max}\left( \pi || \pi' \right) = \max_s \text{KL}\left( \pi (\cdot | s) || \pi'(\cdot | s) \right)$.
\end{mylem}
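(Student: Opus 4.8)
The plan is to invoke Lemma \ref{appendix_lem:4} with the pair of policies $\pi_{old}^i$ and $\pi_{new}^i$, convert the total-variation coupling constant that lemma requires into a KL quantity via Pinsker's inequality, and then absorb the free constant $C$ left over in Lemma \ref{appendix_lem:4} into the definitions of $\rho$ and $d$.

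First I would recall the fact quoted from \cite{schulman2015trust} just before the lemma: any two policies are $\alpha$-coupled once we take $\alpha = \max_s D_{TV}(\pi_{new}^i(\cdot|s) || \pi_{old}^i(\cdot|s))$. Feeding this $\alpha$ into Lemma \ref{appendix_lem:4} with $\pi = \pi_{old}^i$ (so that $\epsilon = \max_{s,a}|A^{\pi_{old}^i}(s,a)|$, finite under the usual boundedness assumptions with $\gamma < 1$) gives, for every $(s,a)$ and every $C > 0$,
\[
Q^{\pi_{new}^i}(s,a) \le Q^{\pi_{old}^i}(s,a) + \frac{2\epsilon\gamma}{1-\gamma}\,\max\!\left\{ C\alpha^2,\; 1/C \right\}.
\]
Next I would apply Pinsker's inequality $D_{TV}(p||q)^2 \le \tfrac12\,\text{KL}(p||q)$ at each state; since $D_{TV}$ is symmetric, the arguments can be ordered to produce the forward KL, so $\alpha^2 \le \tfrac12\,\text{KL}_{max}(\pi_{new}^i || \pi_{old}^i)$ and hence $C\alpha^2 \le \tfrac{C}{2}\,\text{KL}_{max}(\pi_{new}^i || \pi_{old}^i)$. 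Substituting this and distributing the nonnegative constant $\frac{2\epsilon\gamma}{1-\gamma}$ over the maximum turns the bound into
\[
Q^{\pi_{new}^i}(s,a) \le Q^{\pi_{old}^i}(s,a) + \max\!\left\{ \tfrac{\epsilon\gamma C}{1-\gamma}\,\text{KL}_{max}(\pi_{new}^i || \pi_{old}^i),\; \tfrac{2\epsilon\gamma}{C(1-\gamma)} \right\}.
\]

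Finally I would set $\rho := \epsilon\gamma C / (\beta(1-\gamma))$ and $d := 2\epsilon\gamma / (\beta C(1-\gamma))$, both strictly positive; the right-hand side then becomes exactly $Q^{\pi_{old}^i}(s,a) + \beta\max\{\rho\,\text{KL}_{max}(\pi_{new}^i || \pi_{old}^i),\, d\}$, which is eq. (\ref{appendix_eq:Lemma5Equation}). The leftover freedom in $C$ yields a one-parameter family of admissible $(\rho, d)$, which is exactly why the lemma asserts the existence of such constants rather than prescribing them, and (together with the $\beta$-adaptation rule of the implementation) is what later lets these constants be matched to those in Assumption \ref{appendix_asmp:2} when Theorem \ref{thm:1} is assembled.

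I expect the main obstacle to be the bookkeeping of constants rather than any single computation: one must check that $\epsilon$ is genuinely finite, that Pinsker is applied in the direction producing $\text{KL}(\pi_{new}^i || \pi_{old}^i)$ and not the reverse KL (which is where the symmetry of $D_{TV}$ rescues the argument and matches the $\text{KL}_{max}$ appearing in the statement), and that the lemma's conclusion is read as an existence statement over $(\rho, d)$ so that no circularity arises when it is later combined with Assumption \ref{appendix_asmp:2}.
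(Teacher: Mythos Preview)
Your proposal is correct and follows essentially the same route as the paper: couple $\pi_{new}^i$ and $\pi_{old}^i$ via the maximum total-variation distance, apply Lemma \ref{appendix_lem:4}, bound $\alpha^2$ by $\text{KL}_{max}(\pi_{new}^i\|\pi_{old}^i)$ via Pinsker, and absorb the remaining constants (including the free $C$ and the factor $\beta$) into $\rho$ and $d$. The only noteworthy difference is that the paper additionally invokes Proposition \ref{appendix_prop:1} to strip the absolute value coming out of Lemma \ref{appendix_lem:4}, whereas you simply use $x\le |x|\le y$, which already suffices; your swapped choice of which policy plays the role of $\pi$ in Lemma \ref{appendix_lem:4} and your Pinsker normalization merely shift the resulting constants and do not affect the argument.
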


\begin{proof} For $\pi_{new}^i$ and $\pi_{old}^i$, take $\alpha$ as the maximum of the total variation divergence between $\pi_{new}^i$ and $\pi_{old}^i$, i.e.,   $\alpha=\max_s D_{TV}(\pi_{new}^i(\cdot | s) || \pi_{old}^i(\cdot | s))$. Let this $\alpha$ value be denoted by $\hat{\alpha}$. Then, by the result of \cite{schulman2015trust} mentioned in the above, $\pi_{new}^i$ and $\pi_{old}^i$ are $\hat{\alpha}$-coupled with  $\hat{\alpha}=\max_s D_{TV}(\pi_{new}^i(\cdot | s) || \pi_{old}^i(\cdot | s))$. Since
	\begin{equation}
	D_{TV}(\pi_{new}^i(\cdot | s) || \pi_{old}^i(\cdot | s))^2 \leq \text{KL}(\pi_{new}^i(\cdot | s) || \pi_{old}^i(\cdot | s)),
	\end{equation}
	by the relationship between the total variation divergence and the KL divergence, we have
	\begin{equation}  \label{appendix_eq:hatalphasqureUB}
	\hat{\alpha}^2 \leq \max_s \text{KL}(\pi_{new}^i(\cdot | s) || \pi_{old}^i(\cdot | s)).
	\end{equation}

	Now, substituting $\pi= \pi_{new}^i$,  $\pi'=\pi_{old}^i$ and $\alpha = \hat{\alpha}$ into eq. (\ref{appendix_eq:lemma4_33}) and applying eq. (\ref{appendix_eq:hatalphasqureUB}), we have
	\begin{equation} \label{appendix_eq:abs_leq_KL}
	\left\vert Q^{\pi_{new}^i}(s,a) - Q^{\pi_{old}^i}(s,a) \right\vert \leq \beta \max\left\{ \rho \text{KL}_{max}\left( \pi_{new}^i || \pi_{old}^i \right), d \right\}
	\end{equation}
	for some $\rho, d > 0$.  Here, proper scaling due to the introduction of $\beta$ is absorbed into $\rho$ and $d$.  That is, $\rho$ can be set as $\frac{2\epsilon \gamma C}{\beta (1-\gamma)}$ and $d$ can be set as $\frac{2\epsilon \gamma }{\beta(1-\gamma)C}$.  Then, by Proposition  \ref{appendix_prop:1}, $\left\vert Q^{\pi_{new}^i}(s,a) - Q^{\pi_{old}^i}(s,a) \right\vert$ in the LHS of eq. (\ref{appendix_eq:abs_leq_KL}) becomes $\left\vert Q^{\pi_{new}^i}(s,a) - Q^{\pi_{old}^i}(s,a) \right\vert =  Q^{\pi_{new}^i}(s,a) - Q^{\pi_{old}^i}(s,a)$. Hence, from this fact and eq. (\ref{appendix_eq:abs_leq_KL}), we have eq. (\ref{appendix_eq:Lemma5Equation}). This concludes proof.
\end{proof}


\begin{myprop} \label{appendix_prop:2}
	With Assumption \ref{appendix_asmp:1}, we have
	\begin{equation}
	 \mbb{E}_{a \sim \pi_{new}^i(\cdot | s)} \left[ Q^{\pi_{new}^i}(s,a) \right] \geq \mbb{E}_{a \sim \pi^b(\cdot | s)} \left[ Q^{\pi_{new}^i}(s,a) \right] + \beta \Delta(s).
	\end{equation}
	where
	\begin{equation}
		\Delta(s) = \left[ \text{KL}\left( \pi_{new}^i(\cdot | s) || \pi^b(\cdot | s) \right) - \max\left\{ \rho \text{KL}_{max}\left( \pi_{new}^i || \pi_{old}^i \right), d \right\} \right]
	\end{equation}
\end{myprop}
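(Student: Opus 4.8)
The plan is to chain three already-established facts: the pointwise optimality of $\pi_{new}^i$ for the augmented loss eq. (\ref{appendix_eq:theoretical_augmented_loss}), the monotonicity $Q^{\pi_{new}^i}(s,a) \ge Q^{\pi_{old}^i}(s,a)$ from Proposition \ref{appendix_prop:1}, and the slack bound $Q^{\pi_{new}^i}(s,a) \le Q^{\pi_{old}^i}(s,a) + \beta \max\{\rho\,\text{KL}_{max}(\pi_{new}^i || \pi_{old}^i), d\}$ from Lemma \ref{appendix_lem:5}.

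First I would reuse step (b) in the proof of Lemma \ref{appendix_lem:1}: since the augmented loss is an $s \sim \mcal{D}$-average of per-state objectives that may be minimized independently, $\pi_{new}^i(\cdot|s)$ beats the feasible competitor $\pi^b(\cdot|s)$ at every $s$, which after rearranging the $\text{KL}$ term gives
\[
\mbb{E}_{a \sim \pi_{new}^i(\cdot|s)}\left[Q^{\pi_{old}^i}(s,a)\right] \ge \mbb{E}_{a \sim \pi^b(\cdot|s)}\left[Q^{\pi_{old}^i}(s,a)\right] + \beta\,\text{KL}\left(\pi_{new}^i(\cdot|s) || \pi^b(\cdot|s)\right).
\]
Next, on the left-hand side I upgrade $Q^{\pi_{old}^i}$ to $Q^{\pi_{new}^i}$ using Proposition \ref{appendix_prop:1}; since the expectation there is taken under $\pi_{new}^i(\cdot|s)$, this only strengthens the bound. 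On the right-hand side I run Lemma \ref{appendix_lem:5} in the opposite direction, writing $Q^{\pi_{old}^i}(s,a) \ge Q^{\pi_{new}^i}(s,a) - \beta\max\{\rho\,\text{KL}_{max}(\pi_{new}^i || \pi_{old}^i), d\}$ and taking $\mbb{E}_{a \sim \pi^b(\cdot|s)}$ of both sides. Substituting these two replacements back into the displayed inequality and collecting the $\beta$ terms yields exactly
\[
\mbb{E}_{a \sim \pi_{new}^i(\cdot|s)}\left[Q^{\pi_{new}^i}(s,a)\right] \ge \mbb{E}_{a \sim \pi^b(\cdot|s)}\left[Q^{\pi_{new}^i}(s,a)\right] + \beta\,\Delta(s),
\]
with $\Delta(s)$ as defined in the statement.

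There is no genuine obstacle here; the only care needed is bookkeeping on the directions of the two $Q$-comparisons — Proposition \ref{appendix_prop:1} tightens the left side for free, whereas Lemma \ref{appendix_lem:5} is invoked precisely to pay for replacing $Q^{\pi_{old}^i}$ by $Q^{\pi_{new}^i}$ on the right at the cost of the $\max\{\cdot\}$ term, which is exactly the quantity subtracted off inside $\Delta(s)$. I would also state explicitly the pointwise-minimization remark, so that step (b) of the proof of Lemma \ref{appendix_lem:1} applies verbatim with $\pi^b(\cdot|s)$ as the competing policy, and note that Assumption \ref{appendix_asmp:1} enters only through its use in Proposition \ref{appendix_prop:1}.
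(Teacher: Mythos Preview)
Your proposal is correct and is essentially the paper's own argument: both proofs chain the same three facts --- Lemma~\ref{appendix_lem:5}, the pointwise optimality of $\pi_{new}^i$ for the augmented objective, and Proposition~\ref{appendix_prop:1} --- the only difference being the order (the paper applies Lemma~\ref{appendix_lem:5} first, then optimality, then Proposition~\ref{appendix_prop:1}, whereas you start from optimality). One minor remark on your closing sentence: Assumption~\ref{appendix_asmp:1} also enters through Lemma~\ref{appendix_lem:5}, since that lemma's proof invokes Proposition~\ref{appendix_prop:1} to drop the absolute value.
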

\begin{proof}
	\begin{align}
	&\mbb{E}_{a \sim \pi^b(\cdot | s)} \left[ -Q^{\pi_{new}^i}(s,a) \right] \\
	&\underset{(a)}{\ge} \mbb{E}_{a \sim \pi^b(\cdot | s)} \left[ -Q^{\pi_{old}^i}(s,a) \right] -\beta \max\left\{ \rho \text{KL}_{max}\left( \pi_{new}^i || \pi_{old}^i \right), d \right\}  \\
	&= \mbb{E}_{a \sim \pi^b(\cdot | s)} \left[ -Q^{\pi_{old}^i}(s,a) + \beta \log{\frac{\pi^b(a|s)}{\pi^b(a|s)}} \right] -\beta \max\left\{ \rho \text{KL}_{max}\left( \pi_{new}^i || \pi_{old}^i \right), d \right\} \\
	&\underset{(b)}{\ge} \mbb{E}_{a \sim \pi_{new}^i(\cdot | s)} \left[ -Q^{\pi_{old}}(s,a) + \beta \log{\frac{\pi_{new}^i(a|s)}{\pi^b(a|s)}} \right] -\beta \max\left\{ \rho \text{KL}_{max}\left( \pi_{new}^i || \pi_{old}^i \right), d \right\} \\
	&= \mbb{E}_{a \sim \pi_{new}^i(\cdot | s)} \left[ -Q^{\pi_{old}}(s,a) \right] + \beta \text{KL}\left( \pi_{new}^i(\cdot | s) || \pi^b(\cdot | s) \right) -\beta \max\left\{ \rho \text{KL}_{max}\left( \pi_{new}^i || \pi_{old}^i \right), d \right\} \\
	&= \mbb{E}_{a \sim \pi_{new}^i(\cdot | s)}\left[ -Q^{\pi_{old}^i}(s,a) \right] +\beta \left[ \text{KL}\left( \pi_{new}^i(\cdot | s) || \pi^b(\cdot | s) \right) - \max\left\{ \rho \text{KL}_{max}\left( \pi_{new}^i || \pi_{old}^i \right), d \right\} \right] \label{appendix_eq:Prop2_proof_51} \\
	&= \mbb{E}_{a \sim \pi_{new}^i(\cdot | s)}\left[ -Q^{\pi_{old}^i}(s,a) \right] + \beta \Delta(s)\\
	&\underset{(c)}{\ge} \mbb{E}_{a \sim \pi_{new}^i(\cdot | s)} \left[- Q^{\pi_{new}^i}(s,a) \right] + \beta \Delta(s),
	\end{align}
	where step (a) is valid due to Lemma \ref{appendix_lem:5}, step (b) is valid due to the definition of $\pi_{new}^i$, and  step (c) is valid due to
	Proposition \ref{appendix_prop:1}.
\end{proof}


Finally,  we  prove Theorem \ref{appendix_thm:1}.

\begin{mythm}\label{appendix_thm:1}
	Under Assumptions 1 and 2, the following inequality holds:
	\begin{equation}  \label{appendix_eq:Theorem1appendEq}
		Q^{\pi_{new}^i}(s, a) \ge  Q^{\pi^b}(s, a) + \beta \mbb{E}_{s_{t+1}:s_{\infty} \sim \pi^b}\left[ \sum_{k=t+1}^\infty \gamma^{k-t} \Delta(s_{k}) \right] \ge  Q^{\pi^b}(s, a), ~~~\forall (s,a),\forall i \ne b.
	\end{equation}
	where
	\begin{equation}
		\Delta(s) = \left[ \text{KL}\left( \pi_{new}^i(\cdot | s) || \pi^b(\cdot | s) \right) - \max\left\{ \rho \text{KL}_{max}\left( \pi_{new}^i || \pi_{old}^i \right), d \right\} \right]
	\end{equation}
\end{mythm}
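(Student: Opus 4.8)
The plan is to treat the two inequalities separately. Inequality (b) is essentially free: Assumption 2 says exactly that $\text{KL}\left(\pi_{new}^i(\cdot|s)\,||\,\pi^b(\cdot|s)\right)\ge\max\left\{\rho\,\text{KL}_{max}\left(\pi_{new}^i\,||\,\pi_{old}^i\right),d\right\}$ for every $s$, i.e. $\Delta(s)\ge 0$ for all $s$; hence the improvement-gap term $\beta\,\mathbb{E}_{s_{t+1}:s_\infty\sim\pi^b}\!\left[\sum_{k=t+1}^\infty\gamma^{k-t}\Delta(s_k)\right]$ is a sum of non-negative quantities and may simply be dropped. So the real work is inequality (a), which I would prove using only Assumption 1, exploiting it through Proposition 2 (which itself rests on Proposition 1 and Lemma 5).

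For (a), I would unroll $Q^{\pi_{new}^i}(s_t,a_t)$ along a trajectory generated by the previous best policy $\pi^b$, in the same telescoping spirit as the proof of Proposition 1 but with Proposition 2 in the role that Lemma 1 played there. Start from the Bellman equation $Q^{\pi_{new}^i}(s_t,a_t)=r(s_t,a_t)+\gamma\,\mathbb{E}_{s_{t+1}\sim p(\cdot|s_t,a_t)}\!\left[V^{\pi_{new}^i}(s_{t+1})\right]$; Proposition 2 applied at $s_{t+1}$ gives $V^{\pi_{new}^i}(s_{t+1})=\mathbb{E}_{a\sim\pi_{new}^i}\!\left[Q^{\pi_{new}^i}(s_{t+1},a)\right]\ge\mathbb{E}_{a_{t+1}\sim\pi^b}\!\left[Q^{\pi_{new}^i}(s_{t+1},a_{t+1})\right]+\beta\Delta(s_{t+1})$. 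Expanding the inner $Q^{\pi_{new}^i}(s_{t+1},a_{t+1})$ by its own Bellman equation and applying Proposition 2 again at $s_{t+2}$, and iterating, one obtains after $n$ steps a lower bound whose reward terms accumulate along the $\pi^b$-trajectory and which additionally collects $\beta\sum_{j=1}^{n}\gamma^{j}\,\mathbb{E}[\Delta(s_{t+j})]$. Letting $n\to\infty$ (valid since rewards are bounded and $\gamma<1$), the reward part recombines into $Q^{\pi^b}(s_t,a_t)$ — because after the fixed first action $a_t$ the remaining trajectory is generated by $\pi^b$ — and the $\Delta$-part is exactly $\beta\,\mathbb{E}_{s_{t+1}:s_\infty\sim\pi^b}\!\left[\sum_{k=t+1}^\infty\gamma^{k-t}\Delta(s_k)\right]$. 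Renaming $(s_t,a_t)$ as the generic $(s,a)$ gives (a). A one-shot alternative is to invoke Lemma 2 with $\pi'=\pi^b$, $\pi=\pi_{new}^i$, giving $Q^{\pi^b}(s,a)-Q^{\pi_{new}^i}(s,a)=\gamma\,\mathbb{E}_{\tau\sim\pi^b}\!\left[\sum_{t\ge 1}\gamma^{t-1}A^{\pi_{new}^i}(s_t,a_t)\right]$, and then to bound the inner conditional expectation $\mathbb{E}_{a_t\sim\pi^b(\cdot|s_t)}\!\left[A^{\pi_{new}^i}(s_t,a_t)\right]=\mathbb{E}_{a\sim\pi^b}\!\left[Q^{\pi_{new}^i}(s_t,a)\right]-V^{\pi_{new}^i}(s_t)\le-\beta\Delta(s_t)$ by Proposition 2.

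The main obstacle is bookkeeping rather than a genuine difficulty: one has to keep the nested expectations over the $\pi^b$-generated trajectory straight — conditioning on $s_k$ before taking the expectation over $a_k\sim\pi^b(\cdot|s_k)$ so that Proposition 2 is applicable at each step — and justify exchanging the limit $n\to\infty$ with the expectation (which the boundedness of rewards and $\gamma<1$ make routine). It is also worth recording, as the final sentence of the statement does, where the hypotheses enter: every step in the derivation of (a) uses only Assumption 1 (through Proposition 2), while Assumption 2 is used exclusively to conclude $\Delta(s)\ge 0$ and hence (b).
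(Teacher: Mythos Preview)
Your proposal is correct and matches the paper's proof essentially step for step: the paper proves inequality (a) by recursively unrolling $Q^{\pi_{new}^i}(s_t,a_t)$ via the Bellman equation and applying Proposition~2 at each $s_{t+k}$ to switch the action distribution from $\pi_{new}^i$ to $\pi^b$ while collecting the $\beta\gamma^{k}\Delta(s_{t+k})$ terms, then invokes Assumption~2 solely to conclude $\Delta(s)\ge 0$ and hence (b). Your alternative route through Lemma~2 is not what the paper does, but it is a valid shortcut that arrives at the same bound in one line rather than by explicit telescoping.
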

\begin{proof}[Proof of Theorem \ref{appendix_thm:1}:] Proof of Theorem  \ref{appendix_thm:1} is by recursive application of Proposition \ref{appendix_prop:2}.
	For arbitrary $s_t$ and $a_t$,
	\begingroup
	\allowdisplaybreaks
	\begin{align}
&		Q^{\pi_{new}^i}(s_t,a_t) \nonumber\\
 &= r(s_t, a_t) + \gamma \mbb{E}_{s_{t+1} \sim p(\cdot|s_t,a_t)}\left[ \mbb{E}_{a_{t+1} \sim \pi_{new}^i} \left[ Q^{\pi_{new}^i}(s_{t+1}, a_{t+1}) \right] \right] \\
		&\underset{(a)}{\geq} r(s_t, a_t) + \gamma \mbb{E}_{s_{t+1} \sim p(\cdot|s_t,a_t)}\left[ \mbb{E}_{a_{t+1} \sim \pi^b} \left[ Q^{\pi_{new}^i}(s_{t+1}, a_{t+1}) \right] + \beta \Delta(s_{t+1}) \right] \\
		&= \mbb{E}_{s_{t+1}:s_{t+2} \sim \pi^b}\left[ r(s_t, a_t) + \gamma r(s_{t+1}, a_{t+1}) + \gamma^2 \mbb{E}_{a_{t+2} \sim \pi_{new}^i} \left[ Q^{\pi_{new}^i}(s_{t+2}, a_{t+2}) \right] \right] \nonumber \\
		& ~~ + \beta \mbb{E}_{s_{t+1}:s_{t+2} \sim \pi^b}\left[ \gamma \Delta(s_{t+1}) \right]\\
& \nonumber\\		
&\underset{(b)}{\geq} \mbb{E}_{s_{t+1}:s_{t+2} \sim \pi^b}\left[ r(s_t, a_t) + \gamma r(s_{t+1}, a_{t+1}) + \gamma^2 \mbb{E}_{a_{t+2} \sim \pi^b} \left[ Q^{\pi_{new}^i}(s_{t+2}, a_{t+2}) \right] + \beta \gamma^2 \Delta(s_{t+2}) \right] \nonumber \\
		& ~~ + \beta \mbb{E}_{s_{t+1}:s_{t+2} \sim \pi^b}\left[ \gamma \Delta(s_{t+1}) \right]\\
		&\geq \ldots \nonumber\\
		&\geq \mbb{E}_{s_{t+1}:s_{\infty} \sim \pi^b}\left[ \sum_{k=t}^{\infty} \gamma^{k-t} r(s_k, a_k) \right] + \beta \mbb{E}_{s_{t+1}:s_{\infty} \sim \pi^b}\left[ \sum_{k=t+1}^{\infty} \gamma^{k-t} \Delta(s_k) \right]\\
		&= Q^{\pi^b}(s_t, a_t) + \beta \mbb{E}_{s_{t+1}:s_{\infty} \sim \pi^b}\left[ \sum_{k=t+1}^{\infty} \gamma^{k-t} \Delta(s_k) \right]   \label{eq:ProofTheo1Last}
	\end{align}
	\endgroup
	where steps (a) and (b)  hold because of Proposition \ref{appendix_prop:2}. Assumption 2 ensures
	\begin{equation}
		\Delta(s) = \left[ \text{KL}\left( \pi_{new}^i(\cdot | s) || \pi^b(\cdot | s) \right) - \max\left\{ \rho \text{KL}_{max}\left( \pi_{new}^i || \pi_{old}^i \right), d \right\} \right] \geq 0, ~~~\forall s.
	\end{equation}
Hence,  the second term in  (\ref{eq:ProofTheo1Last}) is non-negative. 
	Therefore, we have
	\begin{eqnarray}
		Q^{\pi_{new}^i}(s_t,a_t) &\geq& Q^{\pi^b}(s_t, a_t) + \beta \mbb{E}_{s_{t+1}:s_{\infty} \sim \pi^b}\left[ \sum_{k=t+1}^{\infty} \gamma^{k-t} \Delta(s_k) \right]\\ 
&\geq& Q^{\pi^b}(s_t, a_t)
	\end{eqnarray}
\end{proof}

\newpage

\section*{Appendix B. Intuition of the implementation of $\beta$ adaptation}
\label{sec:beta_adaptation}

Due to Theorem 1, we have
	\begin{equation}  \label{eq:AppendBeq1}
		Q^{\pi_{new}^i}(s_t,a_t) \geq Q^{\pi^b}(s_t, a_t) + \underbrace{\beta \mbb{E}_{s_{t+1}:s_{\infty} \sim \pi^b}\left[ \sum_{k=t+1}^{\infty} \gamma^{k-t} \Delta(s_k) \right]}_{~\text{Improvement~gap}} 
	\end{equation}
where
	\begin{equation}  \label{eq:AppendBeq2}
		\Delta(s) = \left[ \text{KL}\left( \pi_{new}^i(\cdot | s) || \pi^b(\cdot | s) \right) - \max\left\{ \rho \text{KL}_{max}\left( \pi_{new}^i || \pi_{old}^i \right), d \right\} \right] \geq 0, ~~~\forall s.
	\end{equation}
In deriving eqs. (\ref{eq:AppendBeq1})  and (\ref{eq:AppendBeq2}), we only used Assumption 1. When we have Assumption 2, the improvement gap term in (\ref{eq:AppendBeq1}) becomes non-negative and we have 
\begin{equation}  \label{eq:AppendBeq3}
		Q^{\pi_{new}^i}(s_t,a_t) \geq Q^{\pi^b}(s_t, a_t)
\end{equation}
as desired. However, in practice, Assumption 2 should be implemented so that the  improvement gap term becomes non-negative and we have the desired result (\ref{eq:AppendBeq3}).  The implementation of the condition is through adaptation of $\beta$.
 We adapt $\beta$ to maximize the improvement gap $\beta \mbb{E}_{s_{t+1}:s_{\infty} \sim \pi^b}\left[ \sum_{k=t+1}^{\infty} \gamma^{k-t} \Delta(s_k) \right]$ in (\ref{eq:AppendBeq1}) for given $\rho$ and $d$. Let us  denote $\mbb{E}_{s_{t+1}:s_{\infty} \sim \pi^b}\left[ \sum_{k=t+1}^{\infty} \gamma^{k-t} \Delta(s_k) \right]$ by  $\bar{\Delta}$. Then,  the improvement gap is given by $\beta \bar{\Delta}$.  Note that $\bar{\Delta}$ is  the average (with forgetting) of $\Delta(s_k)$ by using samples from $\pi^b$.
  The gradient of the improvement gap with respect to $\beta$ is given by 
\begin{equation}
	\nabla_{\beta} (\beta \bar{\Delta}) = \bar{\Delta}.
\end{equation}
Thus,  if $\bar{\Delta} > 0$, i.e., $\text{KL}\left( \pi_{new}^i(\cdot | s) || \pi^b(\cdot | s) \right) >  \max\left\{ \rho \text{KL}_{max}\left( \pi_{new}^i || \pi_{old}^i \right), d \right\}$  on average, then $\beta$ should be increased to maximize the performance gain. On the other hand, if $\bar{\Delta} < 0$, i.e., $\text{KL}\left( \pi_{new}^i(\cdot | s) || \pi^b(\cdot | s) \right) \le  \max\left\{ \rho \text{KL}_{max}\left( \pi_{new}^i || \pi_{old}^i \right), d \right\}$  on average, then  $\beta$ should be decreased.
 Therefore, we adapt $\beta$ as follows:
\begin{equation}
	\beta = \left\{ \begin{array}{ll}
		\beta \leftarrow 2\beta  &\text{ if } ~~\widehat{D}_{spread} > \max\{ \rho  \widehat{D}_{change}, d_{min} \}  \times 1.5 \\
		\beta \leftarrow \beta / 2 &\text{ if } ~~\widehat{D}_{spread} < \max\{ \rho  \widehat{D}_{change}, d_{min} \} / 1.5
	\end{array} \right. .
\end{equation}
where $\widehat{D}_{spread}$ and $\widehat{D}_{change}$ are implementations of $\text{KL}\left( \pi_{new}^i(\cdot | s) || \pi^b(\cdot | s) \right)$ and $\text{KL}_{max}\left( \pi_{new}^i || \pi_{old}^i \right)$, respectively.

\newpage
\section*{Appendix C. Comparison to Baselines on Delayed MuJoCo Environments}
\label{sec:additional_results}

In this section, we provide the learning curves of the state-of-the-art single-learner baselines on delayed MuJoCo environments. Fig. \ref{fig:baselines_delayed_mujco} shows the learning curves of P3S-TD3 algorithm and the single-learner baselines on the four delayed MuJoCo environments: Delayed Hopper-v1, Delayed Walker2d-v1,  Delayed HalfCheetah-v1, and Delayed Ant-v1.  
It is observed that in the delayed Ant-v1 environment, ACKTR outperforms the P3S-TD3 algorithm. It is also observed that P3S-TD3 significantly outperforms all baselines on all other environments than the delayed Ant-v1 environment.

\begin{figure*}[!h]
	\begin{center}
		\begin{subfigure}[b]{0.4\textwidth}
			\includegraphics[width=\textwidth]{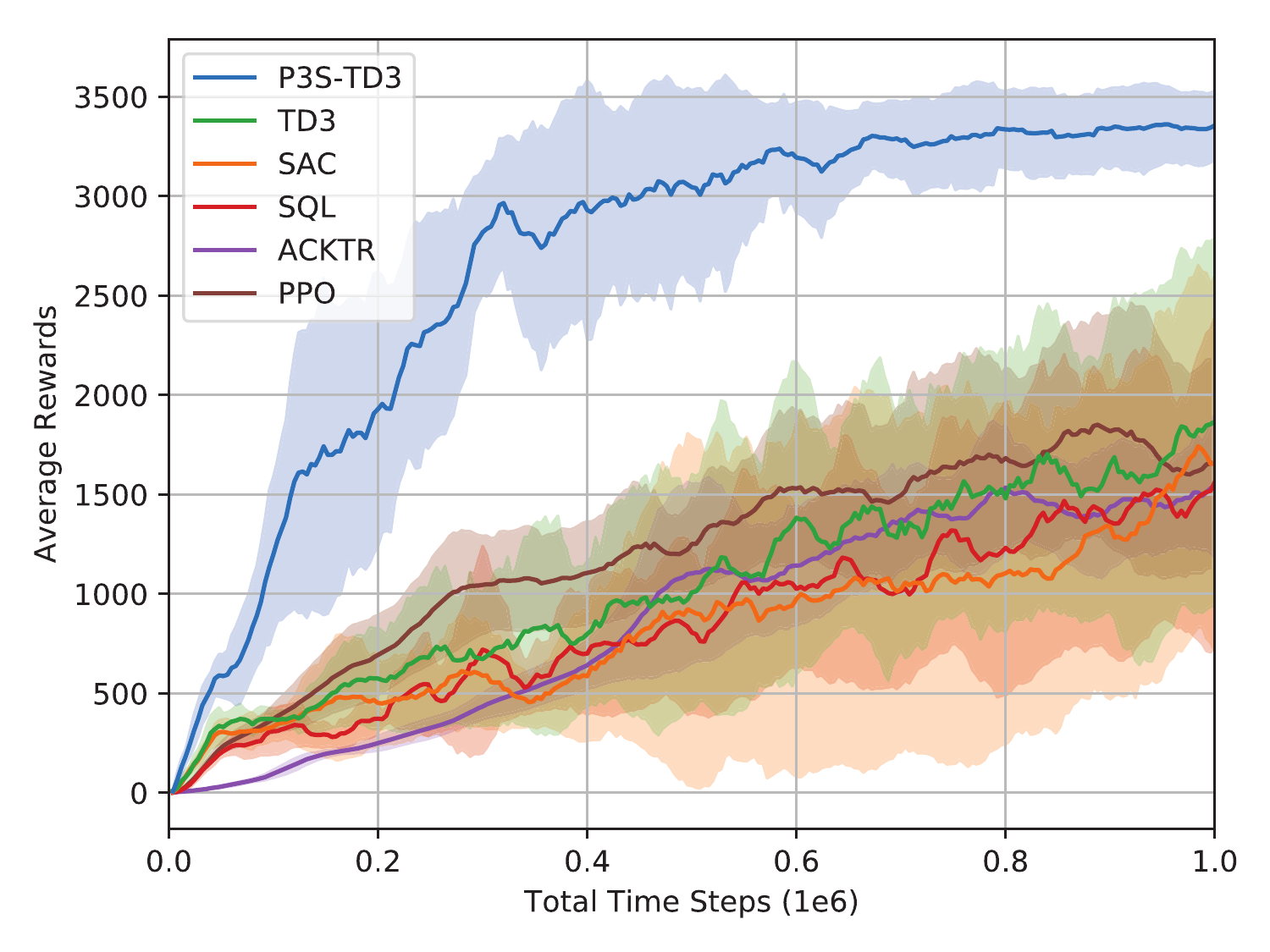}
			\caption{Delayed Hopper-v1}
			\label{fig:different_parallel_methods_hopper_delay20}
		\end{subfigure}
		\begin{subfigure}[b]{0.4\textwidth}
			\includegraphics[width=\textwidth]{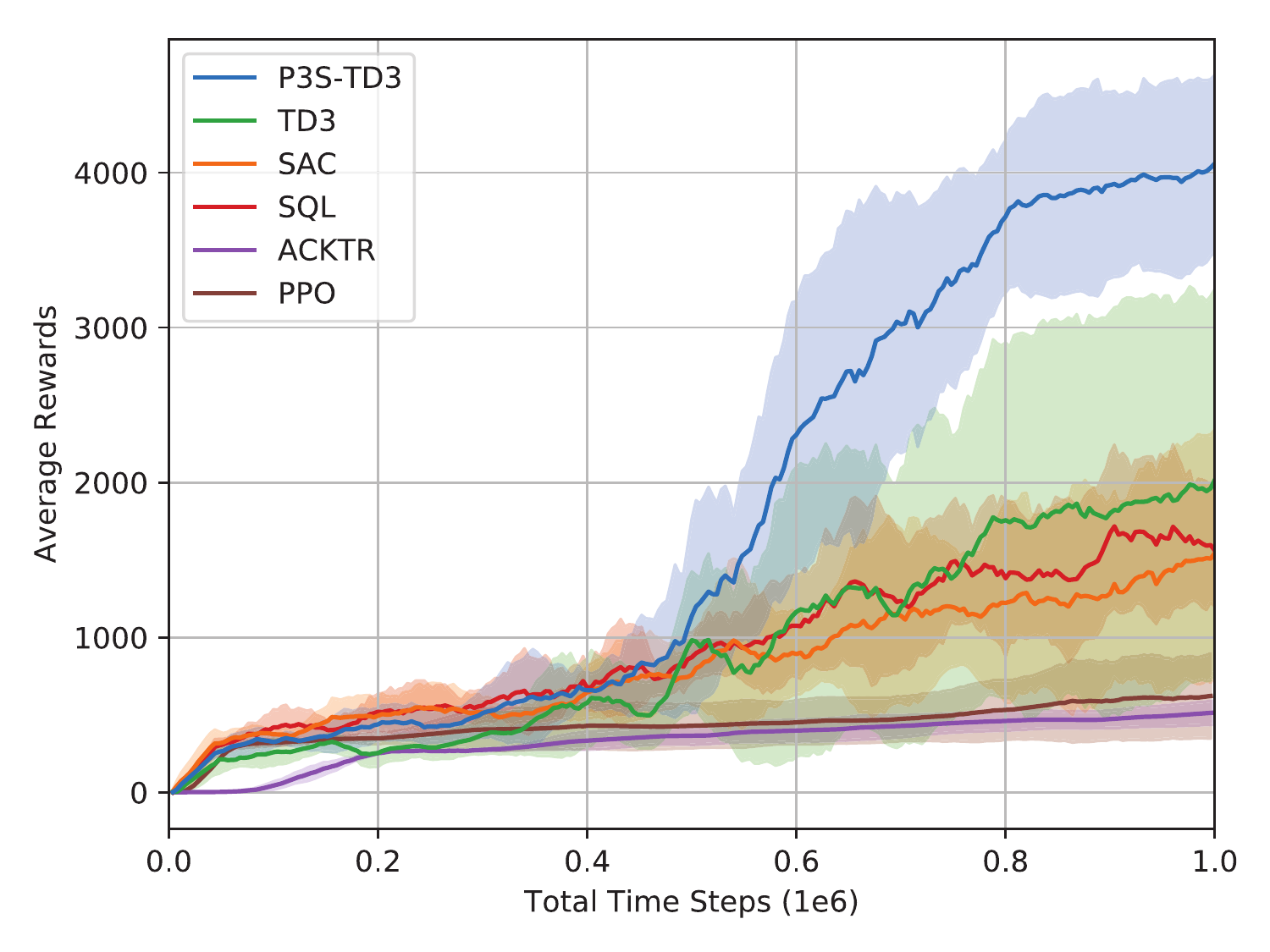}
			\caption{Delayed Walker2d-v1}
			\label{fig:different_parallel_methods_walker_delay20}
		\end{subfigure}
		\begin{subfigure}[b]{0.4\textwidth}
			\includegraphics[width=\textwidth]{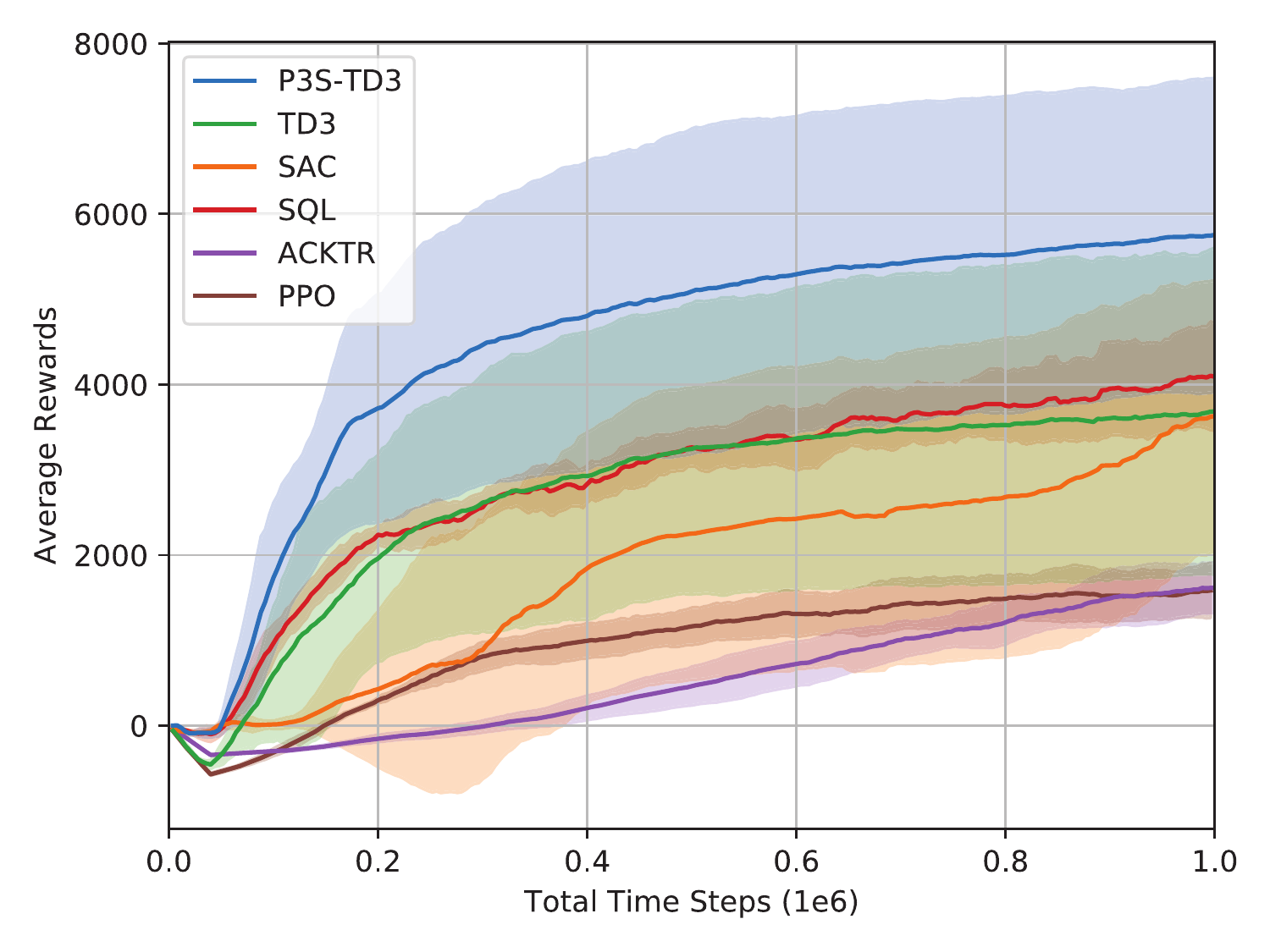}
			\caption{Delayed HalfCheetah-v1}
			\label{fig:different_parallel_methods_halfcheetah_delay20}
		\end{subfigure}
		\begin{subfigure}[b]{0.4\textwidth}
			\includegraphics[width=\textwidth]{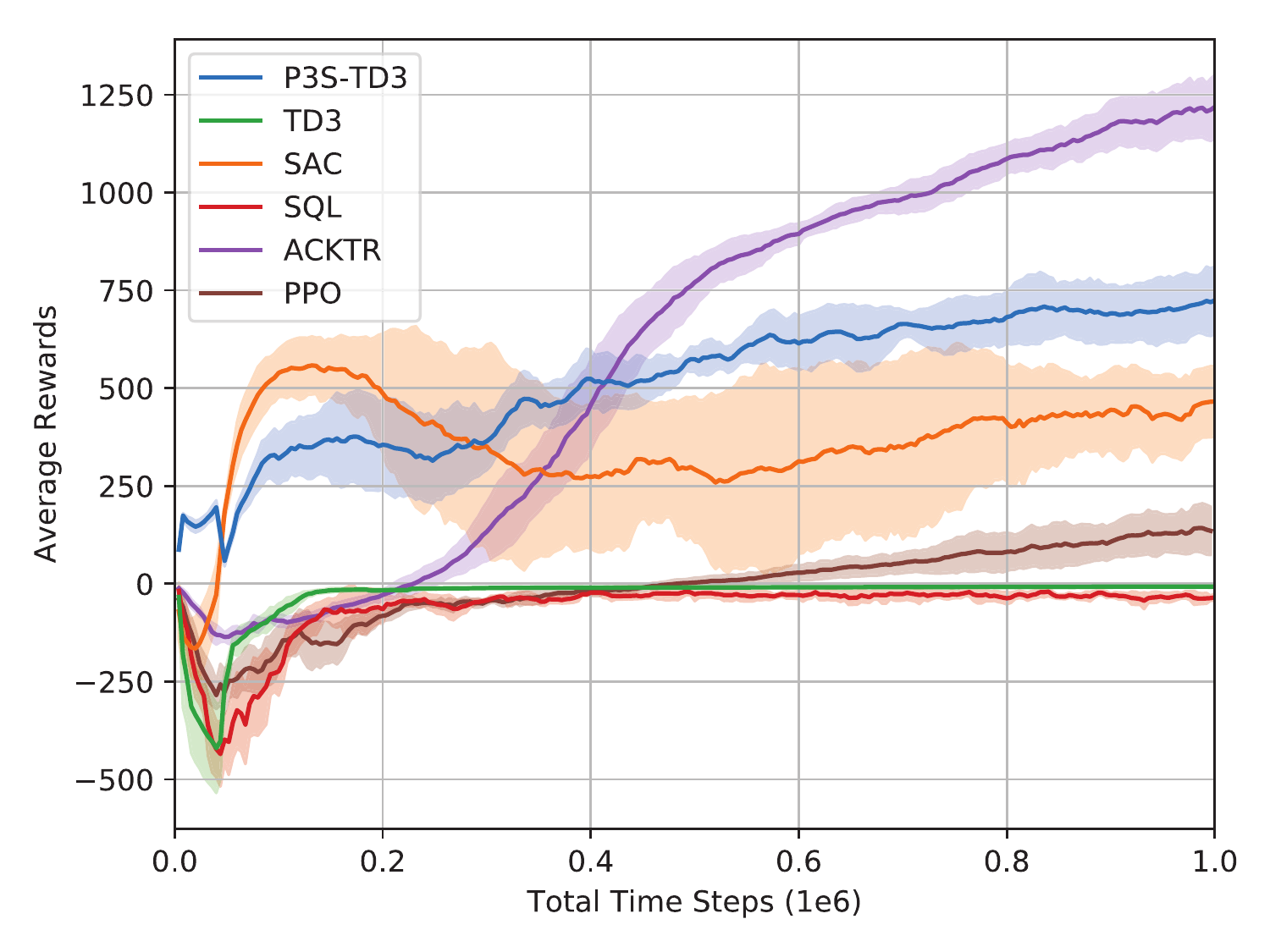}
			\caption{Delayed Ant-v1}
			\label{fig:different_parallel_methods_ant_delay20}
		\end{subfigure}
	\end{center}
	\caption{Performance for PPO (brown), ACKTR (purple), (clipped double Q) SAC (orange), TD3 (green), and P3S-TD3 (proposed method, blue) on the four delayed MuJoCo tasks with $f_{reward} = 20$.}
	\label{fig:baselines_delayed_mujco}
\end{figure*}

\newpage

\section*{Appendix D. Comparison to CEM-TD3}

In this section, we compare the performance of TD3 and P3S-TD3  with CEM-TD3 (\cite{pourchot2018cemrl}), which is a state-of-the-art evolutionary algorithm. CEM-TD3 uses a population to search a better policy as other evolutionary algorithms do. The operation of CEM-TD3 is described as follows:
\begin{enumerate}
	\item It first samples $N$ policies by drawing policy parameters from a Gaussian distribution. 
	\item It randomly selects  a half of the population. The selected policies and a common Q function are updated based on minibatches drawn from a common replay buffer. 
	\item Both the updated selected policies and the unselected policies are evaluated and the experiences during the evaluation are stored in the common replay buffer.
	\item After evaluation of all $N$ policies, it takes the best $N/2$ policies and updates the mean and variance of the policy parameter distribution  as the mean and variance of the parameters of the best $N/2$ policies.       
	\item Steps 1 to 4 are iterated  until the time steps reach maximum.
\end{enumerate}

For the performance comparision, we used the implementation of CEM-TD3 in the original paper (\cite{pourchot2018cemrl})\footnote{The code is available at https://github.com/apourchot/CEM-RL.} with the hyper-parameters provided therein.  Table \ref{table:ablation_comparison_with_cem} shows the steady state performance on MuJoCo and delayed MuJoCo environments. Fig. \ref{fig:ablation_comparison_with_cem} in the next page shows  the learning curves on MuJoCo  and delayed MuJoCo environments.
It is seen that P3S-TD3 outperforms CEM-TD3 on all environments except delayed HalfCheetah-v1. Notice that P3S-TD3 significantly outperforms CEM-TD3 on delayed Walker2d-v1 and delayed Ant-v1.

\begin{table}[h]
	\caption{Steady state performance of P3S-TD3, CEM-TD3, and TD3}
	\label{table:ablation_comparison_with_cem}
	\centering
	\begin{tabular}{lrrr} \toprule
		{\centering \bfseries Environment}  &\multicolumn{1}{c}{\bfseries P3S-TD3} &\multicolumn{1}{c}{\bfseries CEM-TD3} &\multicolumn{1}{c}{\bfseries TD3}
		\\
		\midrule
		Hopper-v1		&{\bfseries 3705.92}	&3686.08	&2555.85 \\
		Walker2d-v1		&{\bfseries 4953.00} 	&4819.40	&4455.51 \\
		HalfCheetah-v1	&{\bfseries 11961.44}	&11417.73	&9695.92 \\
		Ant-v1			&{\bfseries 5339.66}	&4379.73	&3760.50 \\
		\midrule
		Delayed Hopper-v1		&{\bfseries 3355.53}	&3117.20	&1866.02 \\
		Delayed Walker2d-v1		&{\bfseries 4058.85}	&1925.63	&2016.48 \\
		Delayed HalfCheetah-v1	&5754.80	&{\bfseries 6389.40}	&3684.28 \\
		Delayed Ant-v1			&{\bfseries 724.50}		&70.44		&-7.45 \\
		\bottomrule
	\end{tabular}
\end{table}

\newpage
\begin{figure*}[!h]
	\begin{center}
		\begin{subfigure}[b]{0.4\textwidth}
			\includegraphics[width=\textwidth]{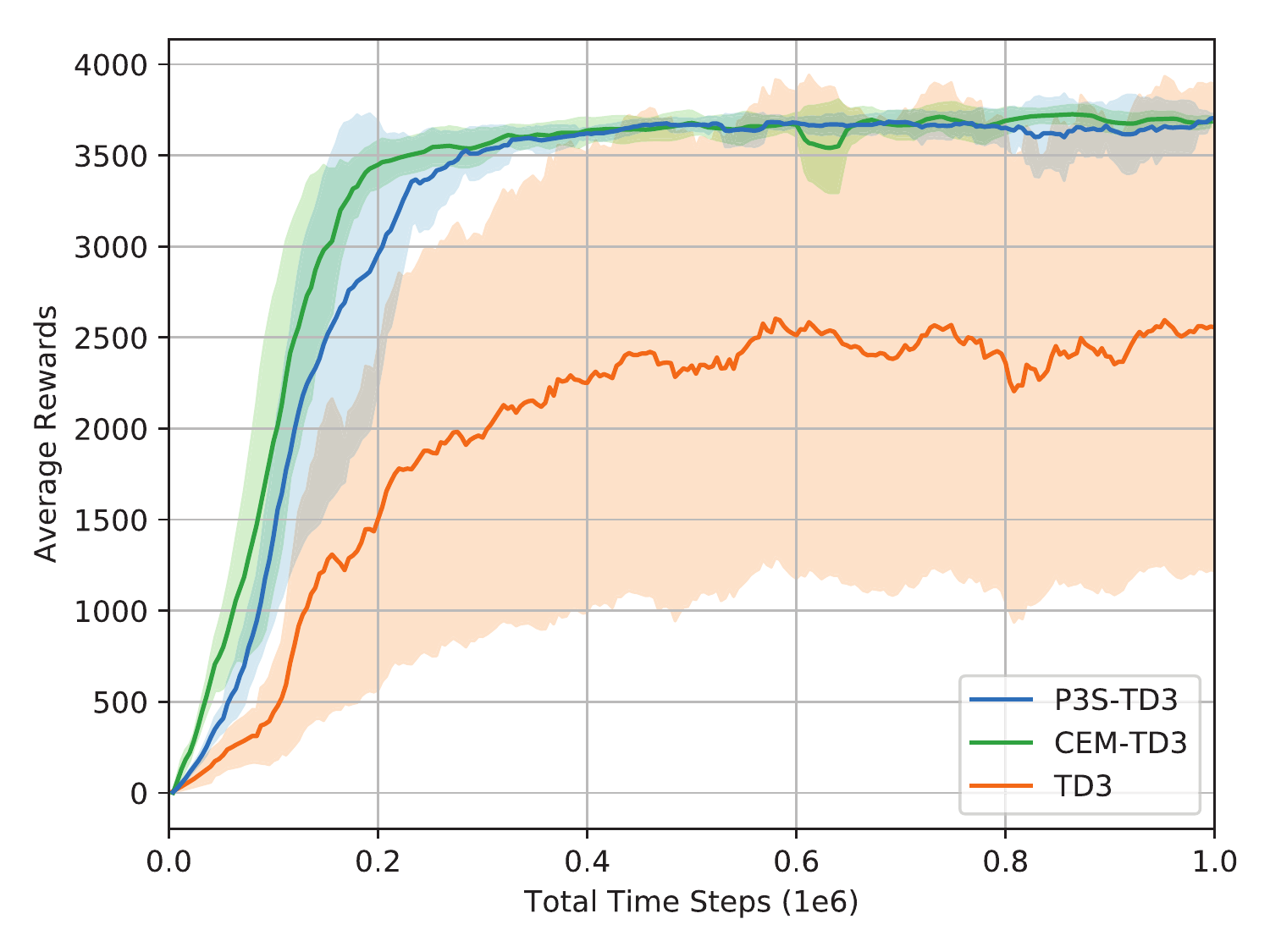}
			\caption{Hopper-v1}
			\label{fig:ablation_comparison_with_cem_hopper}
		\end{subfigure}
		\begin{subfigure}[b]{0.4\textwidth}
			\includegraphics[width=\textwidth]{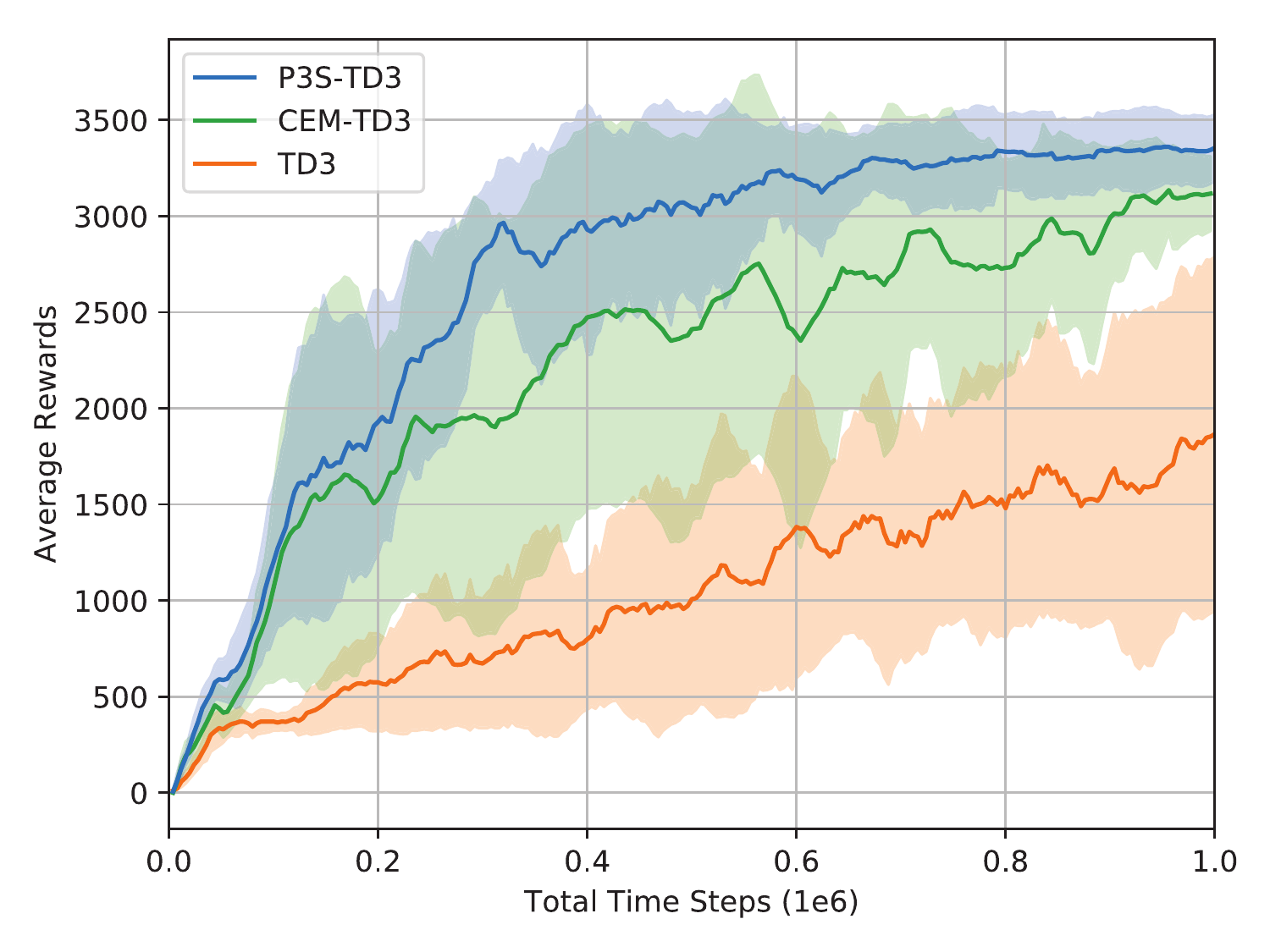}
			\caption{Delayed Hopper-v1}
			\label{fig:ablation_comparison_with_cem_hopper_delay20}
		\end{subfigure}
		\begin{subfigure}[b]{0.4\textwidth}
			\includegraphics[width=\textwidth]{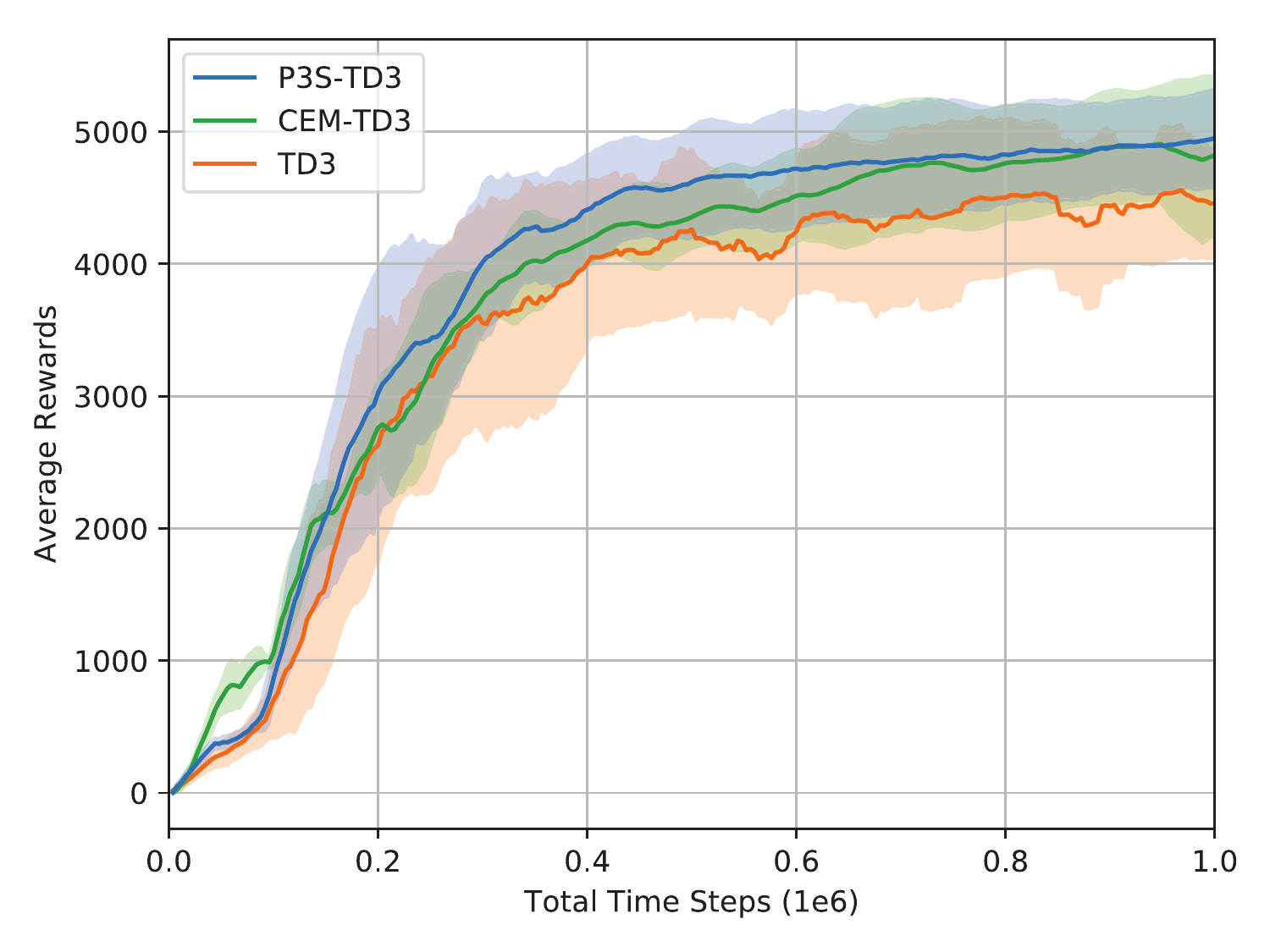}
			\caption{Walker2d-v1}
			\label{fig:ablation_comparison_with_cem_walker}
		\end{subfigure}
		\begin{subfigure}[b]{0.4\textwidth}
			\includegraphics[width=\textwidth]{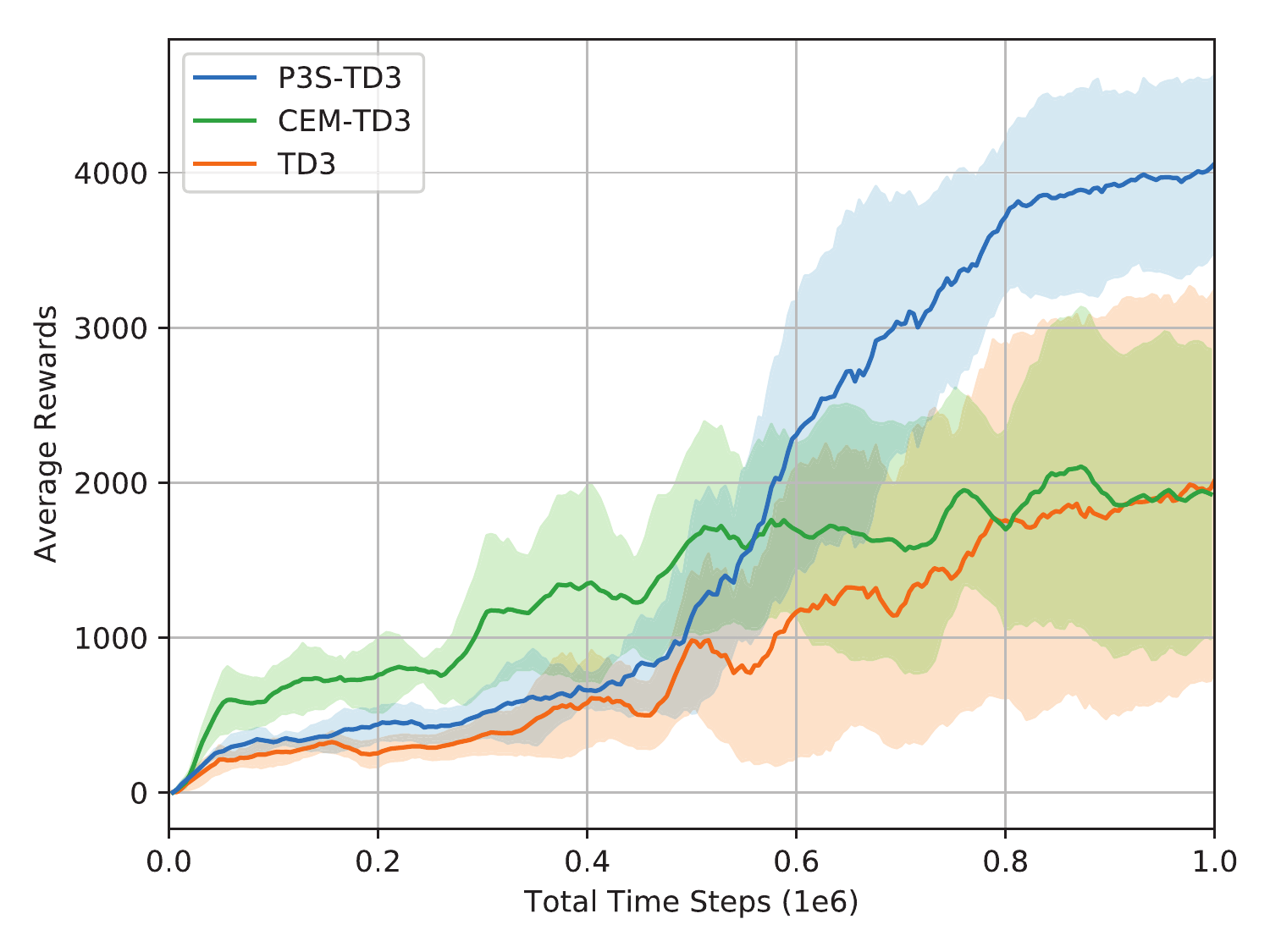}
			\caption{Delayed Walker2d-v1}
			\label{fig:ablation_comparison_with_cem_walker_delay20}
		\end{subfigure}
		\begin{subfigure}[b]{0.4\textwidth}
			\includegraphics[width=\textwidth]{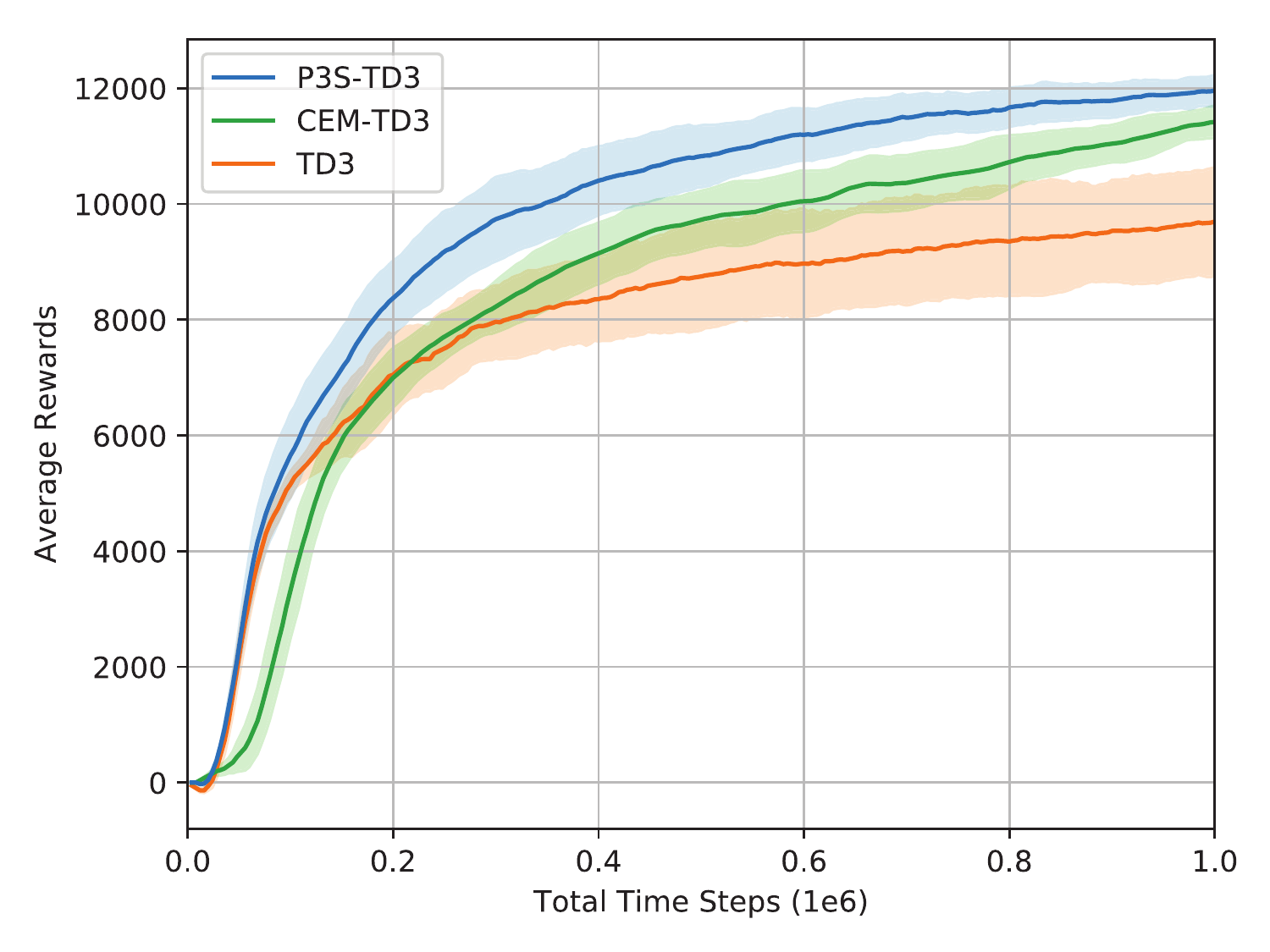}
			\caption{HalfCheetah-v1}
			\label{fig:ablation_comparison_with_cem_halfcheetah}
		\end{subfigure}
		\begin{subfigure}[b]{0.4\textwidth}
			\includegraphics[width=\textwidth]{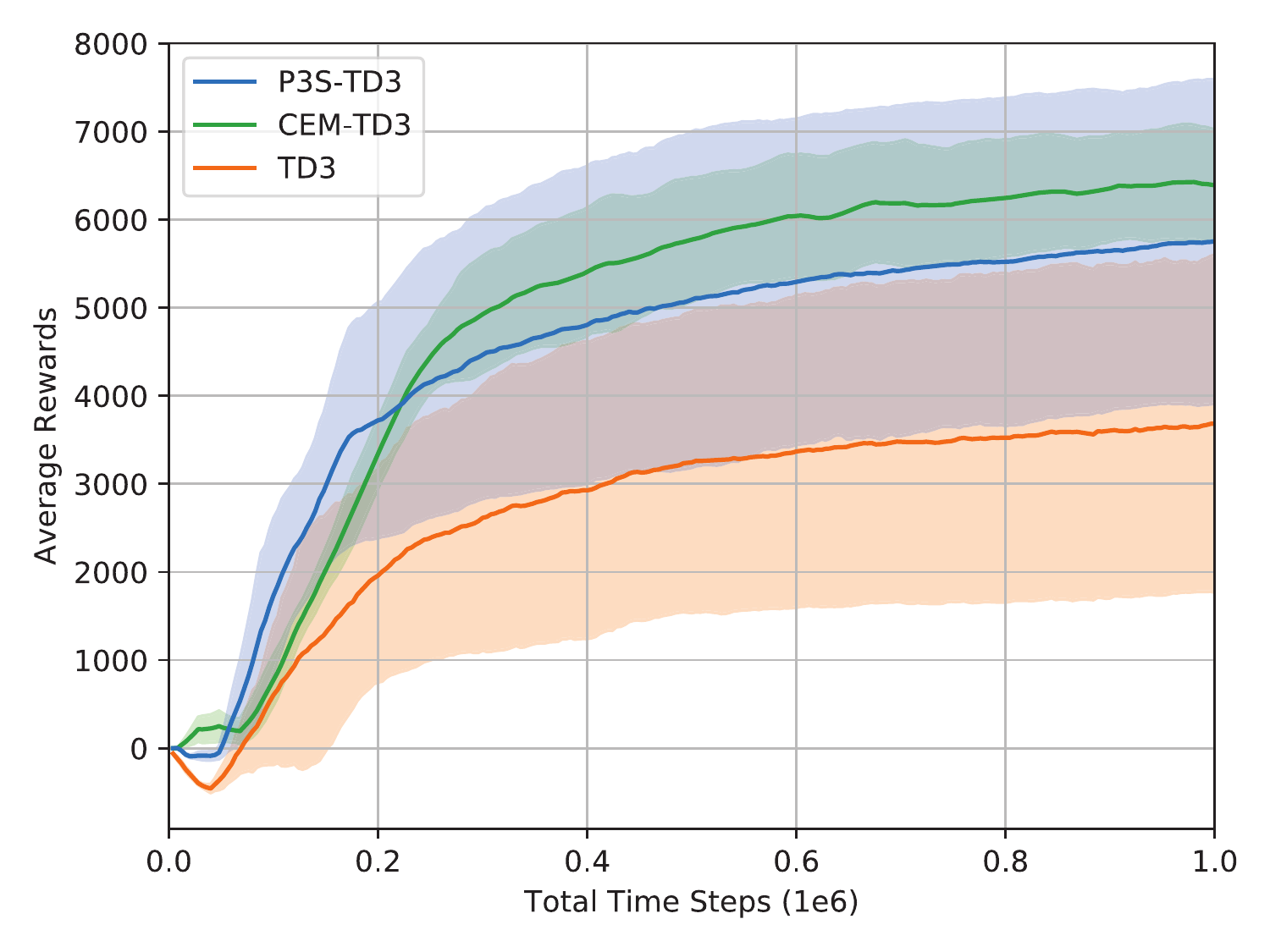}
			\caption{Delayed HalfCheetah-v1}
			\label{fig:ablation_comparison_with_cem_halfcheetah_delay20}
		\end{subfigure}
		\begin{subfigure}[b]{0.4\textwidth}
			\includegraphics[width=\textwidth]{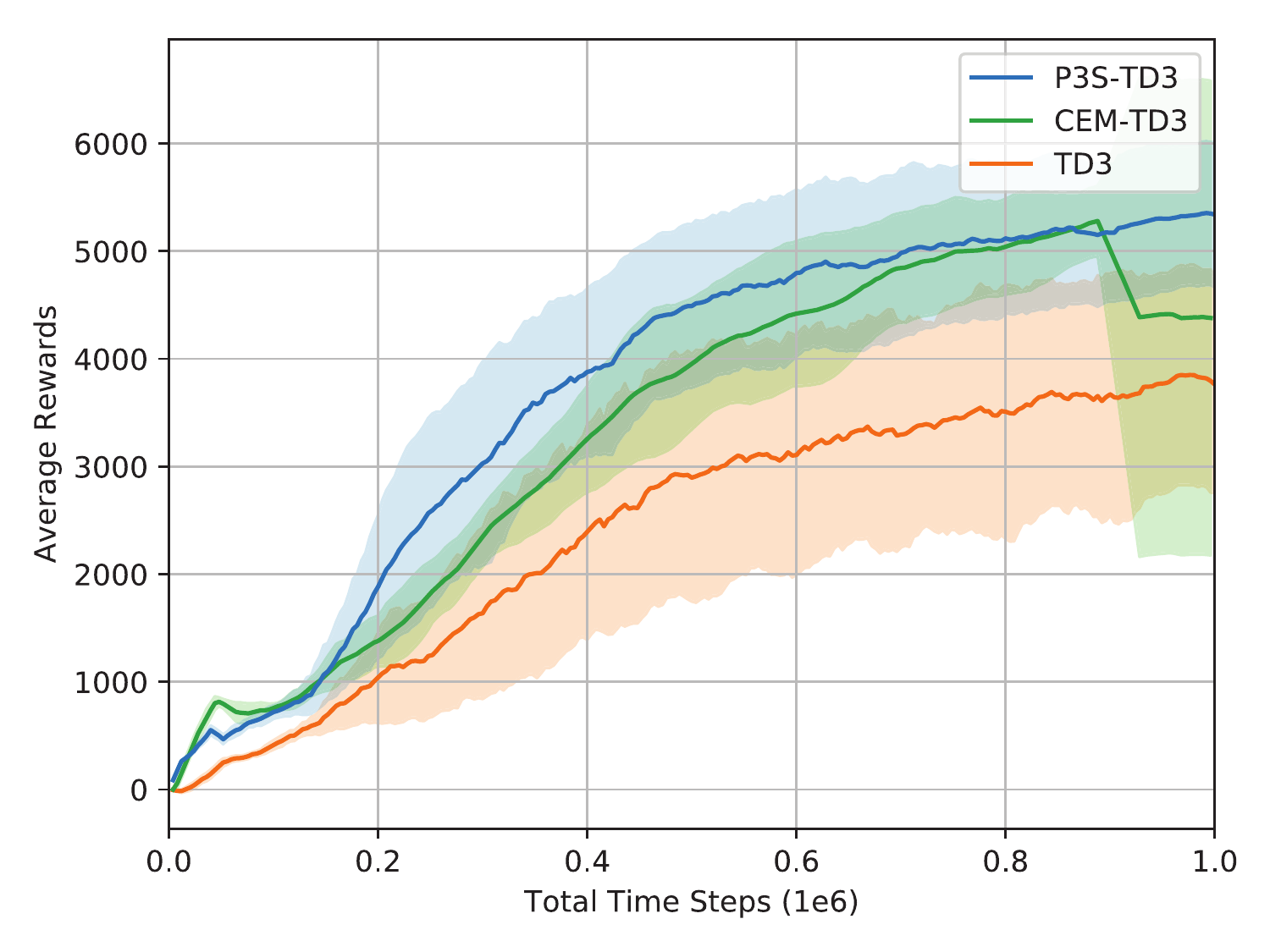}
			\caption{Ant-v1}
			\label{fig:ablation_comparison_with_cem_ant}
		\end{subfigure}
		\begin{subfigure}[b]{0.4\textwidth}
			\includegraphics[width=\textwidth]{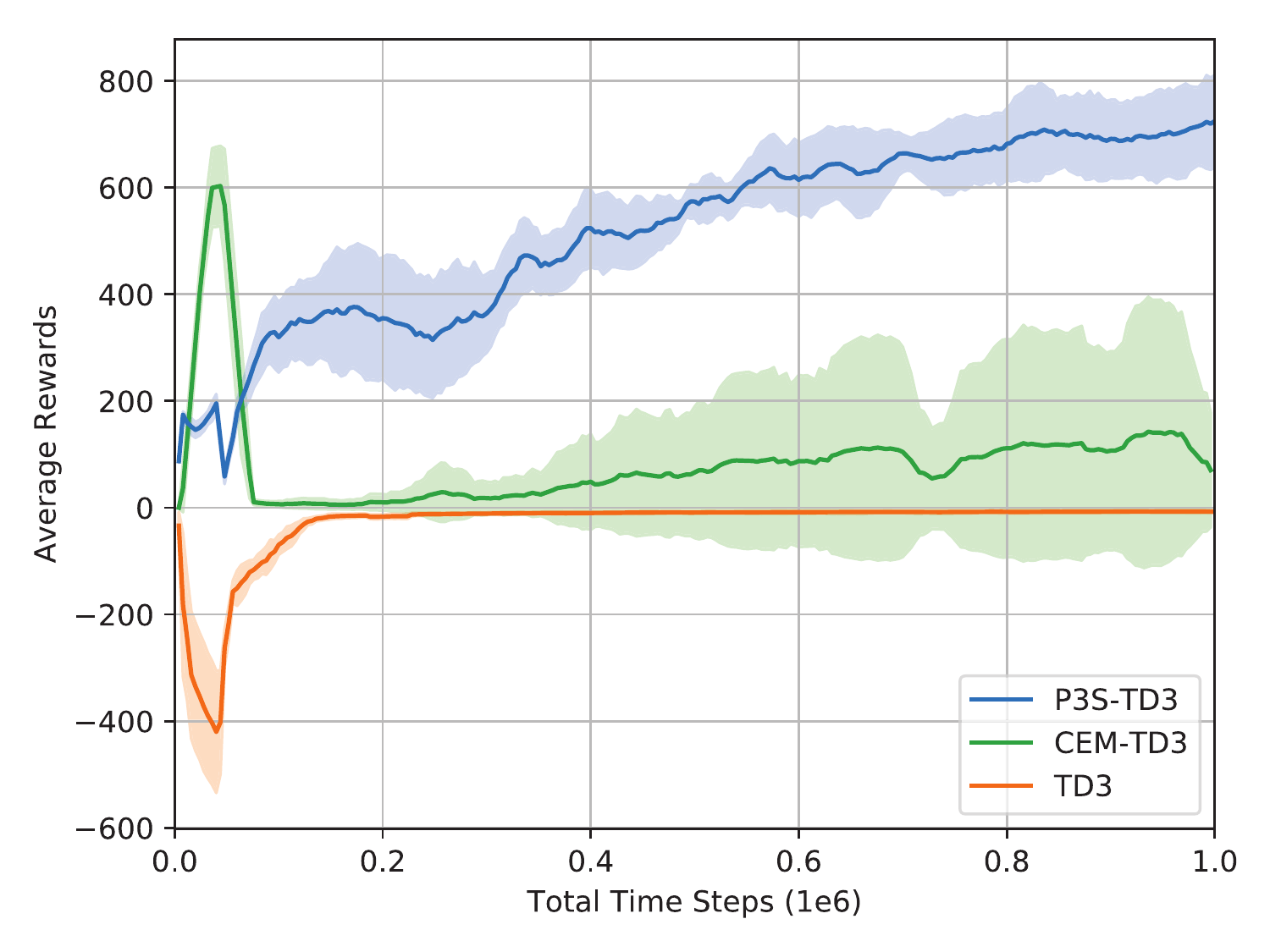}
			\caption{Delayed Ant-v1}
			\label{fig:ablation_comparison_with_cem_ant_delay20}
		\end{subfigure}
	\end{center}
	\caption{Performance of P3S-TD3, CEM-TD3, and TD3}
	\label{fig:ablation_comparison_with_cem}
\end{figure*}

\newpage

\section*{Appendix E. Comparison to Method using Center Policy}

In this section, we consider a variant of the proposed P3S-TD3 algorithm, named Center-TD3. This variant uses a center policy like in Distral (\cite{teh2017distral}) and Divide-and-Conquer (\cite{ghosh2018divide}). Center-TD3 has $N$ policies and a center policy $\pi^c$ in addition. 
The value and policy parameter update procedure  is the same as the original TD3 algorithm but the loss functions are newly defined. That is, the Q function loss is the same as the original TD3 algorithm, but the parameters of the $N$ policies are updated based on  the following loss:
\begin{equation} \label{eq:Center-TD3_policy_loss}
	\tilde{L}(\phi^i) = \hat{\mathbb{E}}_{s \sim \mathcal{D}}\left[-Q_{\theta^i_1}(s, \pi_{\phi^i}(s)) + \frac{\beta}{2} \left\Vert \pi_{\phi^i}(s) - \pi_{\phi^{c}}(s) \right\Vert_2^2 \right].
\end{equation}
The parameter loss function of Center-TD3 is obtained by replacing the best policy with the center policy in
the parameter loss function of P3S-TD3. The center policy is updated in every $M$ time steps to the direction of minimizing the following loss
\begin{equation} \label{eq:Center-TD3_center_policy_loss}
	\tilde{L}(\phi^c) = \hat{\mathbb{E}}_{s \sim \mathcal{D}}\left[ \frac{\beta}{2} \sum_{i=1}^N \left\Vert \pi_{\phi^i}(s) - \pi_{\phi^{c}}(s) \right\Vert_2^2 \right].
\end{equation}
Center-TD3 follows the spirit of  Distral (\cite{teh2017distral}) and Divide-and-Conquer (\cite{ghosh2018divide}) algorithms.

We tuned and selected  the hyper-parameters for Center-TD3 from the sets $\beta \in \{ 1, 10 \}$ and $M \in \{ 2, 20, 40, 100, 200, 500 \}$.  We then measured the performance of Center-TD3 by using the selected hyper-parameter $\beta = 1$, $M = 40$. Fig. \ref{fig:ablation_comparison_with_center_td3} in the next page and Table \ref{table:ablation_comparison_with_center_td3}  show the learning curves and the steady-state performance on MuJoCo and delayed MuJoCo environments, respectively.
It is seen that P3S-TD3 outperforms Center-TD3 on all environments except Delayed HalfCheetah-v1. 

\begin{table}[h]
	\caption{Steady state performance of P3S-TD3, Center-TD3, and TD3}
	\label{table:ablation_comparison_with_center_td3}
	\centering
	\begin{tabular}{lrrr} \toprule
		{\centering \bfseries Environment}  &\multicolumn{1}{c}{\bfseries P3S-TD3} &\multicolumn{1}{c}{\bfseries Center-TD3} &\multicolumn{1}{c}{\bfseries TD3}
		\\
		\midrule
		Hopper-v1		&{\bfseries 3705.92}	&3675.28	&2555.85 \\
		Walker2d-v1		&{\bfseries 4953.00} 	&4689.34	&4455.51 \\
		HalfCheetah-v1	&{\bfseries 11961.44}	&10620.84	&9695.92 \\
		Ant-v1			&{\bfseries 5339.66}	&4616.82	&3760.50 \\
		\midrule
		Delayed Hopper-v1		&{\bfseries 3355.53}	&3271.50	&1866.02 \\
		Delayed Walker2d-v1		&{\bfseries 4058.85}	&2878.85	&2016.48 \\
		Delayed HalfCheetah-v1	&5754.80	&{\bfseries 6047.47}	&3684.28 \\
		Delayed Ant-v1			&{\bfseries 724.50}		&688.96		&-7.45 \\
		\bottomrule
	\end{tabular}
\end{table}

\newpage
\begin{figure*}[!h]
	\begin{center}
		\begin{subfigure}[b]{0.4\textwidth}
			\includegraphics[width=\textwidth]{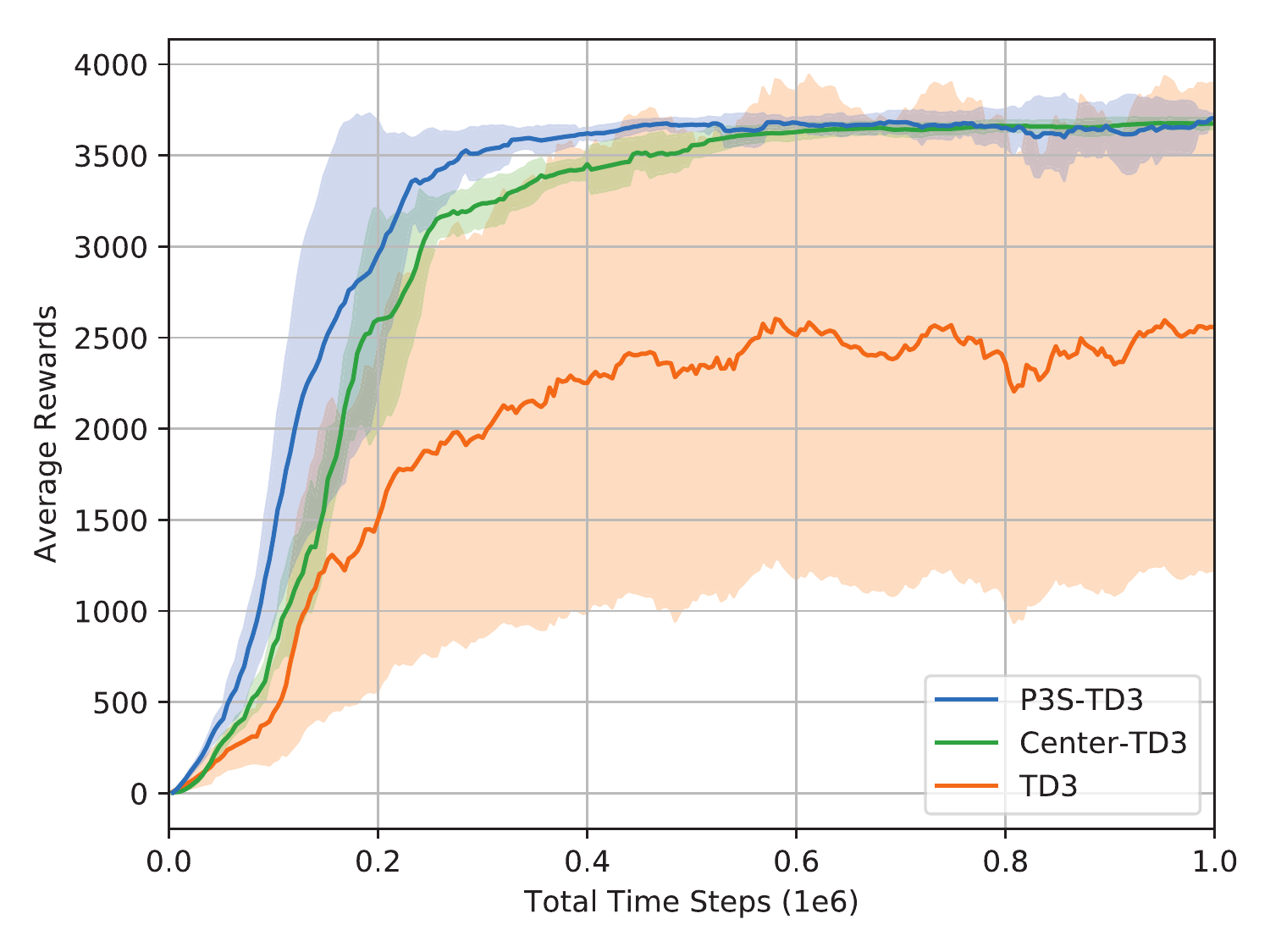}
			\caption{Hopper-v1}
			\label{fig:ablation_comparison_with_center_td3_hopper}
		\end{subfigure}
		\begin{subfigure}[b]{0.4\textwidth}
			\includegraphics[width=\textwidth]{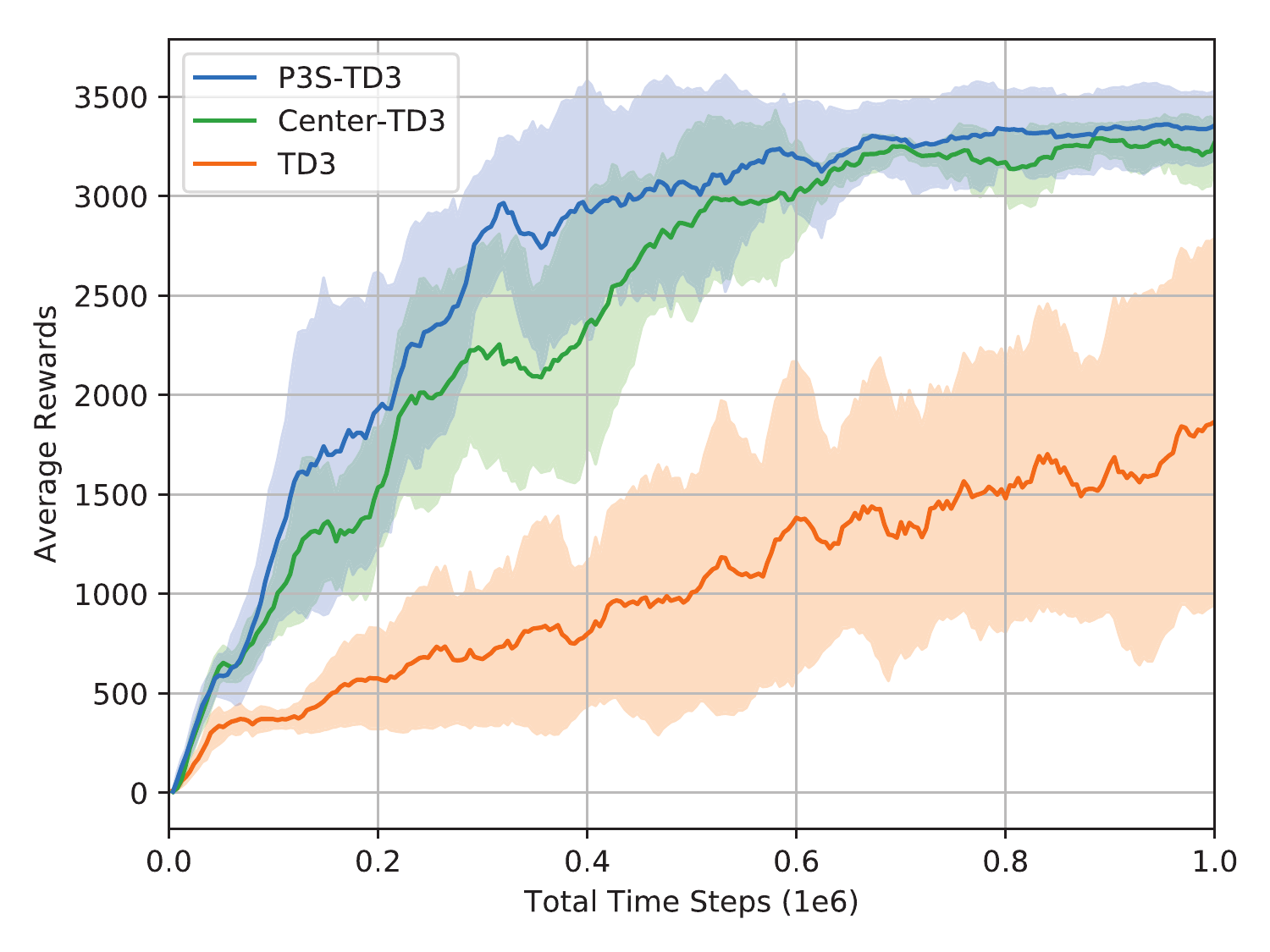}
			\caption{Delayed Hopper-v1}
			\label{fig:ablation_comparison_with_center_td3_hopper_delay20}
		\end{subfigure}
		\begin{subfigure}[b]{0.4\textwidth}
			\includegraphics[width=\textwidth]{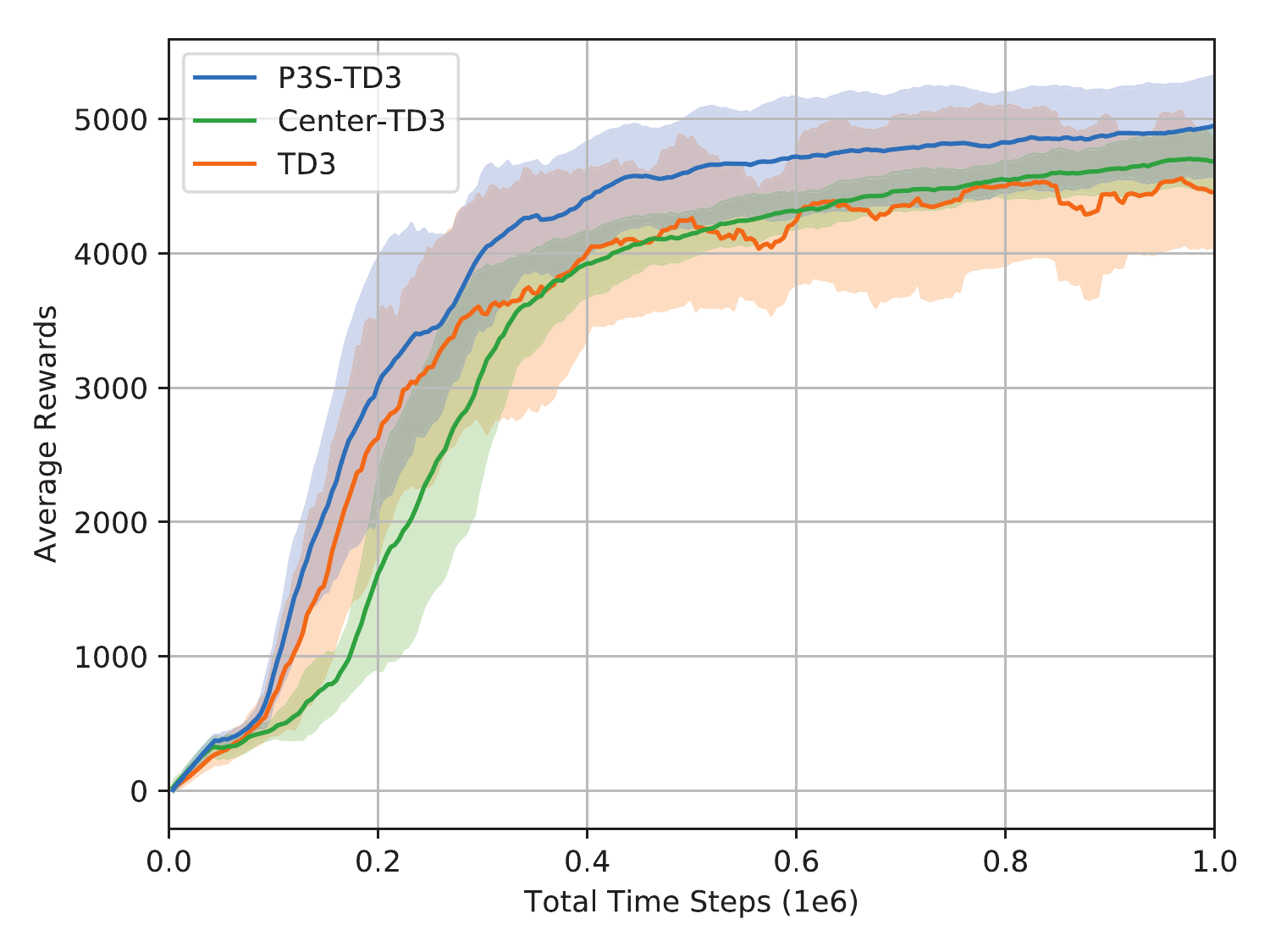}
			\caption{Walker2d-v1}
			\label{fig:ablation_comparison_with_center_td3_walker}
		\end{subfigure}
		\begin{subfigure}[b]{0.4\textwidth}
			\includegraphics[width=\textwidth]{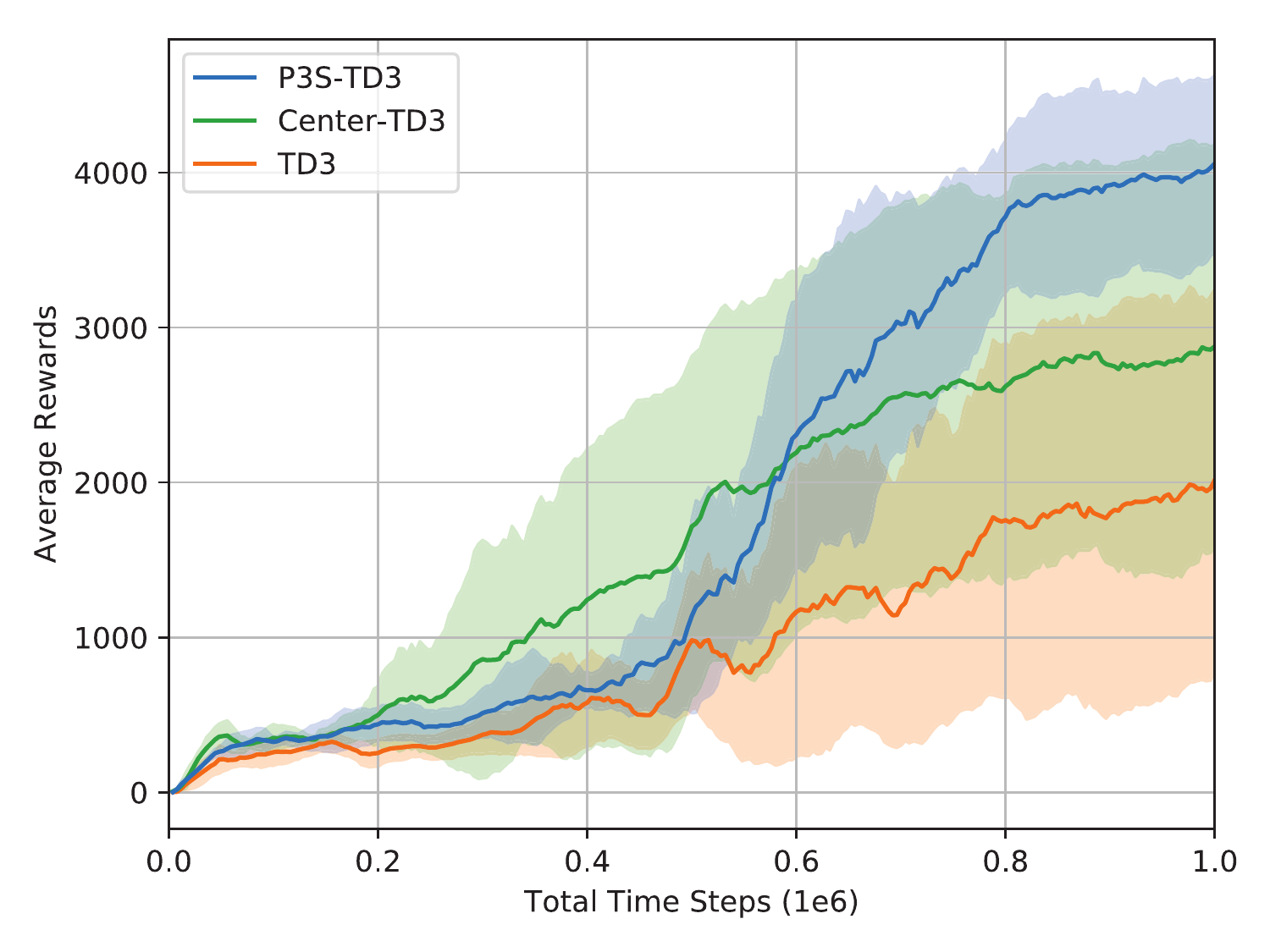}
			\caption{Delayed Walker2d-v1}
			\label{fig:ablation_comparison_with_center_td3_walker_delay20}
		\end{subfigure}
		\begin{subfigure}[b]{0.4\textwidth}
			\includegraphics[width=\textwidth]{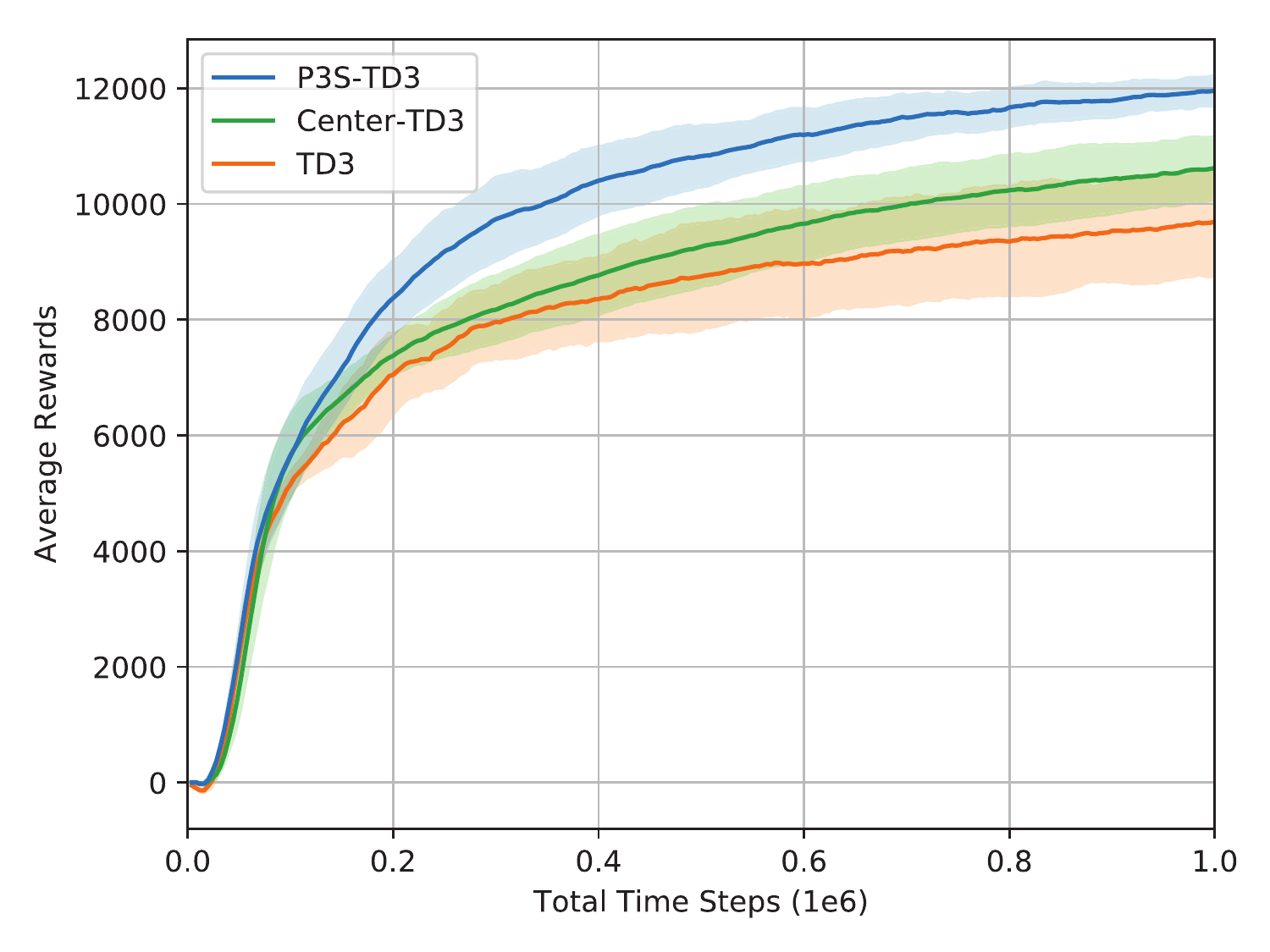}
			\caption{HalfCheetah-v1}
			\label{fig:ablation_comparison_with_center_td3_halfcheetah}
		\end{subfigure}
     	\begin{subfigure}[b]{0.4\textwidth}
			\includegraphics[width=\textwidth]{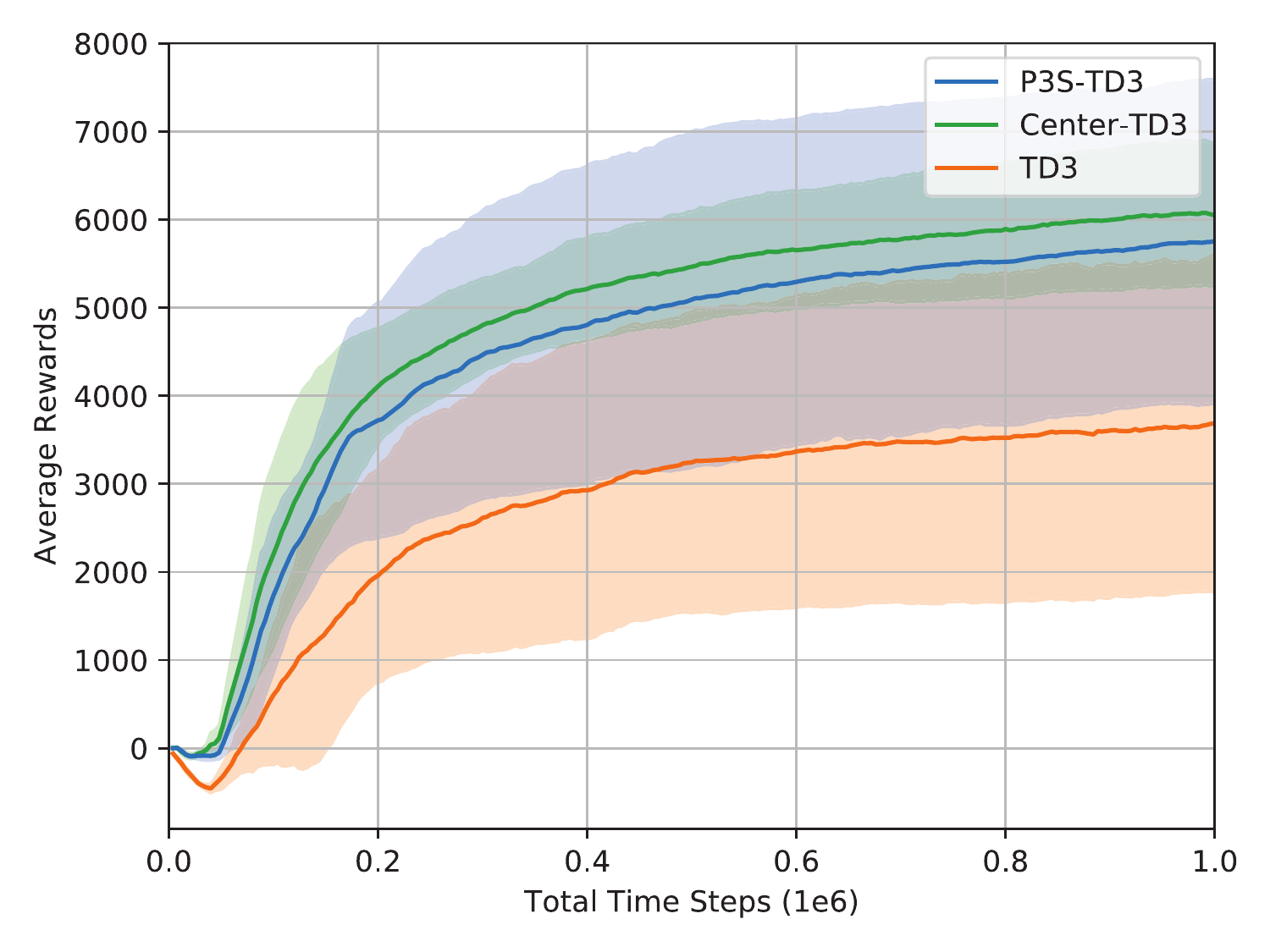}
			\caption{Delayed HalfCheetah-v1}
			\label{fig:ablation_comparison_with_center_td3_halfcheetah_delay20}
		\end{subfigure}
		\begin{subfigure}[b]{0.4\textwidth}
			\includegraphics[width=\textwidth]{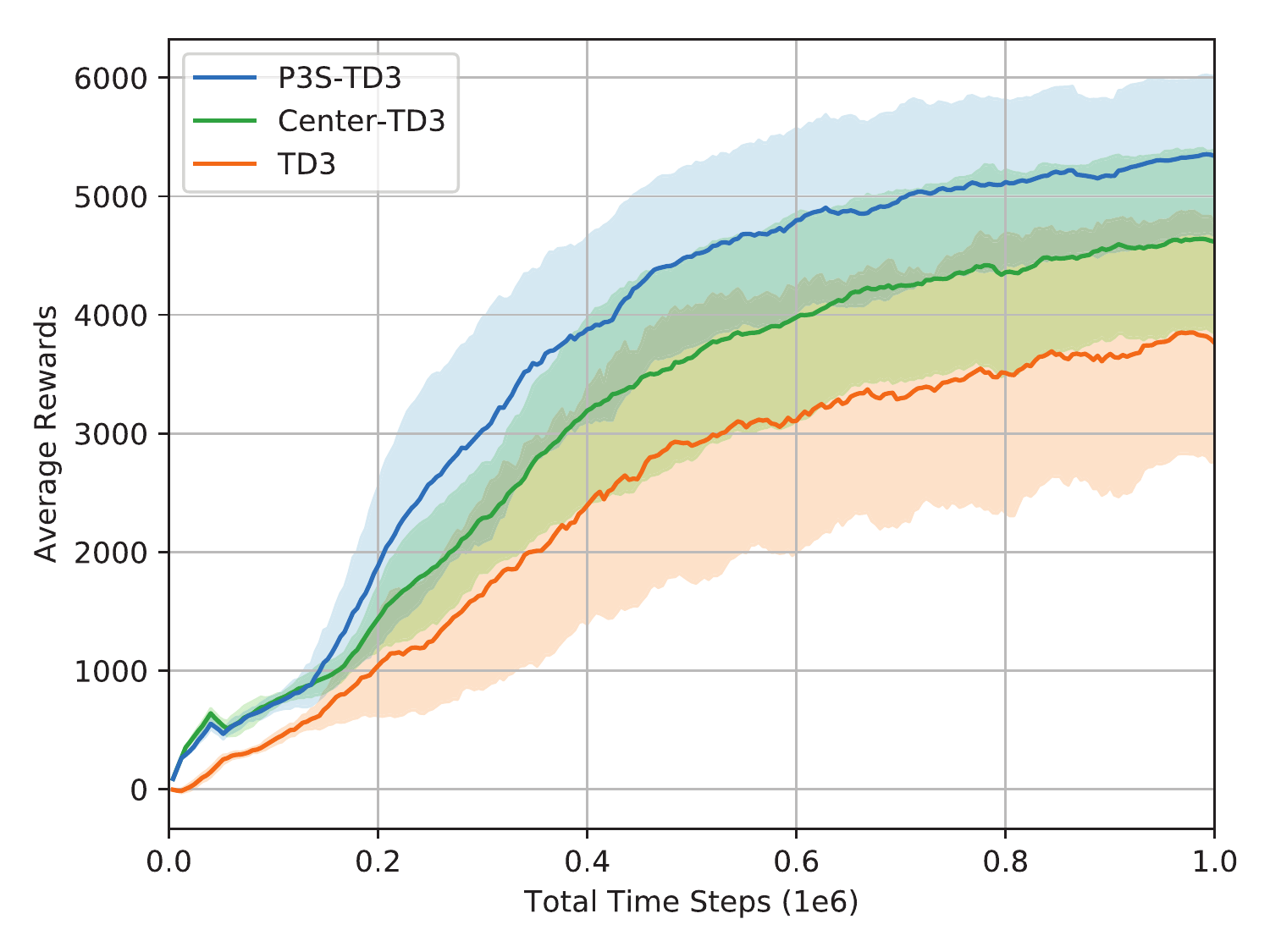}
			\caption{Ant-v1}
			\label{fig:ablation_comparison_with_center_td3_ant}
		\end{subfigure}
		\begin{subfigure}[b]{0.4\textwidth}
			\includegraphics[width=\textwidth]{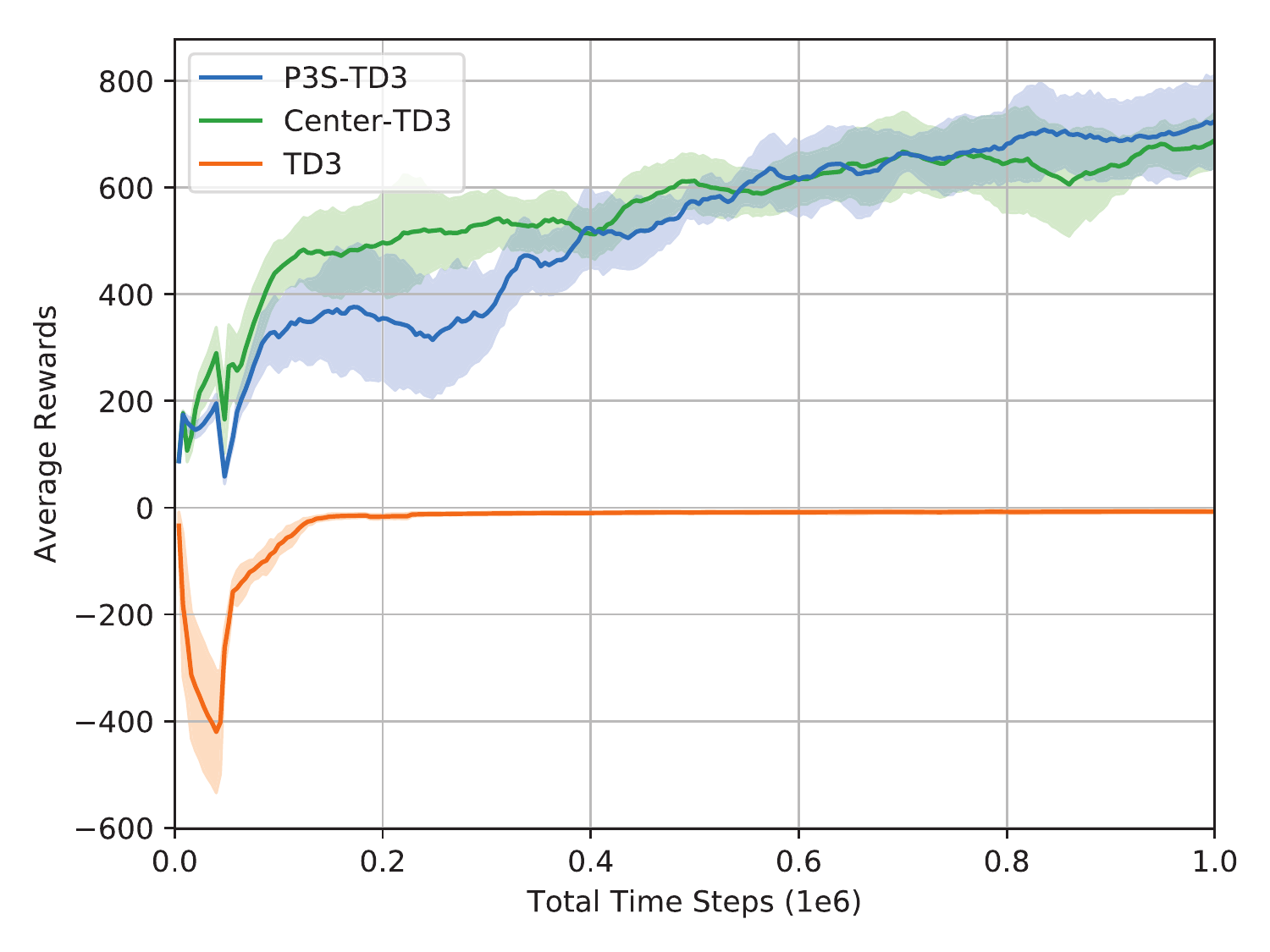}
			\caption{Delayed Ant-v1}
			\label{fig:ablation_comparison_with_center_td3_ant_delay20}
		\end{subfigure}
	\end{center}
	\caption{Performance of P3S-TD3, Center-TD3, and TD3}
	\label{fig:ablation_comparison_with_center_td3}
\end{figure*}

\newpage

\section*{Appendix F. Result on Swimmer-v1}

\cite{khadka2018evolution, pourchot2018cemrl} noticed that most deep RL methods suffer from a deceptive gradient problem on the Swimmer-v1 task, and most RL methods could not learn effectively on the Swimmer-v1 task. Unfortunately, we observed that the proposed P3S-TD3 algorithm could not solve the deceptive gradient problem in the Swimmer-v1 task either.
Fig. \ref{fig:performance_swimmer} shows the learning curves of P3S-TD3 and TD3 algorithm.  In \cite{khadka2018evolution}, the authors proposed an effective evolutionary algorithm named ERL to solve the deceptive gradient problem on the Swimmer-v1 task, yielding the good  performance on Swimmer-v1, as shown in Fig. \ref{fig:performance_swimmer}. P3S-TD3 falls short of the performance of ERL on Swimmer-v1.  However, it is known that CEM-TD3 discussed in Appendix D outperforms ERL on other tasks (\cite{pourchot2018cemrl}). Furthermore, we observed that P3S-TD3 outperforms CEM-TD3 on most environments in Appendix D.

\begin{figure*}[!h]
	\centering
	\includegraphics[width=0.9\textwidth]{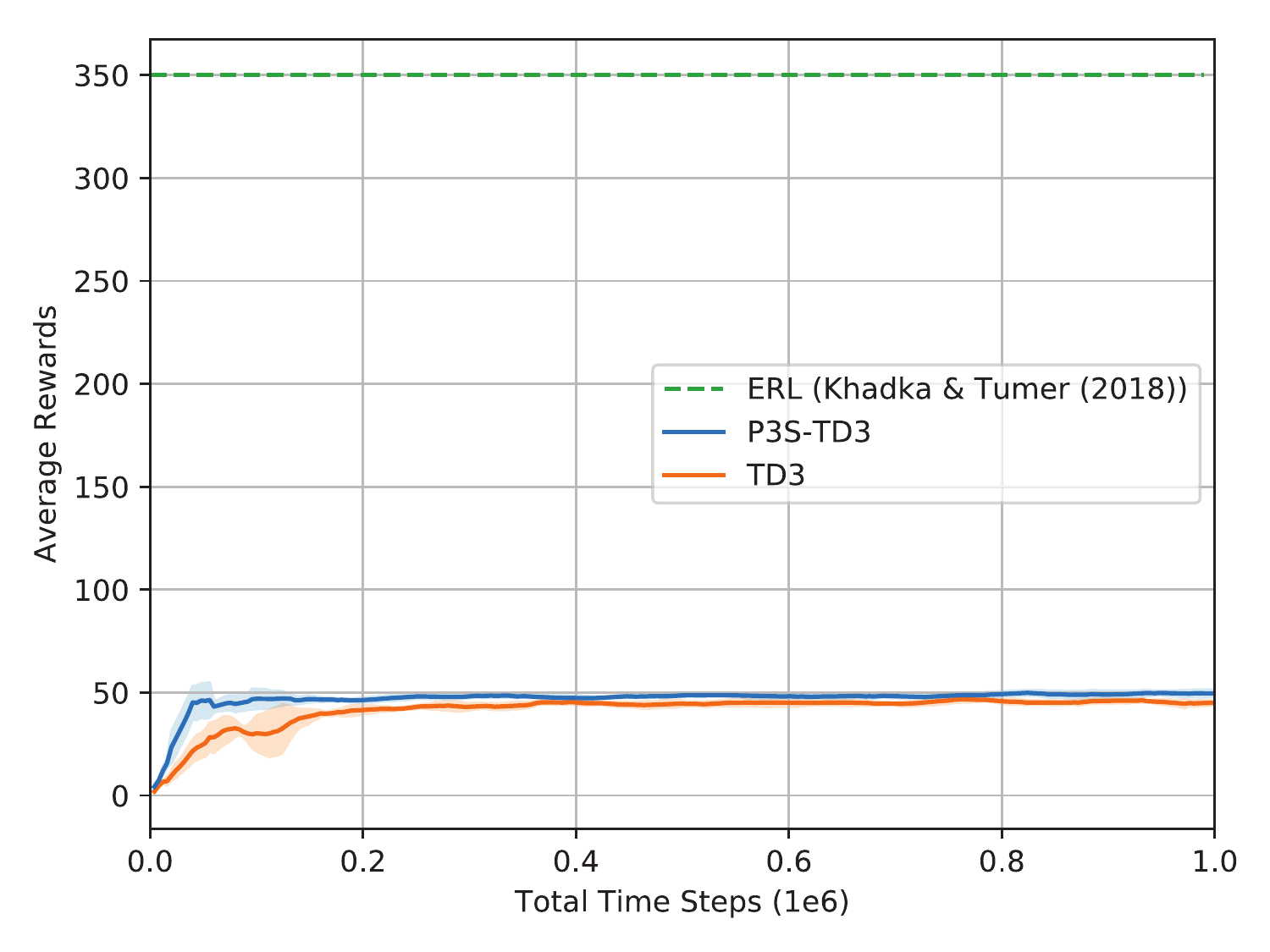}
	\caption{Performance on Swimmer-v1 of P3S-TD3 (blue), TD3 (orange), and the final performance of evolutionary RL (\cite{khadka2018evolution}, green dashed line). }
	\label{fig:performance_swimmer}
\end{figure*}

\newpage
\section*{Appendix G. The Twin Delayed Deep Deterministic Policy Gradient (TD3) Algorithm}
\label{sec:TD3}

The TD3 algorithm  is a current state-of-the-art off-policy algorithm and  is a variant of the deep deterministic policy gradient (DDPG) algorithm (\cite{lillicrap2015continuous}).
The TD3 algorithm tries to resolve two problems in typical actor-critic algorithms: 1) overestimation bias and 2) high variance in the approximation of the Q-function.
In order to reduce the bias, the TD3 considers two Q-functions and uses the minimum of the two Q-function values to compute the target value, while in order to reduce the variance in the gradient, the policy is updated less frequently than the Q-functions.
Specifically, let $Q_{\theta_1}$, $Q_{\theta_2}$ and $\pi_\phi$ be two current Q-functions and the current deterministic policy, respectively, and let $Q_{\theta_1'}$, $Q_{\theta_2'}$ and $\pi_{\phi'}$ be the target networks of $Q_{\theta_1}$, $Q_{\theta_2}$ and $\pi_\phi$, respectively.
The target networks are initialized by the same networks as the current networks.
At time step $t$, the TD3 algorithm takes an action $a_t$ with exploration noise $\epsilon$: $a_t = \pi_\phi(s_t) + \epsilon$, where $\epsilon$ is zero-mean Gaussian noise with variance $\sigma^2$, i.e., $\epsilon \sim \mathcal{N}(0, \sigma^2)$. Then, the environment returns reward $r_t$ and the state is switched to $s_{t+1}$. The TD3 algorithm stores the experience $(s_t, a_t, r_t, s_{t+1})$ at the experience replay buffer $\mathcal{D}$. After storing the experience, the Q-function  parameters $\theta_1$ and $\theta_2$ are updated  by gradient descent of the following loss functions:
\begin{equation} \label{eq:TD3_Q_loss}
L(\theta_j) = \hat{\mathbb{E}}_{(s,a,r,s') \sim \mathcal{D}}\left[ (y - Q_{\theta_j}(s,a))^2 \right], \quad j= 1, 2
\end{equation}
where $\hat{\mathbb{E}}_{(s,a,r,s') \sim \mathcal{D}}$ denotes the sample expectation with an uniform random mini-batch of size $B$ drawn from the replay buffer $\mathcal{D}$, and the target value $y$ is given by
\begin{equation}
y = r + \gamma \min_{j=1, 2} Q_{\theta_j'}(s', \pi_{\phi'}(s') + \epsilon ),   ~~~~~\epsilon \sim \text{clip}(\mathcal{N}(0, \tilde{\sigma}^2), -c, c).
\end{equation}
Here, for the computation of the target value, the minimum of the two target Q-functions is used to reduce the bias.
The procedure of action taking and gradient descent for $\theta_1$ and $\theta_2$ are repeated for $d$ times ($d=2$), and then the policy and target networks are updated.
The policy parameter $\phi$ is updated by gradient descent by minimizing the loss function for $\phi$:
\begin{equation} \label{eq:TD3_policy_loss}
L(\phi) = -\hat{\mathbb{E}}_{s \sim \mathcal{D}}\left[ Q_{\theta_1}(s, \pi_{\phi}(s)) \right],
\end{equation}
and the target network parameters $\theta_j'$ and $\phi'$ are updated as
\begin{equation}
\theta_j' \leftarrow (1-\tau) \theta_j' + \tau \theta_j \qquad \phi' \leftarrow (1-\tau) \phi' + \tau \phi.
\end{equation}
The networks are trained until the number of time steps reaches a predefined maximum.

\newpage

\section*{Appendix H. Pseudocode of the P3S-TD3 Algorithm}

\begin{algorithm}[!h]
	\caption{The Population-Guided Parallel Policy Search TD3 (P3S-TD3) Algorithm}\label{algorithm:P3S-TD3}
	\begin{algorithmic}[1]
		\REQUIRE  $N$: number of learners, $T_{initial}$: initial exploration time steps, $T$: maximum time steps, $M$ : the best-policy update period, $B$: size of mini-batch, $d$: update interval for policy and target networks.
		\STATE{Initialize $\phi^1 = \cdots = \phi^N = \phi^b$, $\theta^1_j= \cdots = \theta^N_j$, $j=1, 2$, randomly.}
		\STATE{Initialize  $\beta = 1$, $t = 0$}
		
		\WHILE{$t < T$}
		\STATE{$t \leftarrow t + 1$ (one time step)}
		\FOR{$i = 1, 2, \cdots, N$ in parallel}
		\IF{$t<T_{initial}$}
		\STATE{Take a uniform random action $a^i_t$ to environment copy $\mathcal{E}^i$ }
		\ELSE
		\STATE{Take an action $a^i_t = \pi^i\left( s^i_t \right) + \epsilon$, $\epsilon \sim \mathcal{N}(0, \sigma^2)$ to environment copy $\mathcal{E}^i$ }
		\ENDIF
		\STATE{Store experience $(s^i_t, a^i_t, r^i_t, s^i_{t+1})$ to the shared common experience replay $\mathcal{D}$}
		\ENDFOR
		
		\IF{$t<T_{initial}$}
		\STATE{{\bfseries continue} (i.e., go to the beginning of the while loop)}
		\ENDIF
		
		\FOR{$i = 1, 2, \cdots, N$ in parallel}
		\STATE{Sample a mini-batch $\mathcal{B}=\{ (s_{t_l}, a_{t_l}, r_{t_l}, s_{t_l+1}) \}_{l=1, \ldots, B}$ from $\mathcal{D}$}
		\STATE{Update $\theta_j^i$, $j=1, 2$, by gradient descent for minimizing $\tilde{L}(\theta_j^i)$ in (\ref{eq:P3S-TD3_Q_loss}) with $\mathcal{B}$}
		\IF{$t \equiv 0 (\text{mod } d)$}
		\STATE{Update $\phi^i$ by gradient descent for minimizing $\tilde{L}(\phi^i)$ in (\ref{eq:P3S-TD3_policy_loss}) with  $\mathcal{B}$}
		\STATE{Update the target networks: $(\theta^i_j)' \leftarrow (1-\tau) (\theta^i_j)' + \tau \theta^i_j$, $(\phi^i)' \leftarrow (1-\tau) (\phi^i)' + \tau \phi^i$}
		\ENDIF
		\ENDFOR

		
		\IF{$t \equiv 0 (\text{mod } M)$}
		\STATE{Select the best learner $b$}
		\STATE{Adapt $\beta$}
		\ENDIF
		
		\ENDWHILE
	\end{algorithmic}
\end{algorithm}

In P3S-TD3, the $i$-th  learner has its own parameters $\theta^i_1$, $\theta^i_2$, and $\phi^i$ for its two Q-functions and policy. Furthermore, it has $(\theta^i_1)'$, $(\theta^i_2)'$, and $(\phi^i)'$ which are the parameters of the corresponding target networks.
For the distance measure between two policies, we use the mean square difference, given by $
D(\pi(s), \tilde{\pi}(s)) = \frac{1}{2} \left\Vert \pi(s) - \tilde{\pi}(s) \right\Vert_2^2$.
For the $i$-th learner, as in TD3, the parameters $\theta^i_j$, $j=1,2$ are updated every time step by minimizing
\begin{equation} \label{eq:P3S-TD3_Q_loss}
\tilde{L}(\theta^i_j) = \hat{\mathbb{E}}_{(s,a,r,s') \sim \mathcal{D}}\left[ (y - Q_{\theta^i_j}(s,a))^2 \right]
\end{equation}
where $y = r + \gamma \min_{j=1, 2} Q_{(\theta^i_j)'}(s', \pi_{(\phi^i)'}(s') + \epsilon )$, $\epsilon \sim \text{clip}(\mathcal{N}(0, \tilde{\sigma}^2), -c, c)$.  The parameter $\phi^i$ is updated every $d$ time steps  by minimizing the following augmented loss function:
\begin{equation} \label{eq:P3S-TD3_policy_loss}
\tilde{L}(\phi^i) = \hat{\mathbb{E}}_{s \sim \mathcal{D}}\left[-Q_{\theta^i_1}(s, \pi_{\phi^i}(s)) + \1_{\{i \neq b\}} \frac{\beta}{2} \left\Vert \pi_{\phi^i}(s) - \pi_{\phi^{b}}(s) \right\Vert_2^2 \right].
\end{equation}
For the first $T_{initial}$ time steps for initial exploration we use a  random policy   and do not update all policies over the initial exploration period.
With these loss functions, the reference policy, and the initial exploration policy, all procedure is the same as the general P3S procedure described in Section 3. The pseudocode of the P3S-TD3 algorithm is shown above.

\newpage

\section*{Appendix I. Hyper-parameters}

{\bfseries TD3} The networks for two Q-functions and the policy have 2 hidden layers. The first and second layers have sizes 400 and 300, respectively. The non-linearity function of the hidden layers is ReLU, and the activation functions of the last layers of the Q-functions and the policy are linear and hyperbolic tangent, respectively. We used the Adam optimizer with learning rate $10^{-3}$, discount factor $\gamma = 0.99$, target smoothing factor $\tau = 5 \times 10^{-3}$, the period $d=2$ for updating the policy. The experience replay buffer size is $10^6$, and the mini-batch size $B$ is $100$. The standard deviation for exploration noise $\sigma$ and target noise $\tilde{\sigma}$ are $0.1$ and $0.2$, respectively, and the noise clipping factor $c$ is $0.5$.

{\bfseries P3S-TD3}
In addition to the hyper-parameters for TD3, we used $N=4$ learners, the  period $M=250$ of updating the best policy and $\beta$,  the number of recent episodes $E_r = 10$ for determining the best learner $b$.
The parameter $d_{min}$ was chosen among $\{ 0.02, 0.05 \}$ for each environment, and the chosen parameter was 0.02 (Walker2d-v1, Ant-v1, Delayed Hopper-v1, Delayed Walker2d-v1, Delayed HalfCheetah-v1), and 0.05 (Hopper-v1, HalfCheetah-v1, Delayed Ant-v1). The parameter $\rho$ for the exploration range was 2 for all environments.
The time steps  for initial exploration $T_{initial}$ was set as 250 for Hopper-v1 and Walker2d-v1 and as 2500 for HalfCheetah-v1 and Ant-v1.

{\bfseries Re-TD3}
The period $M'$ was chosen among $\{ 2000, 5000, 10000 \}$ (MuJoCo environments) and $\{ 10000, 20000, 50000 \}$ (Delayed MuJoCo environments) by tuning for each environment. The chosen period $M'$ was 2000 (Ant-v1), 5000 (Hopper-v1, Walker2d-v1, HalfCheetah-v1), 10000 (Delayed HalfCheetah-v1, Delayed Ant-v1), and 20000 (Delayed Hopper-v1, Delayed Walker2d-v1).

\end{document}